\newtheorem{assumption}{Assumption}
\newtheorem{proposition}{Proposition}
\newtheorem{corollary}{Corollary}
\newtheorem{lemma}{Lemma}
\newtheorem{theorem}{Theorem}
\newtheorem*{preuve}{Proof}
\def\bX{\mathbf{X}} 
\def\bZ{\mathbf{Z}}
\def\bV{\mathbf{V}}
\def\bS{\mathbf{S}}
\def\bH{\mathbf{H}}
\def\bW{\mathbf{W}}
\def\bb{\mathbf{b}}  
\newcommand{\R}{I\!\!R}
\newcommand{\N}{I\!\!N}
\newcommand{\Pp}{\mathbbm{P}}
\newcommand{\E}{\mathbbm{E}}
\newcommand{\1}{\mathbbm{1}}
\title{WTNN: Weibull-Tailored Neural Networks for survival analysis}
\author{Gabrielle Rives, Olivier Lopez, Nicolas Bousquet}
\author{ 
  \textsc{Gabrielle Rives} \\[2mm]
  \small SIMMT\\
  \small 11 Route des Docks, 78000 Versailles, France \\
  \small \& LPSM, Sorbonne Université \\
  \small 4 place Jussieu, 75005 Paris, France \\
  \small \texttt{gabrielle.rives@intradef.gouv.fr}
  \and \\
  \textsc{Olivier Lopez} \\[2mm]
  \small CREST Laboratory CNRS UMR 9194\\
  \small  Groupe des Écoles Nationales d'Économie et Statistique \\ \small Ecole Polytechnique, Institut Polytechnique de Paris, \\
  \small 5 avenue Henry Le Chatelier 91120 Palaiseau, France \\
  \small \texttt{olivier.lopez@ensae.fr} \\
  \and
  \textsc{Nicolas Bousquet} \\[2mm]
  \small EDF R{\&}D -- SINCLAIR Laboratory \\
  \small 7 Bd Gaspard Monge, 91120 \small Palaiseau, France \\
  \small \& LPSM, Sorbonne Université \\
  \small 4 place Jussieu, 75005 Paris, France \\
\small \texttt{nicolas.bousquet@sorbonne-universite.fr}
}
\date{}
\begin{document}

\maketitle

\begin{abstract}

The Weibull distribution is a commonly adopted choice for modeling the survival of systems subject to maintenance over time. When only proxy indicators and censored observations are available, it becomes necessary to express the distribution’s parameters as functions of time-dependent covariates. Deep neural networks provide the flexibility needed to learn complex relationships between these covariates and operational lifetime, thereby extending the capabilities of traditional regression-based models. Motivated by the analysis of a fleet of military vehicles operating in highly variable and demanding environments, as well as by the limitations observed in existing methodologies, this paper introduces WTNN, a new neural network-based modeling framework specifically designed for Weibull survival studies. The proposed architecture is specifically designed to incorporate qualitative prior knowledge regarding the most influential covariates, in a manner consistent with the shape and structure of the Weibull distribution. Through extensive numerical experiments, we show that this approach can be reliably trained on proxy and right-censored data, and is capable of producing robust and interpretable survival predictions that can improve existing approaches. \\

\noindent{\bf Key words:} Survival analysis; Weibull distribution; Time-dependent covariates; Neural networks; Isotonic survival; Consistency.
\end{abstract}

\section{Introduction}

In complex operational contexts, such as military logistics, complex construction projects or emergency responses, selecting the most suitable vehicles from a heterogeneous fleet is a critical task \cite{abbass2006identifying,shetty2008priority}. Missions may involve multiple tasks and operate under varying environmental and operational conditions. Ensuring mission success requires not only predictive maintenance at the subsystem level (e.g. based on estimating the Remaining Useful Life (RUL) of components \cite{Lesobre,DinhDuc}), but also a robust, system-level survival analysis that accounts for the full vehicle history and operational variability \cite{scott2022systematic,zhang2022marine}. Several approaches aim to model the predictive reliability of complex systems as vehicles by analyzing the behavior of individual subsystems. Among them, Health and Usage Monitoring Systems (HUMS; see \cite{al2019health} for a recent review in the military field) stand out for collecting real-time data on critical components. While valuable for early fault detection, HUMS often focus on isolated elements and require broader integration to support a system-level perspective \cite{de2016health,kappas2017hums,ranasinghe2022advances}. Other methods include statistical models, physics-based simulations, and digital twins which can represent the system holistically \cite{kapteyn2021predictive,zhong2023overview,wasnik2024conceptual}. However, applying these techniques to military vehicles (or other vehicles used for critical missions) is particularly challenging due to highly variable mission profiles, extreme operational conditions, and limited or classified data. These constraints make it difficult to extrapolate full-system reliability solely from subsystem behavior, highlighting the need to combine local monitoring with system-level analysis \cite{oszczypala2022reliability,dalzochio2023predictive,oszczypala2024copula}.

For this reason, it generally appears necessary to link the capacity to accomplish the mission to a notion of vehicle survival, itself linked to a statistical modeling  of the vehicle "lifespan" $T$, or an operational proxy of it--namely its temporal capacity to meet mission requirements. 
Incorporating regression processes with respect to time-dependent covariates $\bX_t$, which reflect past mission environments, downtime periods, and maintenance events, such statistical models must also account for the fact that many observations of $T$ are naturally censored. When a vehicle successfully completes a mission defined for a given duration $t_0$, it likely could have completed missions of longer durations as well. As a result, the amount of statistical information provided by system-level (e.g., vehicle-level) data is generally limited, making it necessary to restrict the dimensionality of the models. This argues in favor of representing $T|\bX_t$ 
using a 
parametric statistical model.

To mitigate the impact of this censoring, one may  define 
$T$ not as the actual service life but rather as the \emph{interval between successive downtime periods}. Each downtime necessarily involves some form of maintenance, yet it may also include a phase of immobilization (e.g., storage, relocation of the vehicle by alternative means, etc. ; see Figure \ref{fig:illustration_durations} for an illustration). In this context, the uncensored values of $T$ correspond to the durations of successful missions, whereas any mission terminated before completion, or still underway, naturally constitutes right‑censoring in the statistical distribution of $T$. \\

\begin{figure}[hbtp]
\centering
\includegraphics[scale=0.6]{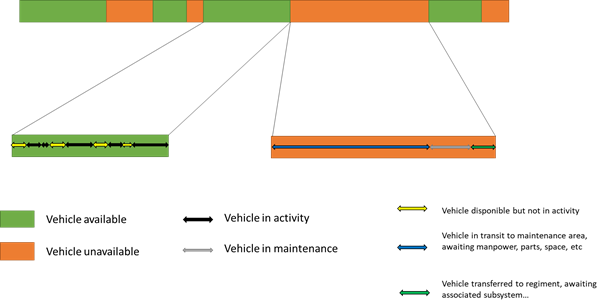}
\caption{Typical distribution of vehicle availability and unavailability durations.}
\label{fig:illustration_durations}
\end{figure}

Among the many models commonly used in survival analysis, the popular Weibull model  offers both versatility and interpretability, making it well-suited to describe different types of system-level degradation. In contrast, the Cox model is for instance limited by its proportional hazards assumption, which implies that covariate effects are time-independent, and frailty extensions are needed for its application to such lifetime data \cite{lin2002modeling}. 
Besides, for several years, the Weibull model
\begin{eqnarray}
T|\bX_t & \sim & {\cal{W}}\left(\eta(\theta,\bX_t),\beta(\theta,\bX_t)\right), \label{hyp:weibull}
\end{eqnarray}
where the scale and shape parameters $(\eta,\beta)$ are $\theta-$parametric functions of $\bX_t$, 
has received particular attention in the context of modeling covariates that characterize the operating environment of complex systems. 

By defining $T$ as the interval between successive downtime events, the standard justifications for choosing a Weibull distribution still apply. Even indirectly, $T$ embodies an underlying operational‑fatigue process. Besides, the mission durations can be assimilated to observations used for Accelerated Lifetime Testing (ALT), for which the Weibull choice is usual. Finally, the semi‑parametric structure of model (\ref{hyp:weibull}) should provide the flexibility required to generate distributions that can adapt to each vehicle over time.

One of the most popular approaches for modeling assumed Weibull-type behavior based on covariates is that of virtual age models \cite{doyen2019generic}, notably including Age Reduction After Repair (ARA) models \cite{ARA2004,liu2020unobserved,brissaud2022reliability}. While models based on non-homogeneous Poisson processes, which model failures as a Poisson process with time-varying intensity, may appear a priori to be well suited for the context considered, they remain somewhat limited. Specifically, they would estimate maintenance global effects and provide similar scale parameters for all similar vehicles of a given fleet, while an operational goal would be to discriminate between them for a given mission. 
Moreover, the covariate regression functions proposed in \cite{VirtualAge2020}, applied to the conditional power law failure intensity, are assumed to be linear, which restricts the expressiveness of the covariates $\bX_t$. Furthermore,  virtual age models require assumptions regarding the nature of the virtual age resulting from maintenance, which appear difficult to validate in cases where maintenance actions are potentially diverse and involve vehicles that may experience highly heterogeneous operating conditions.

This article therefore seeks to explore 
settings where the number of covariates may be large, going beyond ARA models with covariates \cite{VirtualAge2020}, functional Weibull models \cite{perot2017functional} 
and ALT approaches. To learn potentially complex patterns linking $\bX_t$ and $T$ and to allow for the integration of latent dependencies, recent approaches broadly reviewed by \cite{wiegrebe2024deep} 
have attempted to represent lifetime model parameters using neural networks (NN). Among them, recurrent NN architectures chosen for Weibull parameters $\{\eta(\theta,\bX_t),\beta(\theta,\bX_t)\}$ by \cite{aydin2017using} (LSTM) or \cite{WTTERNNW} require large amounts of complete (non-missing) data, and follow-up studies have reported degraded performance across various datasets. The lack of interpretability and reproducibility of this approach and its derivatives (e.g. \cite{borst2023introducing,xu2022method}) has also been criticized \cite{ndao2025improving},  particularly for their operational limitations.

NN-based approaches are generally favored in the presence of large sample sizes and have thus been widely explored in the medical domain, which typically relies on the Cox model (e.g., \cite{katzman2018deepsurv,roblin2020use}). However, these approaches do not account for the specificities of industrial maintenance, such as intervention events. Two recent adaptations of the approach proposed by \cite{katzman2018deepsurv}, namely DeepWeiSurv \cite{bennis2020estimation} and DPWTE \cite{bennis2021dpwte}, incorporate mixtures of Weibull distributions and feedforward NN. These models currently seem to stand among the most advanced candidates for survival modeling in complex systems, alongside the WTTE-RNN model \cite{WTTERNNW}. 
Preliminary studies, however, have shown that these models remain largely difficult to interpret and adapt to other contexts (data) than the one they were developed for. In particular, they do not fully addresses the challenges posed by heterogeneous vehicle usage and extended periods of unavailability. 

Hence, the goal of this work is to design regression functions for the Weibull parameters $(\eta, \beta)$ that are capable of capturing the informational richness conveyed by the covariates $\bX_t$, while simultaneously improving interpretability and maintaining the model's ability to learn from realistically sized datasets (on the order of a few thousand samples), including right-censored observations. The intended use of such functions is motivated by the ranking of similar military (or similar) vehicles in order to carry out a future mission, based on past data. In this context, we propose two main contributions.

The first one deals with the design  of a neural network called WTNN, with a shared architecture between $\eta$ and $\beta$, 
which can incorporate  prior  knowledge about the link between certain dimensions of $\bX_t$ and the survival, consistently with the geometry of the Weibull distribution. For example, the hazard rate may be a monotonic function of some covariates, a property commonly viewed as enhancing interpretability and promoting acceptance of deep learning models in industrial contexts \cite{runje2023constrained}. 
Additionally, this section presents theoretical results showing that the model can deliver estimators with strong consistency properties even in the presence of censoring. These findings offer objective guidance for architectural design, enabling the construction of variants of a standard MLP (\emph{Multi-Layer Perceptron})—for instance, by selecting the number of layers or imposing geometric constraints.

The second contribution consists in a comparative evaluation of WTNN with most relevant existing models previously discussed, using 
real-world dataset that motivated this study. We show that our approach can improve other methods in both accuracy and robustness, across various evaluation metrics related to survival prediction. 

More specifically, this paper is structured as follows. The type of datasets motivating this study is introduced in Section~\ref{sec:data-notations}, which also presents the main notations. The modeling choices and theoretical arguments behind WTNN  and the training procedure are detailed in Section~\ref{sec:modeling}. 
Section~\ref{sec:experiments} is dedicated to both simulated and real-world numerical experiments, starting with the selection of evaluation metrics and a description of the proposed simulations. Finally, the discussion section concludes this work by summarizing the main contributions and outlining future research directions. Several appendices and a Supplementary Material provide technical details aimed at enhancing both the interpretability and reproducibility of the proposed approach.

\section{Notations and data}\label{sec:data-notations}

Through a statistical description and interpretation of data from real-world applications motivating this research, we formally introduce the notations and key underlying concepts.

\paragraph{Mission durations.}\label{par:failure} 
Let us consider a fleet of $N$ vehicles with identical intrinsic characteristics. Throughout this article, a vehicle is denoted by the index $i \in \{1, \ldots, N\}$ and is assumed to have attempted $n(i)$ missions, each described by a duration and a set of observed covariates. Within the previously introduced context, an observed duration of the mission $j \in \{1, \ldots, n(i)\}$ performed by vehicle $i$ is defined as a realisation of 
\begin{eqnarray*}
Z^{(i)}_j & = & \min\left(T^{(i)}_j,Y^{(i)}_j \right)
\end{eqnarray*}
where $Y^{(i)}_j$ is  the time allotted to the mission and $T^{(i)}_j$ the mission's actual duration—i.e., the interval between two periods of vehicle unavailability (see Figure \ref{fig:illustration_durations}). Defining
\begin{eqnarray*}
\delta^{(i)}_j & = & \mathbbm{1}_{\left\{T^{(i)}_j \leq Y^{(i)}_j\right\}},
\end{eqnarray*}
the mission is deemed successful if \(\delta^{(i)}_j=1\). We assume that  $T^{(i)}_j$ can be modeled as a random variable defined on the probability space $\{\mathbb{R}^+, \mathcal{B}(\mathbb{R}^+), \mathbb{P}\}$, where $\mathcal{B}(\cdot)$ denotes the Borel $\sigma$-algebra and $\mathbb{P}$ is a probability measure absolutely continuous with respect to the Lebesgue measure. Hence the $\{T^{(i)}_j\}_j$ are non-homogeneous stochastic processes, and  if $\delta^{(i)}_j=0$ then
\(Y^{(i)}_j\) constitutes a right‑censoring time for an observation of \(T^{(i)}_j\). In this paper, for all $(i,j)$ the two processes $\{T^{(i)}_j\}_j$ and  $\{Y^{(i)}_j\}_j$ are assumed to be independent. Let us denote 
\begin{eqnarray*}
n_u  =  \sum\limits_{i=1}^N \sum\limits_{j=1}^{n(i)} \delta^{(i)}_j, & \ & 
n_c  =  \sum\limits_{i=1}^N \sum\limits_{j=1}^{n(i)} (1-\delta^{(i)}_j), \ \ \ \ \ \ n  =  n_u + n_c = \sum\limits_{i=1}^N n(i)
\end{eqnarray*}
the numbers of uncensored, censored and total observations (ie., the total number of missions across all vehicles), respectively.

Following the study context introduced earlier, $\mathbb{P}^{(i)}_j$ is chosen to be a Weibull distribution conditioned on a set of covariates $\mathbf{X}^{(i)}_j$ defined in a space $\chi \subset \mathbb{R}^d$, with $d < \infty$, summarizing the mission context. The conditioned Weibull distribution (\ref{hyp:weibull}) assumes that the parameter vector $\theta$ for two regression models $\eta(\theta,\bX_t),\beta(\theta,\bX_t)$ is common to all situations encountered for a given fleet, and belongs to a finite-dimensional open space $\Theta\in\R^q$ with $q<\infty$. In Section \ref{sec:modeling}, these models will be chosen as neural networks with joint backbone, whose architecture is broadly described by $\theta$.

\paragraph{Covariate description.}\label{par:covariates} 
The raw covariates considered in this study were primarily selected by domain experts and are mainly derived from on-board vehicle records collected over several years. These covariates allow for the monitoring of various operational metrics, including distance traveled, engine operating hours, frequency and duration of maintenance activities, availability and downtime periods (separated between maintenance durations and immobilizations), as well as areas of deployment. This dataset is enriched with temporal descriptors of the mission environment—such as climatic conditions or terrain characteristics—as well as indicators of vehicle usage beyond transportation (e.g., communication functions). Including ordinal and nominal variables (the latter being classically transformed using one-hot encoding), these raw covariates are formatted and structured as follows:
each row of the final covariate dataset $\bX_t=(X_{1,t},\ldots,X_{d,t})$ with $d<\infty$ corresponds to a phase in the history of a vehicle, along with identifying information (license plate, version). A phase consists of two parts: a period of availability (labeled A), followed by a period of unavailability (labeled B). Each period has its own covariates. An extraction of $\bX_t$  is provided on Table \ref{tab:extraction_covariates} while the overall information about the dataset is summarized in Table \ref{tab:data.description}. For our study, three differents fleets of vehicles are considered, without missing value. However, zero values are both present and significant (mostly between 10\% and 60\%) among the covariates. 
For example, a vehicle may present a zero kilometer consumption during a specific phase. These covariates can therefore be described as {zero-inflated} and {semi-continuous}. \\

\begin{table}[hbtp]
\centering
\caption{Extraction of a vehicles dataset. Covariantes related to vehicle availability are labelled (A), while covariates related to vehicle unavailability are labeled (B).}
\label{tab:extraction_covariates}
\begin{tabular}{ p{1.2 cm}  p{0.6cm}  p{2.2cm}  p{2.2cm}  p{2.2cm}  p{2.2cm}  p{2.2cm}}
 \hline
 Vehicle & Age & Availability  duration (A)  & Km consumption (A) & Utilization Area (A) & Immobilization duration (B) & Maintenances duration (B)  \\ 
 \hline
 1 & 8 & 2 & 25 & A & 0 & 20 \\ 
 1 & 17 & 7 & 0 & A & 2 & 200 \\
 & \\ 
 2 & 6 & 5 & 155 & B & 4 & 53 \\
 2 & 8 & 0 & 12 & C & 2 & 250 \\
 2 & 15 & 5 & 47 & D & 2 & 27 \\ [1ex] 
& \\ 
  3 & 25 & 8 & 10 & A & 3 & 5 \\ 
 3 & 26 & 1 & 0 & A & 0 & 125 \\
 3 & 29 & 2 & 5 & B & 1 & 53 \\
 3 & 31 & 1 & 250 & C & 1 & 31 \\
 3 & 34 & 2 & 123 & D & 1 & 27 \\ [1ex] 
 \hline
\end{tabular}
\end{table}

\begin{table}[hbtp]
\centering
\caption{Data description for three examples of real military vehicle fleets. Up (\emph{resp.} down) arrows indicate the covariates for which survival increases or decreases, all other things being equal (See Section \ref{sec:consistency_survival}).}
\label{tab:data.description}
\begin{tabular}{lll}
\hline
{\bf Symbol} & {\bf Meaning} & {\bf Value(s)} \\
\hline
$N$ & Fleet size & $\left\{5,000; 600; 400\right\}$ \\
$n$ & Total number of missions across all vehicles & $\left\{{95,000} \ ; \ 8,900 \  ; \ 1,700 \right\}$ \\
$d$ & Input dimension (total number of covariates) & 37 \\
$d_n$ & Number of numerical covariates & 29 \\
$d_c$ & Number of categorical covariates & 8 \\
& with numbers of categories per covariate & 5-10 \\
$\delta$ & Right-censoring indicator & \\
& with overall censoring rate & $\sim$ 5\%  \\
& \\
$\bX_t$ & Covariates collected through time, including: & \\
& \\
& 
{\it (related to vehicle availability) } & \\
& utilization area & Cat.\\
& kilometers and engine hours & Num. $\downarrow$\\
& age & Num \\
&  availability duration & Num  \\
& \\
& {\it (related to vehicle unavailability) } & \\
&  maintenance duration & Num. \\
&  number of part remplacements & Num.\\
& unavailability duration & Num  $\uparrow$ \\
& \\
\hline
\end{tabular}
\end{table}

\clearpage
\section{Modeling and training}\label{sec:modeling}

\subsection{Likelihood-based loss function}

Building on Equation~(\ref{hyp:weibull}), we assume that a parametric model governed by $\theta$ can be constructed to relate the Weibull parameters to the covariates of the problem. Such a regression model is introduced in Section~\ref{sec:nn-weib}. 

Given a dataset representing a fleet of identical vehicles, the statistical interpretation of the data proposed in the previous section implies that the probability density function of the time between unavailabilities for vehicle $i \in \{1,\ldots,N\}$ during mission $j \in \{1,\ldots,n(i)\}$ can be written, as a function of $\theta$, as follows:
\begin{eqnarray*}
\\
f^{(i)}_j(\theta|{\cal{F}}^{(i)}_{j-1})  =  \left[\frac{\beta\left(\theta,\bX^{(i)}_j\right)}{\eta\left(\theta,\bX^{(i)}_j\right)}\left(\frac{z^{(i)}_j}{\eta\left(\theta,\bX^{(i)}_j\right)}\right)^{\beta(\theta,\bX^{(i)}_j)-1}\right]^{\delta^{(i)}_{j}} 
\exp\left(-\left\{\frac{z^{(i)}_j}{\eta\left(\theta,\bX^{(i)}_j\right)}\right\}^{\beta\left(\theta,\bX^{(i)}_j\right)}\right). \\
 & & 
\end{eqnarray*}
Here, $\mathcal{F}^{(i)}_{j-1}$ denotes the $\sigma$-algebra generated by the trajectory of the observed process $\{z^{(i)}_t, \bX^{(i)}_t\}_{t \leq j-1}$, i.e., it encodes the history of vehicle $i$ up to mission $j$. Let us assume that each vehicle, after a maintenance, is considered As Good As New (AGAN = perfect maintenance ; \cite{de2017imperfect}). Then,  for all $i$, the realization of $z^{(i)}_t$ conditional on $\mathcal{F}^{(i)}_{t-1}$ is assumed to be independent on $z^{(i)}_{t-1}$. Therefore the likelihood function associated with the observed history of vehicle $i$ can be expressed as
\begin{eqnarray*}
\ell^{(i)}(\theta) & = & \prod\limits_{j =1}^{n(i)}  f^{(i)}_j(\theta \mid \mathcal{F}^{(i)}_{j-1}),
\end{eqnarray*}
where we set $\mathcal{F}^{(i)}_0 = \emptyset$. Finally, assuming that the histories of vehicles $i$ and $j$ within the same fleet are independent for all pairs $(i,j)$, the full likelihood of the $n$ observations corresponding to the fleet of $N$ vehicles becomes
\begin{eqnarray}
\ell_n(\theta) & = & \prod\limits_{i=1}^N \ell^{(i)}(\theta). \label{eq:complete_likelihood}
\end{eqnarray}
In a classical statistical learning framework, we can then derive the likelihood-based (or equivalently, the cross-entropy) loss function $L_n(\theta) = -\log \ell_n(\theta)$ to be minimized  and estimate the maximum likelihood (ML) parameter by
\begin{eqnarray}
\hat{\theta}_n & \in & \arg\min\limits_{\theta \in \Theta} L_n(\theta). \label{eq:learning1}
\end{eqnarray}
While the maximum likelihood estimation of classical Weibull models is fully understood \cite{rinne2008weibull}, 
replacing \((\eta,\beta)\) by continuous, complex regression functions in \(\theta\) 
is 
likely to produce a cost function \(L_n(\theta)\) that is non-convex neither quasi-convex (resp.\ a log-likelihood function \(\ell_n(\theta)\) that is not (log-)concave / quasi-concave on \(\Theta\)) \cite{choromanska2015loss,kawaguchi2016deep}. It is thus likely that  \(\hat\theta_n\) be not unique, except if the architecture is designed to avoid such pitfalls. For instance, choosing ReLU neural networks to model \(\beta(\theta,\bX)\) and \(\eta(\theta,\bX)\) yields piecewise convexity in \(\theta\) for these networks \cite{rister2017piecewise}, which can be enhanced via weight decay approaches \cite{milne2019piecewise} or extended by employing exponential weights together with \(L^1\) regularization \cite{bengio2005convex,zhou2022parameter}, yet without guaranteeing full convexity of \(L_n(\theta)\). In this context, it is therefore appropriate to seek architectural choices that regularize the loss landscape. 
A first approach is to ensure coherence between the architectures of \(\beta(\theta,\bX)\) and \(\eta(\theta,\bX)\) and the geometric properties of the Weibull distribution, in terms of the relationship between these parameters and survival behavior. Such an approach is proposed in \S\ref{sec:nn-weib}, on a basis of a classical MLP architecture. A study of the theoretical properties of  Weibull-tailored network is summarized in the same section, which leads to improve this base architecture.  The pursuit of regularizations \(L(\theta)\) that notably prevent overfitting, by replacing (\ref{eq:learning1}) with
\begin{eqnarray}
\hat{\theta}_n & \in & \arg\min_{\theta \in \Theta} \, \left\{L_n(\theta) + L(\theta)\right\},
\label{eq:learning2}
\end{eqnarray}
is studied in \S\ref{sec:learning}.

\subsection{Weibull-tailored neural networks }\label{sec:nn-weib}

The architecture of  \(\beta(\theta,\bX)\) and \(\eta(\theta,\bX)\) must satisfy several requirements: correlation (both parameters must evolve based on the same covariates), reasonable dimensionality (due to the limited amount of industrial data), positivity for \(\eta(\theta,\bX)\) and bounded positive outputs for \(\beta(\theta,\bX)\), and consistency with the qualitative characteristics of the Weibull survival behavior. We propose below an approach that aims to reconcile all these aspects.

\subsubsection{Isotonic consistency with Weibull survival}\label{sec:consistency_survival}

To the light of information provided in Section \ref{sec:data-notations}, decompose $\bX=(\bX^{o},\bX^{nom})$ where $\bX^{o}$  (\textit{resp.} $\bX^{nom}$) is the subset of ordinal (\textit{resp.} nominal) input variables. It means that there exists a partial ordering $\succeq$ between the elements of $\bX^{o}$:
\begin{eqnarray*}
\forall (\bX^{(i)},\bX^{(j)})\in \bX^{o}, \ \ \ \bX^{(i)} \succeq \bX^{(j)} & \Leftrightarrow & x^{(i)}_k \geq  x^{(j)}_k, \ \ \ \forall k\in\{1,\ldots,d\}
\end{eqnarray*}
Consider furthermore the following assumption about the decreasing monotonicity of vehicle survival at mission $j$
\begin{eqnarray}
p(t|\bX) := \Pp\left(T_{j}>t | \bX_j,\theta\right) & = & \exp\left(-\left\{\frac{t}{\eta\left(\theta,\bX_j\right)}\right\}^{\beta\left(\theta,\bX_j\right)}\right).\label{eq:survival}
\end{eqnarray}

\begin{assumption}\label{assumption:risk_monotonicity}
 Possibly at the price of a reparametrization, there exists a subset of covariates $X^{o_a}\subseteq X^o$ such that the survival (\ref{eq:survival}) is a decreasing function of $X^{o_a}$, all other covariates  being unchanged. 
 \end{assumption}

\noindent This assumption can be translated as follows:  $\forall \bX_i=(\bX^{o_a}_i,\bX^{o_b},\bX^{nom})$,   then 
\begin{eqnarray}
\bX^{o_a}_2 \succeq \bX^{o_a}_1 & \Rightarrow & p(t|\bX_1) \geq p(t|\bX_2) \ \ \ \forall t\geq 0. \label{hypo:monotonicity_1}
\end{eqnarray}
To be more succinct, we can rewrite Assumption  \ref{assumption:risk_monotonicity} 
as follows: 
\begin{eqnarray}
\forall t\geq 0, \ \ \ & & p(t|\bX)\downarrow \bX^{o_a}. \label{hypo:monotonicity_2}
\end{eqnarray}
For instance, it seems reasonable to assume that, all other things being equal, poorer-quality equipment, a larger number of previous repairs  or harsher environmental conditions will lead to failure more quickly. Next proposition provides a useful condition to translate this isotonic assumption to a constraint on the joint architecture of $\beta(\theta,\bX)$ and \(\eta(\theta,\bX)\). 

\begin{proposition}\label{prop:constraints}
Denote $x\mapsto \Gamma(x)$ the gamma function defined on $\mathbb{R}^+_*$ and $\xi=\arg\min_{\mathbb{R}^+} \Gamma(x)$ ($\xi\simeq 3/2$). Denote $\beta_0=(\xi-1)^{-1}$. 
Under Assumption \ref{assumption:risk_monotonicity}, then, $\forall \theta\in\Theta$,
\begin{eqnarray}
\bX\mapsto\beta(\theta,\bX)\uparrow \bX^{o_a}, & & \\ 
\bX\mapsto\eta(\theta,\bX) \downarrow \bX^{o_a} & & \ \ \text{conditionally to } \ \ \beta(\theta,\bX)>\beta_0.
\end{eqnarray}
\end{proposition}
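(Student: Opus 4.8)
The plan is to translate the survival monotonicity in Assumption~\ref{assumption:risk_monotonicity} directly into sign conditions on the partial derivatives (or, since $\bX^{o_a}$ is ordinal, into finite-difference inequalities) of $\eta$ and $\beta$, by differentiating the closed form~(\ref{eq:survival}). Write $p(t\mid\bX) = \exp\!\left(-(t/\eta)^{\beta}\right)$ with $\eta = \eta(\theta,\bX)$, $\beta = \beta(\theta,\bX)$, and set $u = t/\eta > 0$. Then $\log p = -u^{\beta}$, and for a component $X_k$ of $\bX^{o_a}$ (treating the map as differentiable, or replacing $\partial$ by discrete increments along $\succeq$ at the end),
\begin{equation*}
\frac{\partial}{\partial X_k}\log p(t\mid\bX) \;=\; -\,u^{\beta}\left(\log u\,\frac{\partial \beta}{\partial X_k} \;-\; \frac{\beta}{\eta}\,\frac{\partial \eta}{\partial X_k}\right).
\end{equation*}
Assumption~\ref{assumption:risk_monotonicity} says this is $\le 0$ for all $t\ge 0$, i.e.\ for all $u>0$; since $u^{\beta}>0$, the requirement is that $g(u) := \log u\,\partial_k\beta - (\beta/\eta)\,\partial_k\eta \ge 0$ for every $u>0$.

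The main obstacle is that $g(u)$ must have a constant sign over the whole range $u\in(0,\infty)$, yet $\log u$ changes sign at $u=1$ and is unbounded on both sides. The key observation is that requiring $g(u)\ge 0$ for \emph{all} $t\ge0$ is a very strong "all-$t$" condition, whereas what Assumption~\ref{assumption:risk_monotonicity} actually constrains — and what Proposition~\ref{prop:constraints} records — is weaker: the conclusion for $\eta$ is stated only \emph{conditionally on} $\beta>\beta_0$. So the right route is not to demand $g\ge0$ pointwise in $u$ but to use a global, $t$-free comparison. I would instead argue through the mean of the Weibull law: $\Expect[T\mid\bX] = \eta\,\Gamma(1+1/\beta)$. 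If survival is stochastically decreasing in $X^{o_a}$ in the sense of~(\ref{hypo:monotonicity_1}) (first-order stochastic dominance), then the mean is decreasing in $X^{o_a}$, hence $\bX\mapsto \eta(\theta,\bX)\,\Gamma\!\left(1 + 1/\beta(\theta,\bX)\right)$ is decreasing along $\succeq$ on $\bX^{o_a}$. Likewise, monotone survival forces monotone hazard ordering; comparing the limiting behaviour as $t\to\infty$ (where $p(t\mid\bX) \sim \exp(-(t/\eta)^\beta)$ is governed by $\beta$) yields that the larger-$X^{o_a}$ configuration cannot have the strictly smaller $\beta$ — this gives $\bX\mapsto\beta(\theta,\bX)\uparrow\bX^{o_a}$, the first conclusion.

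It then remains to extract the $\eta$ statement from monotonicity of the product $\eta\,\Gamma(1+1/\beta)$ together with $\beta\uparrow\bX^{o_a}$. Here the function $\beta\mapsto\Gamma(1+1/\beta)$ enters: with $x = 1 + 1/\beta$, we are looking at $\Gamma(x)$, and $\Gamma$ is decreasing on $(0,\xi)$ and increasing on $(\xi,\infty)$ where $\xi=\arg\min_{\R^+_*}\Gamma\simeq1.4616$. The condition $\beta>\beta_0=(\xi-1)^{-1}$ is exactly $1+1/\beta < \xi$, i.e.\ it places $x$ on the \emph{decreasing} branch of $\Gamma$. On that branch, since $\beta\uparrow\bX^{o_a}$ we get $x = 1+1/\beta \downarrow \bX^{o_a}$, hence $\Gamma(1+1/\beta)\uparrow\bX^{o_a}$; combined with the product $\eta\,\Gamma(1+1/\beta)$ being $\downarrow\bX^{o_a}$, we conclude $\eta\downarrow\bX^{o_a}$, which is the second assertion, valid precisely under the stated condition $\beta(\theta,\bX)>\beta_0$. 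The finite-difference (ordinal) version follows by applying the same monotone-in-a-product argument to increments along the partial order $\succeq$ rather than to derivatives, so no smoothness of the network in $\bX$ is actually needed. I expect the delicate point in writing this cleanly to be justifying the $\beta$-ordering rigorously from~(\ref{hypo:monotonicity_1}) without over-claiming: one must be careful that first-order stochastic dominance of Weibull laws does \emph{not} by itself pin down an ordering of shape parameters unless one also controls the scales, so the argument should treat the pair $(\eta,\beta)$ jointly via the tail/extreme-$t$ comparison rather than parameter-by-parameter.
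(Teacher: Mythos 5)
Your argument matches the paper's proof essentially step for step: the shape monotonicity $\beta\uparrow\bX^{o_a}$ is obtained from the survival ordering at large $t$ (the paper takes $t=\alpha\,\eta(\theta,\bX)$ with $\alpha>1$, you take $t\to\infty$ --- the same tail comparison), and the scale monotonicity is deduced from the decrease of the mean $\eta\,\Gamma(1+1/\beta)$ together with the observation that $\beta>\beta_0$ places $1+1/\beta$ on the decreasing branch of $\Gamma$, so that $\Gamma(1+1/\beta)\uparrow\bX^{o_a}$ and hence $\eta\downarrow\bX^{o_a}$. This is exactly the paper's route, so the proposal is correct and no further comparison is needed.
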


\noindent Hence,  a two-heads neural network backbone  with $L$
layers for  $(\eta(\theta,\bX),\beta(\theta,\bX))$, 
 such that the resulting predictive Weibull model be in accordance with Assumption \ref{assumption:risk_monotonicity}, could be designed as precised beneath, using a constraint of positivity on weights (as for monotonic neural networks)  and a  branching mechanism associated to a masking activation. This positivity constraint can be implemented in various ways (e.g., \cite{runje2023constrained} or see the unconstrained alternative \cite{wehenkel2019unconstrained}), but probably the simplest approach is either defining these weights as exponential functions, or adding a specific loss \cite{gupta2019incorporatemonotonicitydeepnetworks}. 
 Besides, two additional constraint  apply on $(\eta(\theta,\bX),\beta(\theta,\bX))$:
\begin{eqnarray}
\eta(\theta,\bX) & \geq & \eta_{\min} \ > \ 0, \label{constraint-eta-1} \\
\beta(\theta,\bX) & \in & [\beta_{\min},\beta_{\max}] \label{constraint-beta-1}
\end{eqnarray}
where, typically, $\beta_{\min}=1$ to assume aging and $\beta_{\max}<6$ to avoid unplausible, over-accelerated aging \cite{lawless2011statistical}. While the strictly positive lower bound $\eta_{\min}$ will appear necessary to get consistency properties, this constraint appear mild in practice. Indeed, as $\Pp(T<\eta(\theta,\bX)|\bX)=1-\exp(-1)\simeq 0.63$, by invariance of scale transformation \cite{dasgupta2014characterization} it is very unlikely in practice that $\eta(\hat{\theta}_n,\bX)<1$ while assuming  all observed values $z^{(i)}_j\geq 1$, provided the model is not irrelevant. (The fleets motivating the study are made up of vehicles that have all been driven a minimum amount of time between two downtime periods.) Therefore it is recommended to normalize the duration observations such a way and set by defect $\eta_{\min}=1$.

\paragraph{Base architecture description.} We parameterize a base feedforward neural network (MLP) of $L$ hidden layers with a set of weight matrices and bias vectors  $(\mathbf{W}_{\ell},b_{\ell})$, where 
${\ell}\in\{1,\ldots,L\}$, such that $\mathbf{W}_{\ell}\in\mathbb{R}^{m_l \times q_{\ell}}$ and $b_{\ell}\in \mathbb{R}^{m_{\ell}}$, where $m_{\ell}$ is the number of neurons in the ${\ell}$th hidden layer and $q_{\ell}$ is the input dimension of the signal for the weighting operation. In the following, we denote $q$ the dimension of $X^{o_a}$ and $q'=d-q$ the dimension of $\{X^{o_b},X^{nom}\}$. Specific notations for weights and bias related to the branching and masking mechanisms are precised further. Assuming to set activation functions and values for the $m_{\ell}$, then $\theta$ is the set of all unknown architecture parameters:
\begin{eqnarray*}
\theta & = & \left\{(\mathbf{W}_{\ell},b_{\ell})_{\ell\in\{1,\ldots,L\}}\right\}.
\end{eqnarray*}
Starting from a baseline multilayer perceptron (MLP) formalization, consider the following specification. The resulting architecture is summarized on Figure \ref{fig:joint_architecture}. 
\begin{enumerate}
\item {\bf First layer.} Given $m_1$ neurons, apply the following transformation
\[
\mathbf{H}_1 = \phi(\mathbf{W}_1^{o_a} \bX^{o_a} + \mathbf{W}_1^{o_b} \bX^{o_b} + \mathbf{W}_1^n \bX^n + b_1),
\]

where \( \mathbf{W}_1^{o_a} \in \mathbb{R}^{m_1 \times q} \) and $\mathbf{W}_1^{o_a}\geq 0$,  \( \mathbf{W}_1^{o_b} \in \mathbb{R}^{m_1 \times q'} \), \( \mathbf{W}_1^n \in \mathbb{R}^{m_1 \times q'} \), and \( b_1 \in \mathbb{R}^{m_1} \). The {activation function} \( \phi \) must ensure monotonicity for the ordinal input (e.g., \texttt{ReLU} or \texttt{softplus}).

\item {\bf Hidden layers.} Subsequent layers combine information through standard feedforward connections, represented abstractly as:
\begin{equation}
\mathbf{H}_{{\ell}} = \sigma(\mathbf{W}_{{\ell}} \mathbf{H}_{{\ell}-1} + b_{{\ell}}), \quad {\ell} = 2, \dots, L-2,
\end{equation}
where  \( \mathbf{W}_{\ell} \in \mathbb{R}^{m_{\ell} \times m_{{\ell}-1}} \) and $\mathbf{W}_{\ell}\geq 0$, \( b_{\ell} \in \mathbb{R}^{m_{\ell}} \), and \( \sigma \) is a monotonic activation function. 
\item {\bf Branching at \( H_{L-1} \).} The final hidden layer \( \mathbf{H}_{L-1} \) splits into two branches:
\begin{itemize}
    \item {\it Shape paramet\it er \( \beta(\theta,\bX) \)}: define
    \begin{eqnarray*}
    \tilde{\beta}(\theta,\bX) & = &  f_{\beta}(\mathbf{W}_{\beta} \mathbf{H}_{L-1} + b_{\beta}), \quad \mathbf{W}_{\beta} \in \mathbb{R}^{1 \times m_{L-1}}, \quad \mathbf{W}_{\beta}\geq 0 \quad b_{\beta} \in \mathbb{R}, 
    \end{eqnarray*} 
    where \( f_{\beta} \) ensures non-negativity (e.g., \texttt{softplus}).  Then, monotonic pooling and final activation are applied to respect Constraint (\ref{constraint-beta-1}):
    $$
    \beta(\theta,\bX) = \beta_{\min} + (\beta_{\max} - \beta_{\min})  \mbox{pooling}(\tilde{\beta}(\theta,\bX)).
    $$
    
    \item {\it Scale parameter \( \eta(\theta,\bX) \)}: define
    \begin{eqnarray}
    \tilde{\eta}(\theta,\bX) & = &  \eta_{min} + \sigma_{\eta}(\mathbf{W}_{\eta} H_{L-1} + b_{\eta}), \quad \mathbf{W}_{\eta} \in \mathbb{R}^{1 \times m_{L-1}}, \quad b_{\eta} \in \mathbb{R},
    \label{eq:eta_predictor}
    \end{eqnarray}
    where \( \sigma_{\eta} \) ensures non-negativity (e.g., \texttt{softplus}), such that Constraint (\ref{constraint-eta-1}) be respected.
    Pooling is applied to \( \tilde{\eta}(\theta,\bX) \) to produce the final scale parameter:
    $$
    {\eta}(\theta,\bX) = \mbox{pooling}(\tilde{\eta}(\theta,\bX)).
    $$
    Here, \( \mathbf{W}_{\eta} \) is conditionally constrained as follows: denoting $\mathbf{V}$ a set of free weights and 
    $$
    \mbox{mask}(\bX) = {\mathds{1}}_{\{ \beta(\theta,\bX) > \beta_0\}},
    $$
    then
    $$
    \mathbf{W}_{\eta} = \mbox{mask}(\bX) (-|\mathbf{V}|) + (1 - \mbox{mask}(\bX))  V.
    $$
\end{itemize}
\end{enumerate}

\begin{figure}[hbtp]
    \centering
    \begin{tikzpicture}[node distance=1.5cm]
        \node[draw, rectangle] (Input) {Input \( \mathbf{X} = (\mathbf{X}^{o_a},\mathbf{X}^{o_b}, \mathbf{X}^{nom}) \)};
        
        \node[draw, rectangle, below of=Input, yshift=-1.5cm] (Shared) {Shared Network \( \mathbf{H}_1 \), \( ..., \), \( \mathbf{H}_{L-1} \)};
        \node[below of=Shared, node distance=0.8cm] (SharedInfo) {\( L \) layers, \( m_1, ..., m_{L-1} \) neurons};
        
        \node[draw, circle, below of=Shared, yshift=-2cm] (BetaWeights) {\( \mathbf{W}_{\beta}, b_{\beta} \)};
        \node[draw, circle, below of=BetaWeights, yshift=-1.5cm] (BetaPooling) {Pooling};
        \node[draw, rectangle, below of=BetaPooling, yshift=-1.5cm] (BetaActivation) {Final Activation: \( \beta(\theta,\mathbf{x}) \)};
        
        \node[draw, circle, right of=BetaWeights, xshift=4cm] (EtaWeights) {\( \mathbf{W}_{\eta}, b_{\eta} \)};
        \node[draw, circle, below of=EtaWeights, yshift=-1.5cm] (EtaPooling) {Pooling};
        \node[draw, rectangle, below of=EtaPooling, yshift=-1.5cm] (EtaActivation) {Final Activation: \( \eta(\theta,\mathbf{x}) \)};
        
        \draw[->] (Input) -- (Shared);
        \draw[->] (Shared) -- (BetaWeights);
        \draw[->] (BetaWeights) -- (BetaPooling);
        \draw[->] (BetaPooling) -- (BetaActivation);
        \draw[->] (Shared) -- (EtaWeights);
        \draw[->] (EtaWeights) -- (EtaPooling);
        \draw[->] (EtaPooling) -- (EtaActivation);
        
        \draw[->, dashed] (BetaActivation) -- (EtaWeights) node[midway, above, sloped] {\( \beta(\theta,\mathbf{x}) \)};
    \end{tikzpicture}
    \caption{Neural network architecture.}
    \label{fig:joint_architecture}
\end{figure}
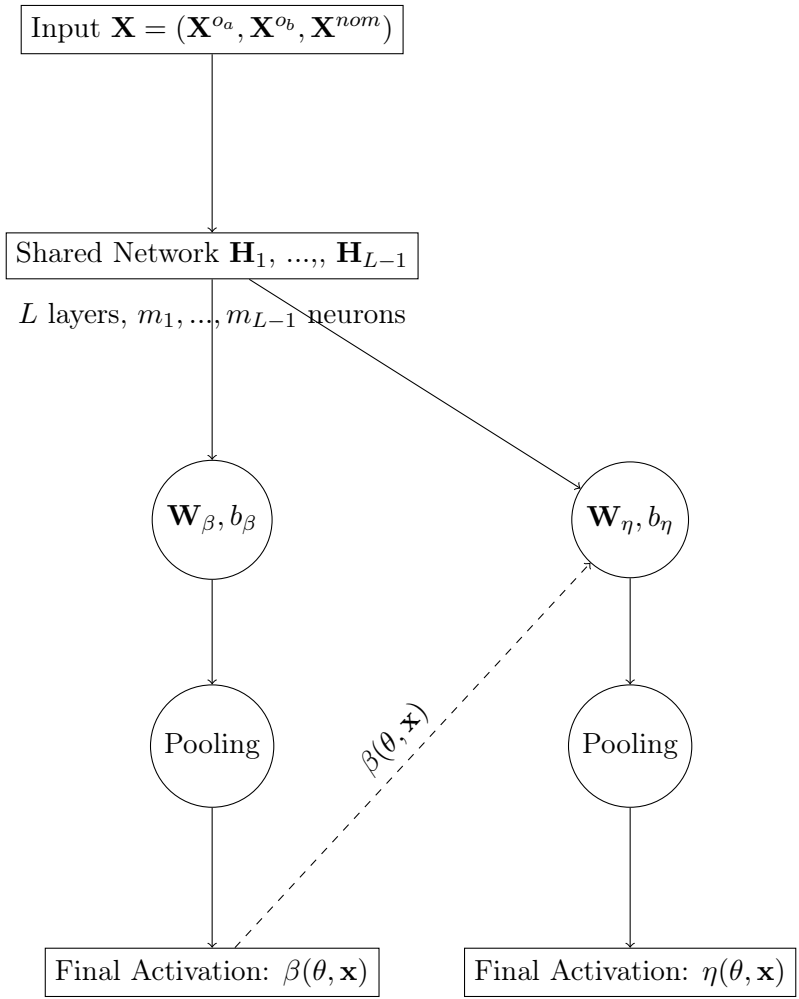

\subsubsection{From idealized theoretical properties to architectural choices}

The foregoing considerations do not shed light on the architectural choices that would favor a good estimation of $\theta$.  
Examining the possible consistency, unicity and asymptotic normality of the maximum likelihood estimator (MLE) $\hat{\theta}_n$
defined in~(\ref{eq:learning1})
offers guidance in this respect. Imposing to define a "true" parameter $\theta_0$ (which is speculative as neural networks are themselves approximations of a unknown link function), these two properties rely on the extension to functional parameters
of Chanda’s theorem~\cite{chanda1954note} by van der Vaart~\cite{van2000asymptotic,van1996weak}, which are themselves based on the following set of assumptions.  

\begin{assumption}[Compacity of the parameter space]\label{assumption:compacity_param}
 Denote $\|\cdot\|_F$ the Frobenius norm and $\|\cdot\|$ the usual Euclidian norm. There exists $R>1$ such that the set $\Theta:=\{\theta:  \ \|\bW_{\ell}\|_F\le R, \ \|b_{\ell}||\le R, \ \ell=1,\ldots,L\}$ is compact and contains $\theta_0$.   
\end{assumption}

\begin{assumption}[Compacity of the input space]\label{assumption:compacity_input}
There exists $\Delta>0$ such that  $\|\bX\|_{\infty}\leq \Delta$.    
\end{assumption}

\begin{proposition}[Uniform upper bound for $\eta$]\label{prop:upper_bound_eta}
Assume \texttt{softplus} activations $\sigma$ and denote $m_{\sup}$ the maximal number of neurons per hidden layer. Under Assumptions (\ref{assumption:compacity_param}) and (\ref{assumption:compacity_input}), there exists a finite upper bound $\eta_{max}$ on $\theta\mapsto \eta(\theta,\bX)$ over $\Theta$ that depends only on $(L,m_{\sup},R,\bX,d)$.
\end{proposition}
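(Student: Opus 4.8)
The plan is to propagate an explicit norm bound through the network, layer by layer, using two elementary facts. First, \texttt{softplus} $\sigma(x)=\log(1+e^{x})$ is sublinear, $0\le\sigma(x)\le\log 2+\max(x,0)\le\log 2+|x|$, so applied coordinatewise $\|\sigma(\mathbf u)\|\le\|\mathbf u\|+\sqrt{m}\,\log 2$ for $\mathbf u\in\R^{m}$ (if the first activation $\phi$ is \texttt{ReLU} instead, the same holds without the $\log 2$ term). Second, the operator norm of every weight matrix is dominated by its Frobenius norm, $\|\bW_\ell\|_{\mathrm{op}}\le\|\bW_\ell\|_F\le R$ by Assumption~\ref{assumption:compacity_param}, and likewise $\|b_\ell\|\le R$. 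Since Assumption~\ref{assumption:compacity_input} forces $\|\bX\|\le\sqrt d\,\Delta$, I would first bound the output of the input layer, treating the concatenation $(\bW_1^{o_a},\bW_1^{o_b},\bW_1^{n})$ as a single weight matrix of Frobenius norm at most $R$, to get $\|\bH_1\|\le R\sqrt d\,\Delta+R+\sqrt{m_{\sup}}\,\log 2$.

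Next I would set up the recursion $a_\ell:=\|\bH_\ell\|$, $a_\ell\le R\,a_{\ell-1}+c$ with $c:=R+\sqrt{m_{\sup}}\,\log 2$, valid for each of the at most $L$ hidden layers. Unrolling and using $R>1$ yields a closed-form bound $\|\bH_{L-1}\|\le R^{L}\bigl(\sqrt d\,\Delta+\tfrac{c}{R-1}\bigr)$ (up to the exact exponent and constants), a finite quantity depending only on $(L,m_{\sup},R,\Delta,d)$. For the $\eta$-head, the key observation is that the masked weight vector $\bW_\eta=\mathrm{mask}(\bX)(-|\mathbf V|)+(1-\mathrm{mask}(\bX))\,\mathbf V$ satisfies $\|\bW_\eta\|=\|\mathbf V\|\le\|\mathbf V\|_F\le R$ \emph{regardless} of the $0/1$ value of the mask (the free weights $\mathbf V$ are among the network parameters, so Assumption~\ref{assumption:compacity_param} applies). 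Hence $|\bW_\eta\bH_{L-1}+b_\eta|\le R\|\bH_{L-1}\|+R$, and by sublinearity of $\sigma_\eta$, $\tilde\eta(\theta,\bX)\le\eta_{\min}+\log 2+R+R\|\bH_{L-1}\|$. Finally, since the pooling step is a coordinatewise aggregation that never exceeds the maximum of its arguments (max- or average-pooling), $\eta(\theta,\bX)=\mathrm{pooling}(\tilde\eta(\theta,\bX))$ inherits the same bound, and collecting constants gives an explicit $\eta_{max}=\eta_{max}(L,m_{\sup},R,\Delta,d)$, hence a fortiori a bound depending only on $(L,m_{\sup},R,\bX,d)$.

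The argument is largely bookkeeping; the two places that need care are (i) checking that the masking/branching construction for $\bW_\eta$ cannot inflate the norm — which holds because $|{-|\mathbf V|}|=|\mathbf V|$ coordinatewise and a convex combination of two vectors of norm $\le R$ still has norm $\le R$; and (ii) fixing precisely what ``pooling'' denotes, since the constant would have to be multiplied by the window size if pooling were a sum rather than a max or an average. The genuine conceptual point, rather than the (minor) computational one, is that Assumption~\ref{assumption:compacity_input} is indispensable: \texttt{softplus} is unbounded, so without a bounded input domain no finite $\eta_{max}$ uniform in $\bX$ can exist. I expect this, rather than any estimate, to be the only real subtlety.
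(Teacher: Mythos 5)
Your proposal is correct and follows essentially the same route as the paper's proof: a layer-by-layer propagation of a norm bound using the sublinearity of \texttt{softplus} ($\sigma(x)\le \log 2+|x|$) together with the Frobenius-norm control of the weights from Assumption~\ref{assumption:compacity_param}, followed by unrolling the resulting affine recursion and observing that the head and the pooling do not alter the bound. The only differences are cosmetic — you work in the Euclidean norm with $\|\bW_\ell\|_{\mathrm{op}}\le\|\bW_\ell\|_F\le R$ where the paper works in the sup norm and picks up an extra $\sqrt{m_{\ell-1}}$ factor per layer, and you explicitly verify that the masked construction of $\bW_\eta$ preserves the norm bound, a point the paper leaves implicit.
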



\begin{assumption}[Regular activation functions]\label{assumption:regularity_activations}
The activation functions are $C^2$ and their first and second order derivatives are uniformly bounded.   
\end{assumption}

\noindent The two first assumptions, associated to a choice of $C^2$ activation functions  with uniformly bounded derivatives of  (ie., \texttt{softplus}, smoothed \texttt{ReLu} (SReLu \cite{bresler2020corrective})), lead to the following proposition. 

\begin{proposition}[$C^2$ differentiability]\label{prop:C2-diff}
Under Assumptions \ref{assumption:compacity_param} to \ref{assumption:regularity_activations}, 
the mappings $\theta \mapsto \beta(\theta,\mathbf X)$
and 
$\theta \mapsto \eta(\theta,\mathbf X)$
are twice continuously differentiable; furthermore, their first- and second-order derivatives are uniformly bounded for all $\theta\in\Theta_n$ by a function that is integrable with respect to~$\mathbf X$.
\end{proposition}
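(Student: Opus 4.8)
The plan is to treat $\theta\mapsto(\eta(\theta,\bX),\beta(\theta,\bX))$ as a \emph{finite} composition of elementary maps and to propagate, layer by layer, both the $C^2$ regularity and the uniform bounds. \textbf{Step 1 (reduction to a composition of smooth blocks).} Each hidden layer $(\theta,\bH_{\ell-1})\mapsto\bH_\ell=\sigma(\bW_\ell\bH_{\ell-1}+b_\ell)$ is the composition of the affine map $(\bW_\ell,b_\ell,\bH_{\ell-1})\mapsto\bW_\ell\bH_{\ell-1}+b_\ell$, which is polynomial (hence $C^\infty$) in its arguments, with the coordinatewise activation $\sigma$, which is $C^2$ with uniformly bounded first and second derivatives by Assumption~\ref{assumption:regularity_activations}. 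Each head then applies one further affine map, \texttt{softplus} (which is $C^\infty$ with bounded derivatives), the constant offsets of Constraints~(\ref{constraint-eta-1})--(\ref{constraint-beta-1}), and a pooling operator, which I take to be smooth (average or log-sum-exp pooling; hard max-pooling, being only Lipschitz, must be replaced by a smooth surrogate). Since a finite composition of $C^2$ maps is $C^2$, with derivatives given by the chain and Leibniz rules, $\theta\mapsto\beta(\theta,\bX)$ and $\theta\mapsto\eta(\theta,\bX)$ are $C^2$ on $\Theta$.

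\textbf{Step 2 (forward bounds).} On the compact set $\Theta$ of Assumption~\ref{assumption:compacity_param} one has $\|\bW_\ell\|_F\le R$ and $\|b_\ell\|\le R$, and $\|\bX\|\le\sqrt{d}\,\Delta$ by Assumption~\ref{assumption:compacity_input}. Since \texttt{softplus} grows at most linearly, $\|\sigma(u)\|\le\|u\|+\sqrt{m_{\sup}}\log 2$, a straightforward induction over $\ell=1,\dots,L-1$ yields $\|\bH_\ell\|\le C_\ell$ for constants $C_\ell$ depending only on $(L,m_{\sup},R,\Delta,d)$; the heads then give $\beta(\theta,\bX)\in[\beta_{\min},\beta_{\max}]$ by construction and $\eta_{min}\le\eta(\theta,\bX)\le\eta_{max}$, the upper bound being precisely Proposition~\ref{prop:upper_bound_eta}. \textbf{Step 3 (derivative bounds).} Differentiating the recursion gives $\partial_\theta\bH_\ell=\diag\!\big(\sigma'(\bW_\ell\bH_{\ell-1}+b_\ell)\big)\,\partial_\theta(\bW_\ell\bH_{\ell-1}+b_\ell)$, where the inner term is linear in $(\bW_\ell,b_\ell,\partial_\theta\bH_{\ell-1})$ with coefficients controlled by $\|\bH_{\ell-1}\|\le C_{\ell-1}$ and $\|\bW_\ell\|_F\le R$, and $\|\sigma'\|_\infty<\infty$. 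An induction over $\ell$ bounds $\|\partial_\theta\bH_\ell\|$ by a constant of the same type; differentiating once more, now also invoking $\|\sigma''\|_\infty<\infty$ together with the bounds already obtained on $\bH_\ell$ and $\partial_\theta\bH_\ell$, bounds $\|\partial^2_\theta\bH_\ell\|$. Composing with the heads, whose derivatives are bounded on the compact output range of Step~2, transfers the bounds to $\beta$ and $\eta$. Finally, every bound thus produced depends on $\bX$ only through $\|\bX\|\le\sqrt{d}\,\Delta$, hence may be taken constant in $\bX$, which is trivially integrable against the law of $\bX$; this delivers the $\bX$-integrable envelope, uniform over $\theta\in\Theta$.

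The hard part will be the masking rule $\mbox{mask}(\bX)=\mathds{1}_{\{\beta(\theta,\bX)>\beta_0\}}$ entering $\bW_\eta$: as a function of $\theta$ it jumps across the level set $\{\theta:\beta(\theta,\bX)=\beta_0\}$, so $\eta(\theta,\bX)$ is a priori not even continuous there, let alone $C^2$, which would break Step~1 globally. I would resolve this in one of two ways. Either replace the hard indicator by a smooth surrogate $s_\tau(\beta(\theta,\bX)-\beta_0)$, with $s_\tau$ a $C^\infty$ sigmoid of temperature $\tau$, which restores genuine $C^2$ regularity on all of $\Theta$ at the cost of enforcing the sign constraint on $\bW_\eta$ only as $\tau\to 0$ (this is also the implementation used in practice); or observe that the exceptional set has empty interior, so that $\eta$ is $C^2$ on an open dense subset of $\Theta$, and note that the subsequent extension of Chanda's theorem~\cite{chanda1954note} only requires differentiability on a full-measure neighbourhood of $\theta_0$ together with the $\bX$-integrable domination established above. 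I would state and prove the proposition under the smoothed mask, which keeps the argument clean while leaving Steps~1--3 intact.
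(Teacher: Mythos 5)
Your proof takes essentially the same route as the paper's: the paper reduces this proposition to a more general statement proved by exactly your layerwise induction, propagating $C^2$ regularity and bounds $\|\nabla_{\theta}h_\ell\|\le \Delta_1 R^{\ell}$, $\|\nabla^2_{\theta\theta}h_\ell\|\le \Delta_2 R^{2\ell}$ via the chain rule, the compactness of $\Theta$ and of the input domain, and the uniform bounds on $\sigma'$ and $\sigma''$, with the resulting envelope constant in $\mathbf X$ and hence trivially integrable, just as you observe. Your additional discussion of the hard mask $\mathds{1}_{\{\beta(\theta,\mathbf X)>\beta_0\}}$ (and of non-smooth pooling) flags a genuine gap that the paper's proof silently ignores — the published argument treats $\eta$ as an affine head composed with \texttt{softplus} and never addresses the discontinuity the mask introduces across the level set $\{\theta:\beta(\theta,\mathbf X)=\beta_0\}$ — so your smoothed-surrogate fix is a worthwhile strengthening rather than a redundancy.
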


\noindent Finally, to get consistency of the MLE we require: (a) that the weight of censored observations in the loss function $L_n(\theta)$ be linearly controlled, ensuring the existence of asymptotic concentration of statistical information (Assumption \ref{assum:censoring_vanish}); (b) an assumption on how the architecture of the neural network can depend on the total sample size~$n$ of observations.
More precisely, 
in this theoretical part of the study we 
temporarily adopt a \emph{sieve} approach \cite{shen2023asymptotic}, according to which the number of layers and the number of neurons per layer may depend on $n$, so as this network can approximate the potentially infinite-dimensional functional parameters $(\eta,\beta)$ (Assumption \ref{assump:gc}). 

In this sieve setting, the parameter space $\Theta$ must be replaced  by a sequence of increasing spaces for the inclusion $\Theta_{1}\subset \Theta_{2}\subset \ldots \Theta_{n}\subset \ldots$ and their union space $\Theta_{\infty}=\cup_{n=1}^{\infty} \Theta_n$ \cite{grenander1981abstract}. It is then usually assumed that $\theta_0\in \Theta_{\infty}$ (ie., the "true" parameter of infinite dimension) and that for $n$ large enough, there exists a good approximation of  $\theta_0$ in $\Theta_n$.  The MLE consistency results provided further are relative to this approximation. Else, if the true model does not belong to  $\Theta_{\infty}$, the MLE consistency is defined with respect to a pseudo-parameter $\theta^*_0$ defined as the Kullback-Leibler minimizer between the true model and its approximation in $\Theta_{\infty}$. See \cite{shen1994convergence,van1996weak} for technical details. 

Next assumption about censoring is formulated in a generalized way, requiring only that the number of censored observations grows at most linearly with the sample size. 


\begin{assumption}\label{assum:censoring_vanish}
The number of censored observations grows at most linearly with $n$:
$n_c  =  {O}\left(n\right)$. 
\end{assumption}

 In the case of i.i.d. random censoring, Assumption \ref{assum:censoring_vanish} holds with probability tending to 1, since the number of censored observations is, in average, $n\times \mathbb{P}(\delta=0).$ However, Assumption \ref{assum:censoring_vanish} allows to also cover deterministic forms of censoring.

\begin{assumption}[Sieve rate condition]\label{assump:gc}
Denote $p_n=\mbox{Card}(\theta)$. Then
$p_n=o(n^{1/3})$ and $L=o(n^{2/3})$. 
\end{assumption}




\begin{corollary}[Existence and consistency of the MLE]\label{cor:existence_consistency}
Under Assumptions \ref{assumption:compacity_param} to \ref{assump:gc}, there exists at least one estimator $\hat{\theta}_n \in\arg\max \ell_n(\theta)$. Besides, if $\theta_0\in\Theta_n$, then any maximizer $\hat{\theta}_n$ of $\ell_n(\theta)$ satisfies
$\hat{\theta}_n  \xrightarrow{\Pp}{}  \theta_0$.
\end{corollary}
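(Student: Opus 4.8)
The plan is to verify the hypotheses of the sieve maximum-likelihood consistency theorem of van der Vaart (the functional-parameter extension of Chanda's classical result), treating the negative log-likelihood $L_n(\theta)$ as an empirical process indexed by $\theta\in\Theta_n$. I would proceed in four steps. First, \emph{existence}: on each fixed sieve $\Theta_n$, continuity of $\theta\mapsto L_n(\theta)$ follows from Proposition \ref{prop:C2-diff} (indeed $C^2$ suffices, and even continuity would do), and compactness of $\Theta_n$ is Assumption \ref{assumption:compacity_param}; a continuous function on a compact set attains its infimum, so $\arg\min_{\theta\in\Theta_n} L_n(\theta)\neq\emptyset$, giving at least one MLE $\hat\theta_n$. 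Second, I would establish that the per-observation log-likelihood contributions are \emph{bounded and regular enough to form a Glivenko--Cantelli class}: writing the contribution of mission $(i,j)$ as $\ell^{(i)}_j(\theta)$, the censored terms contribute only $-(z/\eta)^\beta$ (plus constants), and the uncensored terms add $\log\beta-\beta\log\eta+(\beta-1)\log z$. Using $\eta\in[\eta_{\min},\eta_{\max}]$ (the lower bound is built into the architecture via Constraint (\ref{constraint-eta-1}), the upper bound is Proposition \ref{prop:upper_bound_eta}), $\beta\in[\beta_{\min},\beta_{\max}]$ (Constraint (\ref{constraint-beta-1})), and $\|\bX\|_\infty\le\Delta$ with normalized durations $z\ge1$ bounded above by the same compactness, every term is uniformly bounded over $\Theta_n\times\chi$ by a constant depending only on $(L,m_{\sup},R,\Delta,d,\beta_{\min},\beta_{\max},\eta_{\min})$. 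Combined with the Lipschitz-in-$\theta$ property inherited from the uniformly bounded derivatives in Proposition \ref{prop:C2-diff}, a standard bracketing/covering-number argument bounds the bracketing entropy of the class $\{\ell^{(i)}_j(\cdot):\theta\in\Theta_n\}$ by something polynomial in $p_n=\mathrm{Card}(\theta)$; Assumption \ref{assump:gc} ($p_n=o(n^{1/3})$, $L=o(n^{2/3})$) then makes this entropy grow slowly enough that $\sup_{\theta\in\Theta_n}|n^{-1}L_n(\theta)-\E[n^{-1}L_n(\theta)]|\xrightarrow{\Pp}0$.

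Third, I would handle \emph{identifiability / separation of the limit criterion}. Define $M(\theta)=\lim_n \E[n^{-1}L_n(\theta)]$ (existing along the relevant subsequences, or replaced by $\limsup/\liminf$ as in the pseudo-parameter discussion preceding the statement). By the information inequality, $M(\theta)-M(\theta_0)$ equals an averaged Kullback--Leibler divergence between the Weibull law with parameters $(\eta(\theta_0,\bX),\beta(\theta_0,\bX))$ and $(\eta(\theta,\bX),\beta(\theta,\bX))$, tilted by the censoring mechanism; since Assumption \ref{assum:censoring_vanish} only forces $n_c=O(n)$ but the censoring distribution is fixed and independent of $T$, the uncensored fraction stays bounded away from zero (in the deterministic case one argues along subsequences), so this averaged divergence is $\ge0$ and vanishes only when $(\eta(\theta,\cdot),\beta(\theta,\cdot))=(\eta(\theta_0,\cdot),\beta(\theta_0,\cdot))$ in $L^2(\bX)$. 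Under the stated hypothesis $\theta_0\in\Theta_n$ this gives a well-separated maximum of $-M$ at $\theta_0$ \emph{in the induced function-space (pseudo-)metric}; the consistency conclusion $\hat\theta_n\xrightarrow{\Pp}\theta_0$ is then to be read in that metric (the paragraph before the corollary is explicit that when $\theta_0\notin\Theta_\infty$ one substitutes the KL-projection $\theta_0^*$). Fourth, I would combine the pieces: uniform convergence plus a well-separated maximizer is exactly the hypothesis of the argmax/M-estimator consistency theorem (van der Vaart, Thm.~5.7 and its sieve version), which yields $\hat\theta_n\xrightarrow{\Pp}\theta_0$.

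The main obstacle, and the step deserving the most care, is the interplay between the sieve growth and the empirical-process control in the second step: because the architecture itself varies with $n$, one must show the bracketing entropy of the \emph{moving} class $\{\ell^{(i)}_j(\cdot)\}_{\theta\in\Theta_n}$ is $o(n)$ uniformly, which is where the precise rates $p_n=o(n^{1/3})$ and $L=o(n^{2/3})$ of Assumption \ref{assump:gc} are consumed — the depth $L$ enters multiplicatively in the Lipschitz constant of the network map (a product of $L$ operator norms each $\le R>1$, hence potentially $R^L$), so one needs $L$ small enough that $R^L$ times the covering count still divided by $n$ tends to zero, and checking that $L=o(n^{2/3})$ (together with $R$ fixed) indeed suffices is the delicate bookkeeping. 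A secondary subtlety is the censored-observation term: although Assumption \ref{assum:censoring_vanish} is weak, one must confirm that it does not let the Fisher information along any direction degenerate — i.e. that the retained uncensored mass still separates $\theta_0$ from its neighbors — which in the deterministic-censoring case requires passing to subsequences and invoking the fixed, $T$-independent censoring law.
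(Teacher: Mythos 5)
Your strategy matches the paper's proof essentially step for step: existence follows from continuity of $\theta\mapsto\ell_n(\theta)$ on the compact $\Theta_n$ (Assumption \ref{assumption:compacity_param} plus Proposition \ref{prop:C2-diff}), a uniform law of large numbers is obtained from a bracketing-entropy bound whose logarithm is of order $p_n\log(\mu/\varepsilon)$ with $\mu\sim R^L\sqrt{p_n}$, the sieve rates of Assumption \ref{assump:gc} make the entropy integral $o(\sqrt n)$, and consistency is concluded from Theorem~5.7 of van der Vaart. Your remark that the depth enters multiplicatively through $R^L$ and is absorbed by the rate condition is exactly where the paper spends its effort.

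Two points of detail. First, your claim that the observed durations $z^{(i)}_j$ are ``bounded above by the same compactness'' is not correct: Assumption \ref{assumption:compacity_input} bounds only the covariates, and the terms $(z/\eta)^{\beta}$ and $\log z$ are unbounded in $z$. The paper does not need uniform boundedness; it establishes an $L^1$-integrable random Lipschitz modulus $\Delta_1 R^L\bigl[n_uF_u(\mathbf Z)+n_cF_c(\mathbf Z)\bigr]$, using that $(T/\eta)^{\beta}\mid\theta,\mathbf X$ is unit-exponential on uncensored observations and that censoring times are finite, and then applies the bracketing bound in $L^1(\Pp)$. Your argument is repairable by replacing the uniform bound with exactly this envelope, but as written that step would fail. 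Second, your third step (well-separation of the limit criterion via the averaged Kullback--Leibler divergence) is a genuine addition: the paper simply invokes Theorem~5.7 without verifying its separation hypothesis, so on this point your proposal is more complete than the published proof.
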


\noindent To go further and obtain a controllable asymptotic behavior of the MLE, it is necessary to ensure the existence and non-degeneracy of the Fisher information. This implies two key conditions:  
(a) the two network heads, which are expected to provide distinct predictions for $\beta(\theta,\bX)$ and $\eta(\theta,\bX)$, must rely on clearly different (i.e., non-collinear) weights \((W_\beta,\,W_\eta)\in\mathbb{R}^{1 \times m_{L-1}}\) ;  
(b) the covariates must span a sufficiently rich set of scenarios so that all the network parameters are effectively involved in the prediction process (every
direction of the shared representation is activated).  
These conditions are formalized in the following (joint) assumption.

\begin{assumption}[Head identifiability and input excitation]\label{assump:distinct_heads}
Let \(h_{L-1}(X)\in\mathbb{R}^{m_{L-1}}\) be the output (activation) after the $L-1$ layer (the last shared representation in the joint backbone). For \(u,v\in\mathbb{R}^{m_{L-1}}\) we denote the standard Euclidean inner product by
\(\langle u,v\rangle := u^{\top}v = \sum_{k=1}^{m_{L-1}} u_k v_k\), and we denote strict positive definiteness using $\succ$. Remind that $(W_{\beta},W_{\eta})$ are $m_{L-1}-$dimensional vectors and 
assume 
\begin{eqnarray}
\langle W_\beta,\,W_\eta\rangle & < & \|W_\beta\|_{2}\,\|W_\eta\|_{2}, \ \ \ \text{(non-collinearity of the two heads)}
\label{eq:A7-1}\\
\mbox{Cov}\!\bigl[h_{L-1}(\bX)\bigr] &\succ & 0 \hspace{2.5cm} \ \ \text{(non-degenerate input support)} 
\label{eq:A7-2}
\end{eqnarray}    
\end{assumption}

\begin{proposition}[Regular Fisher information]
\label{prop:fisher}
Under Assumptions \ref{assumption:compacity_param} to \ref{assump:distinct_heads}, the  Fisher information matrix $\E\left[\nabla_{\theta}\log \ell_n(\theta)\nabla_{\theta}\log \ell_n(\theta)\right]$ exists and is strictly positive definite.
\end{proposition}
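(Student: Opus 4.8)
The plan is to show that the Fisher information $\mathcal I_n(\theta) = \E\bigl[\nabla_\theta \log\ell_n(\theta)\,\nabla_\theta\log\ell_n(\theta)^\top\bigr]$ is well-defined and positive definite by (i) establishing that the score has finite second moments, so that the expectation exists, and (ii) proving that no nonzero direction $u\in\R^{p_n}$ lies in the kernel of $\mathcal I_n(\theta)$, equivalently that $u^\top\nabla_\theta\log\ell_n(\theta)=0$ almost surely forces $u=0$. For step (i) I would invoke Proposition~\ref{prop:C2-diff}: the maps $\theta\mapsto\beta(\theta,\bX)$ and $\theta\mapsto\eta(\theta,\bX)$ are $C^2$ with first- and second-order derivatives uniformly bounded by an $\bX$-integrable envelope; combined with the lower bounds $\eta(\theta,\bX)\ge\eta_{\min}>0$, $\beta(\theta,\bX)\ge\beta_{\min}$, the upper bound $\eta(\theta,\bX)\le\eta_{\max}$ from Proposition~\ref{prop:upper_bound_eta}, $\beta(\theta,\bX)\le\beta_{\max}$, and the compactness of $\chi$ (Assumption~\ref{assumption:compacity_input}), the per-observation log-density $\log f^{(i)}_j$ and its $\theta$-gradient are dominated by integrable functions of $(z,\bX)$; the moment $\E[(z/\eta)^{\beta}\log(z/\eta)]$ and similar quantities that appear in $\nabla_\theta\log f$ are finite because under the Weibull law $(z/\eta)^\beta$ is $\mathrm{Exp}(1)$ and has finite moments of all orders and finite $\E[\log]$. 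Summing over the $n$ independent vehicle histories (using the independence assumptions stated before~(\ref{eq:complete_likelihood})) keeps everything finite, so $\mathcal I_n(\theta)$ exists and is positive semidefinite by construction.

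For step (ii), the key is the chain rule. Write $s^{(i)}_j(\theta) := \nabla_\theta\log f^{(i)}_j(\theta\mid\mathcal F^{(i)}_{j-1})$; a direct computation shows
\[
s^{(i)}_j(\theta) = a^{(i)}_j\,\nabla_\theta\beta(\theta,\bX^{(i)}_j) + b^{(i)}_j\,\nabla_\theta\eta(\theta,\bX^{(i)}_j),
\]
where $a^{(i)}_j,b^{(i)}_j$ are scalar random variables depending on $(z^{(i)}_j,\delta^{(i)}_j,\beta,\eta)$, and crucially the $2\times2$ covariance matrix of $(a^{(i)}_j,b^{(i)}_j)$ conditional on $\bX^{(i)}_j$ is nonsingular — this is the classical non-degeneracy of the two-parameter Weibull Fisher information, which holds on an uncensored observation (and censoring only adds a further positive-semidefinite contribution, so it cannot create a kernel). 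Next I use the chain rule through the network: $\nabla_\theta\beta$ and $\nabla_\theta\eta$ each factor as $(W_\beta\text{ or }W_\eta)^\top$ acting on $\nabla_\theta h_{L-1}(\bX)$ plus the head-specific weight/bias derivatives. Assumption~\ref{assump:distinct_heads}(\ref{eq:A7-2}) — $\mathrm{Cov}[h_{L-1}(\bX)]\succ0$ — guarantees that the shared-backbone directions are all excited by the covariate distribution, so a vanishing-score identity propagates down to force the backbone components of $u$ to zero; then Assumption~\ref{assump:distinct_heads}(\ref{eq:A7-1}) — strict non-collinearity of $W_\beta$ and $W_\eta$ — combined with the Weibull $2\times2$ non-degeneracy decouples the two heads and forces the head-specific components of $u$ to zero as well. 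Hence $u=0$, so $\mathcal I_n(\theta)\succ0$.

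**Main obstacle.**
The routine part is the envelope/integrability bookkeeping in step (i). The delicate part is making step (ii) rigorous: one must carefully separate the contributions of the shared weights $(\bW_\ell,b_\ell)_{\ell\le L-1}$ from those of the two heads $(W_\beta,b_\beta,W_\eta,b_\eta)$ in the expansion of $u^\top s^{(i)}_j(\theta)$, and argue that the almost-sure identity $u^\top s^{(i)}_j(\theta)=0$ (which must hold simultaneously for every mission of every vehicle) cannot be satisfied by a nonzero $u$. The subtlety is that the coefficients $a^{(i)}_j,b^{(i)}_j$ and the Jacobian $\nabla_\theta h_{L-1}(\bX)$ are both random and correlated through $\bX$; the clean way is to condition on $\bX^{(i)}_j$, exploit that conditionally $(z/\eta)^\beta\sim\mathrm{Exp}(1)$ makes the $2\times2$ Weibull information matrix a fixed positive-definite matrix (independent of $\bX$ up to the $\beta$-dependent logarithmic entries, which are still bounded away from degeneracy since $\beta\in[\beta_{\min},\beta_{\max}]$), and then take expectations over $\bX$ using the two parts of Assumption~\ref{assump:distinct_heads}. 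I would also note that the argument uses at least one uncensored observation per relevant direction, which is guaranteed asymptotically since $n_u = n - n_c$ and, by Assumption~\ref{assum:censoring_vanish}, $n_u\to\infty$; for finite $n$ the statement should be read as holding whenever the uncensored sub-sample excites all directions, consistent with how Assumption~\ref{assump:distinct_heads} is phrased.
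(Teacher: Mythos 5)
Your proposal follows essentially the same route as the paper: the paper likewise factors the score as a $2$-vector of scalar Weibull coefficients $(V_1,V_2)$ acting on the stacked gradients $(\nabla_\theta\beta,\nabla_\theta\eta)$, bounds the resulting $2\times 2$ matrix below by $c_0 I_2$ using the conditional non-degeneracy of the two-parameter Weibull score, and then reduces positive definiteness to showing $\mathrm{Var}[\upsilon^\top\nabla_\theta\beta]+\mathrm{Var}[\upsilon^\top\nabla_\theta\eta]>0$ for every nonzero $\upsilon$, which is exactly your step (ii) and is proved via the chain-rule factorization through $h_{L-1}(\bX)$ together with the two parts of Assumption~\ref{assump:distinct_heads}. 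The integrability step and the overall decomposition match; your additional remarks on censoring adding only a positive-semidefinite contribution are a harmless refinement.
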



\noindent Satisfaction of Assumption \ref{assump:distinct_heads} can be ensured through architectural constraints or additional loss-based constraints. 
For each $\nu \in \{\eta, \beta\}$, replace each scalar (unnormalized) head $\nu(\theta,\bX) = W^T_{\nu} h_{L-1} + b_{\nu}$ with a sequence of operations (corresponding to two independent parallel mini-MLPs):
\begin{eqnarray*}
\left\{
\begin{array}{lll}
z_{\nu} & = & V_{\nu}h_{L-1} + c_{\nu} \ \ \ \ \text{($V_{\nu}\in\R^{r \times m}$)} \\
g_{\nu} & = & \texttt{softplus}(z_{\nu}), \\
\nu(\theta,\bX) & = & u_{\mu}^T g_{\nu} + d_{\nu} \ \ \ \ \text{($u_{\nu}\in\R^{r}$)}
\end{array}\right.
\end{eqnarray*}
then if $\nu=\beta$, normalize it by $\beta(\theta,\bX)=\beta_{\min}+(\beta_{\max}-\beta_{\min})\sigma(\nu(\theta,\bX))$ where $\sigma$ is a sigmoid activation. Else if $\nu=\eta$, define $\eta(\theta,\bX)=\eta_{min}+\sigma(\nu(\theta,\bX))$. 
Except in pathological confi\-gurations, the matrices $V_{\eta}, V_{\beta}$ and the vectors $u_{\eta}, u_{\beta}$ are independent, which ensures that the two gradients remain in distinct subspaces. Moreover, the \texttt{softplus} activations are strictly increasing and never null, preventing neuron deactivation. An alternative approach (which avoids introducing too many additional parameters, but is ideally used in combination) is to inject two loss terms $L_{\text{orth}}$ and $L_{\text{cov}}$, described in $\S$\ref{subsection:orth_cov}. \\


\begin{theorem}[CLT for the MLE]\label{theo:clt}
Assume the existence and identifiability of $\theta_0\in\Theta_{\infty}$.  Denote $I(\theta_0)$ the unit Fisher information matrix in $\theta_0$. Then, under Assumptions \ref{assumption:compacity_param} to \ref{assump:distinct_heads}, 
\[
\sqrt{n}(\hat\theta_n - \theta_0) \xrightarrow{d} \mathcal{N}(0, I(\theta_0)^{-1}). 
\]
\end{theorem}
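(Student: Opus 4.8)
The plan is to obtain the CLT as the standard consequence of the combination of three ingredients: (i) consistency of the MLE, already established in Corollary \ref{cor:existence_consistency}; (ii) a Taylor expansion of the score equation around $\theta_0$; and (iii) a central limit theorem for the score together with a uniform law of large numbers for the Hessian. Concretely, I would write the (censored) log-likelihood $L_n(\theta)=-\sum_{i=1}^N\sum_{j=1}^{n(i)}\log f_j^{(i)}(\theta\mid\mathcal F_{j-1}^{(i)})$, and first argue that, on the event that $\hat\theta_n$ lies in the interior of $\Theta_n$ (which holds with probability tending to $1$ by Corollary \ref{cor:existence_consistency} and Assumption \ref{assumption:compacity_param}, since $\theta_0$ is interior), the first-order condition $\nabla_\theta L_n(\hat\theta_n)=0$ is satisfied. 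A mean-value expansion coordinatewise then gives
\[
0=\nabla_\theta L_n(\hat\theta_n)=\nabla_\theta L_n(\theta_0)+\Big(\int_0^1\nabla^2_\theta L_n\big(\theta_0+s(\hat\theta_n-\theta_0)\big)\,ds\Big)(\hat\theta_n-\theta_0),
\]
so that $\sqrt n(\hat\theta_n-\theta_0)=-\big(\tfrac1n\bar H_n\big)^{-1}\big(\tfrac1{\sqrt n}\nabla_\theta L_n(\theta_0)\big)$ with $\bar H_n$ the averaged Hessian, provided the latter is invertible.

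The second step is to identify the limit of $\tfrac1{\sqrt n}\nabla_\theta L_n(\theta_0)$. Here the score decomposes as a sum over vehicles $i$ of per-vehicle scores $\nabla_\theta\ell^{(i)}(\theta_0)$, which — using the AGAN/perfect-maintenance assumption that makes the within-vehicle contributions conditionally independent martingale increments with respect to the filtration $\{\mathcal F_j^{(i)}\}_j$, and the cross-vehicle independence — form an array of mean-zero, square-integrable terms. Propositions \ref{prop:upper_bound_eta} and \ref{prop:C2-diff} furnish the uniform bounds on $\beta(\theta,\bX)$, $\eta(\theta,\bX)$ and their $\theta$-derivatives that guarantee the score has finite second moment (the Weibull log-density and its $\theta$-gradient are controlled because $\eta$ is bounded above and below, $\beta\in[\beta_{\min},\beta_{\max}]$, and the observations are uniformly integrable given Assumption \ref{assumption:compacity_input} on inputs; the censored terms contribute boundedly and, by Assumption \ref{assum:censoring_vanish}, in controlled number). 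A Lindeberg–Feller CLT for triangular arrays of martingale differences (or, if one prefers, the martingale CLT as in van der Vaart \cite{van2000asymptotic}) then yields $\tfrac1{\sqrt n}\nabla_\theta L_n(\theta_0)\xrightarrow{d}\mathcal N(0,I(\theta_0))$, where $I(\theta_0)$ is the limiting per-observation Fisher information; Proposition \ref{prop:fisher} guarantees $I(\theta_0)\succ 0$, hence invertible. In the sieve regime one invokes Assumption \ref{assump:gc} ($p_n=o(n^{1/3})$, $L=o(n^{2/3})$) to ensure that the growing dimension does not destroy the CLT and that the bias of the sieve approximation of $\theta_0$ is asymptotically negligible relative to $n^{-1/2}$ — this is the van der Vaart extension of Chanda's theorem referenced in the text, and I would cite \cite{shen1994convergence,van2000asymptotic} for the precise sieve-MLE statement.

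The third step is the uniform law of large numbers for the Hessian: I would show $\sup_{\theta\in\mathcal B_n}\|\tfrac1n\nabla^2_\theta L_n(\theta)-I(\theta)\|\xrightarrow{\Pp}0$ on shrinking balls $\mathcal B_n$ around $\theta_0$, using the $C^2$-regularity and the $\mathbf X$-integrable uniform envelope for the second derivatives from Proposition \ref{prop:C2-diff}, plus Assumption \ref{assumption:regularity_activations}; combined with consistency this gives $\tfrac1n\bar H_n\xrightarrow{\Pp}I(\theta_0)$, and continuity of matrix inversion at the positive-definite limit yields $\big(\tfrac1n\bar H_n\big)^{-1}\xrightarrow{\Pp}I(\theta_0)^{-1}$. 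Slutsky's theorem then assembles the pieces into $\sqrt n(\hat\theta_n-\theta_0)\xrightarrow{d}\mathcal N(0,I(\theta_0)^{-1})$.

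The main obstacle I anticipate is not the algebra of the expansion but the rigorous control of the sieve bias and of the fluctuation terms when $p_n\to\infty$: one must show that the approximation error $\|\theta_0^{(n)}-\theta_0\|$ between the best element of $\Theta_n$ and the true infinite-dimensional parameter is $o(n^{-1/2})$ (or otherwise restrict the statement to a finite-dimensional $\theta_0$, as the theorem hypothesis "$\theta_0\in\Theta_\infty$ identifiable" implicitly allows), and that the remainder in the Hessian expansion, which involves up to $p_n$ coordinates each with an $O(1)$ envelope, is $o_{\Pp}(1)$ after normalization — this is exactly where $p_n=o(n^{1/3})$ is used. A secondary delicate point is that the masking activation $\mathrm{mask}(\bX)=\mathds 1_{\{\beta(\theta,\bX)>\beta_0\}}$ is discontinuous in $\theta$; I would argue that at $\theta_0$ the set $\{\bX:\beta(\theta_0,\bX)=\beta_0\}$ has $\mathbb P$-measure zero (a mild genericity assumption, consistent with the absolute continuity of $\mathbb P$ and the non-degeneracy in Assumption \ref{assump:distinct_heads}), so that $W_\eta(\theta)$ and hence the log-density are almost surely $C^2$ in a neighbourhood of $\theta_0$, restoring the differentiability needed for the expansion.
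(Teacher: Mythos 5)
Your proposal is correct and follows essentially the same route as the paper: score expansion at $\hat\theta_n$, a Lindeberg-type CLT for $n^{-1/2}S_n(\theta_0)$, a uniform law of large numbers for the Hessian on shrinking balls around $\theta_0$, and control of the Taylor remainder via $\|\hat\theta_n-\theta_0\|=O(\sqrt{p_n/n})$ together with the sieve rate $p_n=o(n^{1/3})$, assembled by Slutsky. Your closing remarks on the sieve approximation bias and on the a.s.\ negligibility of the set $\{\beta(\theta_0,\bX)=\beta_0\}$ for the masking indicator identify genuine subtleties that the paper's proof passes over silently, and would strengthen a fully rigorous write-up.
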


\noindent These results 
provide the minimum statistical
framework expected for the modeling problem and entail local uniqueness of the MLE with probability tending to~1.  They do not, however, guarantee that a global maximiser is unique 
or that the true parameter $\theta_0$ actually exists and is identifiable. 
Nevertheless, they can offer an objective foundation for guiding the architectural design, at least as an initialization of an empirical or autoML-type search \cite{salehin2024automl}.

A structuring choice concerns the distribution of the number of neurons across layers.  
It is customary to select hidden layers of equal width, both for reasons of readability and in order to benefit from the theoretical universal approximation property of a sufficiently wide and deep MLP with constant width.  
In our study, however, such a choice would risk exacerbating the probable non-convexity of the loss function, limit identifiability by enforcing a high-dimensional representation, and would not allow verification of the previously stated sieve condition.  
We therefore suggest adopting a decreasing pyramidal architecture, with a restricted number $L$ of hidden layers. Additional constraints are to have $m_1\geq d$, to avoid successive deep layers with a same small number of neurons (no plateaus) and to avoid a last hidden layer with just one neuron, to avoid the non-collinearity of the two heads (Assumption \ref{assump:distinct_heads}).  The architectural construction can be formalized as follows. It balances a {polylogarithmic depth} $L$ with a {sub-polynomial width} $m_{\max}$ (of order $n^{1/6}/\sqrt{\log n}$), keeping the model expressive while ensuring the asymptotic control required by estimation theory.

\begin{proposition}[Architecture proposal]\label{prop:archi}
Set $n_{\min}\in\N^*$ and define $K_0=2d^2(\log n_{min})n^{-1/3}_{\min}$. Consider 
some fixed $K\geq K_0$ and $0<\rho<1$.
Consider a pyramidal width profile with geometric decay defined by
$$
\left\{
\begin{array}{lll}
m_1 &=& m_{\max} \;=\; \left\lceil \sqrt{\frac{K}{2}}\;\frac{n^{1/6}}{\sqrt{\log n}}\right\rceil, \\
m_\ell &= & \max \ \!\Bigl\{2,\ \min\bigl(\,\lceil m_1\,\rho^{\,\ell-1}\rceil,\ m_{\ell-1}-1\,\bigr)\Bigr\},\quad \ell\ge 2.
\end{array}\right.$$
For some fixed $0<\tau<1$, define the neural network depth
\[
L \;=\; \min\Bigl\{\,\big\lceil(\log n)^{\tau}\big\rceil,\ \ 1+(m_{\max}-2)\Bigr\}.
\]
Then, for $n>n_{\min}$ and $m_1>d$,   $(m_\ell)_{\ell=1}^{L_n}$ is strictly decreasing with $m_\ell\ge 2$ and Assumption~6 (sieve rate) holds.
\end{proposition}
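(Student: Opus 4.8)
The plan is to reduce the statement to four elementary checks, all for $n>n_{\min}$: (i) $m_\ell\ge 2$ for every $\ell$; (ii) $(m_\ell)_{\ell=1}^{L}$ is strictly decreasing; (iii) $p_n=o(n^{1/3})$; (iv) $L=o(n^{2/3})$. Claim (i) is immediate since the recursion has the form $m_\ell=\max\{2,\cdot\}$ and $m_1=m_{\max}\ge 2$. I would also record at the outset that $K\ge K_0$ is exactly calibrated so that $m_1\ge d$ at $n=n_{\min}$, and $m_1>d$ for $n>n_{\min}$: indeed $m_1\ge\sqrt{K/2}\,n^{1/6}/\sqrt{\log n}\ge d\,\big((n^{1/3}/\log n)/(n_{\min}^{1/3}/\log n_{\min})\big)^{1/2}$, which is $\ge d$ by the choice of $K_0$ and $>d$ for $n>n_{\min}$ because $t\mapsto t^{1/3}/\log t$ is increasing once $t>e^{3}$; so the hypothesis $m_1>d$ is consistent with, indeed forced by, the setup. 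Finally, it is immediate from the recursion that $m_\ell\le m_{\ell-1}$ for all $\ell$ (and $m_\ell\le m_{\ell-1}-1$ as soon as $m_{\ell-1}\ge 3$), so in particular $m_\ell\le m_{\max}$ throughout.

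For (iii) and (iv) I would count the parameters of the architecture of \S\ref{sec:nn-weib} layer by layer: $O(m_1 d)$ in the first layer, $O(m_\ell m_{\ell-1})$ in each subsequent hidden layer, and $O(m_{L-1})$ in the two heads (a fixed constant factor more with the mini-MLP heads of fixed width $r$). Using $m_\ell\le m_{\max}$ this gives $p_n\le L\,m_{\max}(m_{\max}+1)+O(d\,m_{\max})$. Since $m_{\max}\le\sqrt{K/2}\,n^{1/6}/\sqrt{\log n}+1=(1+o(1))\sqrt{K/2}\,n^{1/6}/\sqrt{\log n}$, we get $m_{\max}^2\le(1+o(1))\tfrac K2\,n^{1/3}/\log n$; combined with $L\le\lceil(\log n)^{\tau}\rceil\le(\log n)^{\tau}+1$ this yields $p_n=O\!\big(n^{1/3}(\log n)^{\tau-1}\big)=o(n^{1/3})$ because $\tau<1$, and $L=O\!\big((\log n)^{\tau}\big)=o(n^{\varepsilon})$ for every $\varepsilon>0$, in particular $o(n^{2/3})$. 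Hence Assumption~\ref{assump:gc} holds; this step is pure bookkeeping.

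The substantive point is (ii). Since $(m_\ell)$ is non-increasing and drops by at least $1$ whenever the current value is $\ge 3$, it is strictly decreasing on $\{1,\dots,L\}$ as soon as $m_{L-1}\ge 3$. To prove $m_{L-1}\ge 3$ I would establish, by induction on $\ell$, the exact identity $m_\ell=\lceil m_1\rho^{\ell-1}\rceil$ in the ``geometric phase'': as long as the genuine per-step decay satisfies $m_1\rho^{\ell-2}(1-\rho)\ge 1$, ceiling arithmetic ($\lceil a\rceil\ge a\ge b+1>\lceil b\rceil$ with $a=m_1\rho^{\ell-2}$, $b=m_1\rho^{\ell-1}$) gives $\lceil m_1\rho^{\ell-1}\rceil\le\lceil m_1\rho^{\ell-2}\rceil-1=m_{\ell-1}-1$, so the inner $\min$ selects the geometric term, while $\lceil m_1\rho^{\ell-1}\rceil\ge 2$ keeps the outer $\max$ inactive. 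Both conditions hold for all $\ell\le\ell^{\dagger}$, a threshold of order $\log m_1/\log(1/\rho)$; and since $\log m_1=\tfrac16\log n-\tfrac12\log\log n+O(1)$ with $\rho$ fixed, $\ell^{\dagger}=\Theta(\log n)$. As $L-1\le\lceil(\log n)^{\tau}\rceil-1=o(\log n)$ (again using $\tau<1$), for $n$ large $L-1\le\ell^{\dagger}$, whence $m_{L-1}=\lceil m_1\rho^{L-2}\rceil\ge m_1\rho^{L-2}=\exp\!\big(\log m_1-(L-2)\log(1/\rho)\big)\to\infty$, so in particular $m_{L-1}\ge 3$. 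In the complementary ``slow-decay'' situation, where $m_1(1-\rho)<1$ and the $\min$ with $m_{\ell-1}-1$ forces the width down by exactly $1$ per layer, one instead gets $m_{L-1}\ge m_{\max}-(L-2)\ge 3$ directly from the cap $L\le 1+(m_{\max}-2)$; the two caps in the definition of $L$ are there precisely to cover these two regimes.

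The hard part is exactly this monotonicity argument: it rests on a rate race between the depth $L\asymp(\log n)^{\tau}$ and the length $\ell^{\dagger}\asymp\log n/\log(1/\rho)$ of the pure geometric phase, whose implied constant deteriorates as $\rho\to 0$. Consequently the strict-decrease conclusion genuinely needs $n_{\min}$ large relative to the fixed constants $\rho$ and $\tau$ (equivalently, $\rho$ not too small); it cannot hold uniformly over $\rho\in(0,1)$ for a fixed $n_{\min}$. Everything else — claim (i) and the sieve-rate bounds (iii)--(iv) — is unconditional (asymptotic) and uses only $m_\ell\le m_{\max}$ and $\tau<1$.
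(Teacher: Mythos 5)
Your proof is correct and follows essentially the same route as the paper's: the sieve-rate bounds come from the same parameter count $p_n=O(m_{\max}^2 L)$ combined with $m_{\max}^2=O(n^{1/3}/\log n)$ and $L=O((\log n)^{\tau})$, and the strict-decrease claim reduces in both arguments to the observation that the recursion drops by at least one whenever the current width is at least $3$. Your treatment of that last step is in fact more careful than the paper's, which only invokes the cap $L\le 1+(m_{\max}-2)$ (sufficient when the decrement is exactly one per layer but not when the geometric term bites early); your explicit race between $L\asymp(\log n)^{\tau}$ and the length $\asymp\log n/\log(1/\rho)$ of the geometric phase, and your remark that the conclusion is genuinely asymptotic in $n$ relative to the fixed $\rho$ and $\tau$, close a gap the paper leaves implicit.
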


Tables \ref{tab:structural_choices-1} and \ref{tab:structural_choices-2} report representative architectural choices derived from the preceding proposition, for the cases $d=10$ and $d=37$ covariates, respectively. The ratio $n/p_n$ serves as a heuristic measure of the sample size needed to support the estimation of an individual component of $\theta$.

To preserve expressiveness in the first layers of the network, we suggest to use $\rho = 1/2$. 
First, this avoids aggressive bottlenecks by keeping adjacent widths within a factor of two, which is
consistent with norm-based generalization bounds that depend on products of layer operator norms (large width imbalances tend to inflate these terms) \cite{BartlettFosterTelgarsky2017}.
Second, with our depth choice $L=\lceil(\log n)^{\tau}\rceil$ ($0<\tau<1$) and $m_1\asymp n^{1/6}/\sqrt{\log n}$,
a mild decay ensures the last hidden width remains non-degenerate asymptotically:
\[
m_{L}\;\gtrsim\; \frac{n^{1/6}}{\sqrt{\log n}}\;\rho^{\,L_n-1}
\;\ge\; \frac{n^{1/6}}{\sqrt{\log n}}\;2^{-(\log n)^{\tau}}
\;\xrightarrow[n\to\infty]{}\;\infty,
\]
so the lower truncation (e.g., $m_\ell\ge 2$) becomes inactive for large $n$.
Empirically, architectures that do create very narrow bottlenecks must linearize them to avoid loss of representational
power, which further cautions against overly aggressive shrinkage and
supports a mild decay  \cite{SandlerHowardZhuZhmoginovChen2018}.

Besides, following standard practice, we fix $\tau=\tfrac12$, so that $L=\lceil(\log n)^{1/2}\rceil$. This keeps the depth polylogarithmic, which is suitable well within regimes where $\log n$ depth already
suffices for statistical optimality and approximation (see \cite{FarrellLiangMisra2021,Yarotsky2017}) 
and matches the ubiquitous $\sqrt{\log n}$ factors in nonasymptotic generalization bounds
(e.g., \cite{BoucheronLugosiMassart2013,Wainwright2019}). 


\begin{table}[h]
\centering
\caption{Architectures with $d=10$, $\rho=\tau=0.5$, $K=128$. Values of $p_n/n$ are rounded.}
\label{tab:structural_choices-1}
\begin{tabular}{rcrc}
\toprule
$n$ & $L$ &  Hidden layer widths & $n/p_n$ \\
\midrule
$5\cdot 10^2$ & $3$ &  $10\text{-}5\text{-}3$ & $3$ \\
$10^3$        & $4$ &  $10\text{-}5\text{-}3\text{-}2$ & $5$ \\
$10^4$        & $4$ &  $13\text{-}7\text{-}4\text{-}2$ & $36$ \\
$10^5$        & $5$ &  $17\text{-}9\text{-}5\text{-}3\text{-}2$ & $238$ \\
$10^6$        & $5$ &  $22\text{-}11\text{-}6\text{-}3\text{-}2$ & $1689$ \\
\bottomrule
\end{tabular}
\end{table}

\begin{table}[h]
\centering
\caption{Architectures with $d=37$, $\rho=\tau=0.5$, $K=2030$. Values of $p_n/n$ are rounded.}
\label{tab:structural_choices-2}
\begin{tabular}{rcrc}
\toprule
$n$ & $L$ &  Hidden layer widths & $p_n/n$ \\
\midrule
$5\times 10^3$ & $2$ &  $46\text{-}23$ & $3$ \\
$8\times 10^3$ & $2$ &  $48\text{-}24$ & $4$ \\
$9\times 10^3$ & $2$ &  $49\text{-}25$ & $5$ \\
$8\times 10^3$ & $3$ &  $48\text{-}24\text{-}12$ & $3$ \\
$9\times 10^3$ & $3$ &  $49\text{-}25\text{-}13$ & $3$ \\
$9\times 10^3$ & $4$ &  $49\text{-}25\text{-}13\text{-}7$ & $3$ \\
$10^4$        & $5$ & $49\text{-}25\text{-}13\text{-}7\text{-}4$ & $3$ \\
$10^5$        & $6$ &  $64\text{-}32\text{-}16\text{-}8\text{-}4\text{-}2$ & $19$ \\
$10^6$        & $6$ & $86\text{-}43\text{-}22\text{-}11\text{-}6\text{-}3$ & $120$ \\
\bottomrule
\end{tabular}
\end{table}

\subsection{Learning with additional regularity penalties}\label{sec:learning}

Some of the regularity constraints imposed on the neural network based on the preceding theoretical results may be difficult to enforce directly in practice. An alternative  approach, that let more versatility and expressing power to the network, consists in adding to the cross-entropy loss $L_n(\theta)$ a set of penalization terms that encourage the training procedure to estimate $\theta$ within a subset of $\Theta$ satisfying these constraints \cite{terven2025comprehensive,bischof2025multi}. 
The following paragraphs provide a discussion of these 
regularization penalties.

\subsubsection{Loss function for Assumption \ref{assumption:risk_monotonicity}   }\label{subsection:positive_weigthing}

As mentioned in $\S$ \ref{sec:consistency_survival}, a penalty term can be introduced to encourage the positivity of a subset of the weights. Gupta \textit{et  al.} \cite{gupta2019incorporatemonotonicitydeepnetworks} propose a point‑wise loss for this purpose, which can be adapted to our setting as follows. For each couple $(i,j)$ defining the $j$th mission of the $i$th vehicle, consider the partial gradients, evaluated with automatic differentiation:
\[
g_{i,j,k}^{(\beta)}=\frac{\partial\beta(\theta,\bX^{(i)}_j)}{\partial x^{(i)}_{j,k}},
\quad
g_{i,j,k}^{(\eta)}=\frac{\partial\eta(\theta,\bX^{(i)}_j)}{\partial x^{(i)}_{j,k}}, \quad k\in\{1,\ldots,d\}.
\]
For every sample we measure the violation with respect to the required sign of
each derivative, when $x^{(i)}_{j,k}\in \bX_{o_a}$:
\[
\Delta_i^{(\beta)}(\theta)=
\sum_{x^{(i)}_{j,k}\in \bX_{o_a}}\bigl[-g_{i,j,k}^{(\beta)}\bigr]_+,
\qquad
\Delta_i^{(\eta)}(\theta)=\sum\limits_{i=1}^{n(i)}
\mathbf \1_{\!\bigl\{\beta(\theta,\bX^{(i)}_j)>\beta_0\bigr\}}
\sum_{x^{(i)}_{j,k}\in \bX_{o_a}}\bigl[g_{i,j,k}^{(\eta)}\bigr]_+,
\]
where \([a]_+=\max(0,a)\). This transformation converts only the negative partial derivatives (indicators of monotonicity violations) into a strictly positive penalty proportional to their magnitude, while leaving the contribution null whenever the constraint is fulfilled. Consecutively, the average penalty proposed by \cite{gupta2019incorporatemonotonicitydeepnetworks}, over a  minibatch \(B\subset\{1,\dots,n\}\), writes as
\[
\mathcal L_{\text{mono}}(\theta)=
\frac{1}{|B|}\sum_{i\in B}
\Bigl(
\Delta_i^{(\beta)}(\theta)+\Delta_i^{(\eta)}(\theta)
\Bigr).
\]

\subsubsection{Loss functions for Assumption \ref{assump:distinct_heads}}\label{subsection:orth_cov}

As introduced before, an alternative to the architectural modification for respecting Assumption \ref{assump:distinct_heads} (e.g., to limit the number of free parameters) is: (a) to force the non-collinearity between the two heads (\ref{eq:A7-1}), by minimizing
\begin{eqnarray*}
L_{\text{orth}}(\theta) & = & 
\left(\frac{\langle W_\beta,\,W_\eta\rangle}{ \|W_\beta\|_{2}\,\|W_\eta\|_{2}}\right)^2, 
\end{eqnarray*}
such that this ratio be lower to 1, and: (b) to keep the shared representation $h_{L-1}$ from collapsing onto a low-dimensional subspace ; being implemented using batch covariance, it encourages every coordinate to carry variance and discouraging strong correlations:
{\ttfamily
\begin{enumerate}[leftmargin=10pt,parsep=0pt]
\item Collect the batch activations of the $L-1$ layer:
$\bH =  (h^T_1,\ldots,h^T_m)\in\R^{m \times m_{L-1}}$.
\item Center them:
$\tilde{\bH}  =  H-\mathbf{1}_m\bar{h}^T_m$  with $\bar{h}_m=\frac{1}{m}\sum_{k=1}^m h_i$.
\item Compute the empirical covariance $C=\frac{1}{m-1}\tilde{\bH}^T\tilde{\bH}$
\item Compute the DeCov  regularity \cite{xiong2016regularizing}:
\begin{eqnarray*}
L_{\text{cov}}(\theta) & = & \frac{1}{2}\left(\|C\|^2_F - \|\mbox{diag}(C)\|^2_2\right)
\end{eqnarray*}
\end{enumerate}
}
\noindent which penalizes the crossed covariances. It could be complemented by the OrthoReg penaly \cite{rodriguez2016regularizing} which provides a simpler control on the non-collinearity of weights, without depending on the batch size.  An alternative approach  is the EDM regularization \cite{gu2018regularizing}, which favorizes diversity between groups of unities rather than between all unities.

\subsubsection{Loss function to enforce statistical alignment} \label{subsubsec: loss function alignment}

Because the negative log-likelihood $L_n(\theta)$ is probably highly non-convex once the Weibull
parameters are modelled by neural networks, several distinct couples $(\eta,\beta)$ can achieve
almost the same likelihood; as explained before, global uniqueness is therefore not guaranteed and flat “ridges’’ can appear in the optimisation landscape. To break these plateaus we suggest to supplement $L_n(\theta)$ with a
moment-alignment term that forces the network to reproduce the conditional mean
mission duration. This approach revealed to be successful by \cite{isola2017image,lim2018disease,ren2019deep,zhang2025transformerlsr}, among others.

Let $\widehat T^{(i)}_j(\theta,\bX^{(i)}_j)=\eta(\theta,\bX^{(i)}_j)\,\Gamma\!\bigl(1+\beta(\theta,\bX^{(i)}_j)^{-1}\bigr)$ be the
first Weibull moment, for $i=1,\ldots,N$ and $j=1,\ldots,n(i)$. Given the observations $(z^{(i)}_j)_{i,j}=(\min(t_i,y_i))_{i,j}$, the usual quadratic loss can be written as
\[
\frac{1}{N}\sum_{i=1}^{N}\frac{1}{n(i)}\sum_{j=1}^{n(i)}
\left((z^{(i)}_j-\widehat T^{(i)}_j\left(\theta,\bX^{(i)}_j\right)\right)^2.
\]
However, to account for the presence of right-censoring observations, we modify this statistic by considering  Inverse-Probability-of-Censoring Weights (IPCW ; see details in the Supplementary Material). This technique adjusts the contribution of each observation using weights $w^{(i)}_j$ derived from the Kaplan–Meier estimator of the censoring distribution. These weights  correspond to the jump of the Kaplan–Meier estimator at time $T_i$ \cite{IPCWregression}. Then we can define   
\[
L_{\mathrm{MSE}}(\theta)=\frac{1}{N}\sum_{i=1}^{N}\frac{1}{n(i)}\sum_{j=1}^{n(i)}
w^{(i)}_j\left(z^{(i)}_j-\widehat T^{(i)}_j\left(\theta,\bX^{(i)}_j\right)\right)^2.
\]
This quadratic loss function is convex in $\widehat T_i$ and therefore can act as a
tie-breaker: among likelihood-equivalent solutions it selects the one that aligns the
model with the empirical first moment, improving identifiability and guiding the optimiser out
of flat directions. 

\subsection{Training and testing} 
\label{subsec:performance_comparisons}

As is customary in machine learning workflows, the data are at least divided into training and test sets \cite{hastie2009elements}. In our situation, a \emph{time-based split} is the most appropriate choice. Indeed, since we work with historical sequences, our objective is to dynamically estimate the Weibull parameters through a neural network that captures the temporal evolution of vehicle degradation. Preserving the chronological order of the data is therefore essential. This can be done as follows: for a vehicle $i \in \{1,\ldots,N\}$ and missions $j \in \{1,\ldots,n(i)\}$, we partition the data into:
\begin{itemize}
    \item a \textbf{training set} consisting of $\mathbf{z}^{(i)} = \{z^{(i)}_1, \ldots, z^{(i)}_{n(i-1)}\}$ and $\mathbf{X}^{(i)} = \{\mathbf{X}_1^{(i)}, \ldots, \mathbf{X}_{n(i-1)}^{(i)}\}$;
    \item a \textbf{test set} composed of the pairs $\left(z^{(i)}_{n(i)}, \mathbf{X}_{n(i)}^{(i)}\right)$.
\end{itemize}
Model parameters are classically optimized with the Adam optimizer \cite{kingma2014adam,barakat2021convergence}. On the basis of numerous preliminary experiments, a multi-start approach and several specific features for initializing optimization algorithms were implemented and tested in conjunction with numerical applications in order to promote the success and reproducibility of the estimates. These aspects are described in greater detail in the Supplementary Material. 

The tests are conducted as follows: given the pointwise estimator $\hat{\theta}$ resulting from training,  each set of covariates $\mathbf{X}_{n(i)}^{(i)}$, if $\delta^{(i)}_j=1$ we compute the trained conditional Weibull cumulative distribution function on the latest observation: 
\begin{eqnarray}
\xi^{(i)}_j \ = \ \Pp\left(T \leq z^{(i)}_{n(i)}| \mathbf{X}_{n(i)}^{(i)}\right) & = & 1-\exp\left(-\left(\frac{z^{(i)}_{n(i)}}{\eta\left(\hat{\theta},\mathbf{X}_{n(i)}^{(i)}\right)}\right)^{\beta\left(\hat{\theta},\mathbf{X}_{n(i)}^{(i)}\right)} \right)\label{eq:p-value}
\end{eqnarray}
Beyond checking that such probabilities are not extreme, under the independence assumptions stated in Section~\ref{sec:data-notations}, and under the assumption that the trained model is correct, the sample of $\xi^{(i)}_j$ should follow a uniform distribution on $(0,1)$. 

\section{Numerical experiments}\label{sec:experiments}

This section implements instances of the proposed model. We first conduct a simulation study to illustrate the proper training of the model on data consistent with its assumptions. We then perform a comparative study on the real datasets that motivated this work, where the model is challenged by the DPWTE and WTTE-RNN models, briefly described in the introduction. Technical details on the latter two models are provided in the Supplementary Material.

\subsection{Simulated experiments}\label{subsec:simulations}

In this first analysis framework, we aim to verify the proper behavior of the training procedure when the datasets are consistent with the chosen model. A simulation dataset is named $\mathrm{SIM}-d_a-d_n-n_s-L_s-\Delta$ 
where $d_a$ is the dimension, $d_n$ is the number of numerical covariates, 
$n_s$ is the total number of missions, $L_s$ is the number of hidden layers and $\Delta$ is the overall censoring rate. The covariates are chosen among the true known covariates for Fleet \#1 (see Table \ref{tab:data.description}) to ensure physical relevance through time of the simulated datasets.  The simulation values for 
Weights $\bW$  and biases $\bb$ were chosen after preliminary tests, such that the resulting Weibull simulations are not irrelevant with the observed durations described in Section \ref{sec:data-notations}. In Algorithm 1,  they are simulated with Gaussian distributions. The unusual disparity in magnitude between the Gaussian parameters arises from the deliberate omission of batch-normalization (BN) steps within the network layers. As detailed in the Supplementary Material (\S 2.2), incorporating BN tends to bias training toward reducing the RMSE, whereas its absence allows the optimization to emphasize the contrast between the parameters $(\eta,\beta)$, which is the primary objective.

\begin{algorithm}[H]\label{algo:theta_simulation}
\caption{Parameter structure simulation}
\begin{algorithmic}
\Require{$d_a, d_n, n_s, L_s \rightarrow \ \text{selected covariates} \ \bX$}
\begin{enumerate}
\item Given $n_s$ and $L_s$, select an appropriate architecture according to Tables \ref{tab:structural_choices-1}-\ref{tab:structural_choices-2}
\item Sample $\theta_{\text{\tiny sim}}=(\bW,\bb)$ as $\left\{\begin{array}{lll}
\bW & \sim & {\cal{N}}_{{>0}}(0.1,0.1) \ \ \ \ \text{(positive truncated Gaussian)} \\
\bb & \sim & {\cal{N}}(10,5). 
\end{array}\right.$
\end{enumerate}
\end{algorithmic}
\end{algorithm}

\noindent The number $N_s$ of vehicles is chosen as $N_s= N \frac{n_s}{n}$ where $n$ is the number of true vehicles in Fleet \#1 (see Section \ref{sec:data-notations}). Then, given $N_s$, the numbers $n_{sm}(i)$ of simulated mission per vehicle are randomly selected using a balanced multinomial distribution
\begin{eqnarray*}
n_{sm}(1),\ldots,n_{sm}(N_s) & \sim & {\cal{M}}\left(n_s,\left(\frac{1}{N_s},\ldots,\frac{1}{N_s}  \right)\right).
\end{eqnarray*}
Finally, the right-censoring values are randomly distributed and chosen as $\tilde{\alpha}=90\%$ quantiles of each selected conditional Weibull distribution. This process leads to the following dataset simulation algorithm.  

\begin{algorithm}[H]
\caption{Dataset simulation}
\begin{algorithmic}
\Require $\theta_{\text{\tiny sim}}$, $N_s$, $\Delta$, $\{n_{sm}(1),\ldots,n_{sm}(N_s)\}$, $\bX$
\For{$i = 1,\ldots,N_s$}
  \For{$j = 1,\ldots,n_{sm}(i)$}
  \State \textbf{Sample} $t^{(i)}_j|\theta_{\text{\tiny sim}}, \bX^{(i)}_j \;\sim\; {\cal{W}}\left(\eta \ \!\big(\theta_{\text{\tiny sim}}, \bX^{(i)}_j\big) \ , \ \beta \ \!\big(\theta_{\text{\tiny sim}}, \bX^{(i)}_j\big) \right)$
    \State \textbf{Select} randomly a censored vehicle tra

    £jectory:
    \State Sample uniformly $U_{i,j} \sim \mathcal{U}[0,1]$
    \If{$U_{i,j} < \Delta$}
      \State Set $\delta^{(i)}_j \gets 0$ and replace $t^{(i)}_j$ by
      $
      y^{(i)}_j
      \;=\;
      \eta \ \!\big(\theta_{\text{\tiny sim}}, \bX^{(i)}_j\big)\,
      \left[-\log\!\big(1-\tilde{\alpha}\big)\right]^{
        1/\beta \ \!\big(\theta_{\text{\tiny sim}},\bX^{(i)}_j\big)}
      $
    \Else
      \State Set $\delta^{(i)}_j \gets 1$
    \EndIf
  \EndFor
\EndFor
\end{algorithmic}
\end{algorithm}

A summary of simulated datasets is  provided in Table \ref{tab:simulated_config_results}. These choices yield  simulations that are reasonably realistic relative to the empirical data, with an median $L_1$ relative error of 10.56\% with respect to these empirical data (Figure \ref{fig:comparison-simulation}). 




\begin{figure}[hbtp]
    \centering
    \includegraphics[width=0.7\linewidth]{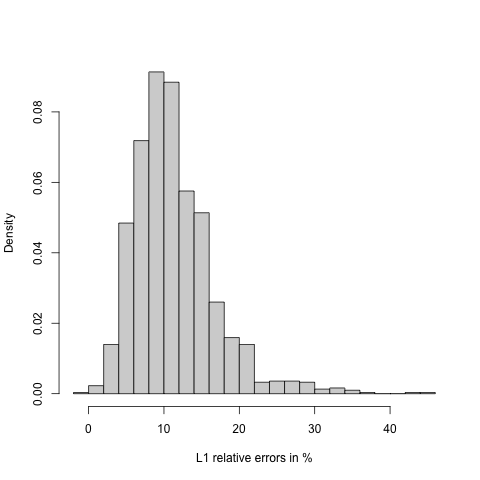}
    \caption{Distribution in percentage of  $L_1$ relative errors  between simulated and real mission durations.} 
    \label{fig:comparison-simulation}
\end{figure}



For a simulated parameter $\theta_{\text{\tiny sim}}$ and  corresponding replicated simulated datasets $\text{SIM}_k-d_a-d_n-n_d-L-\Delta$, for $k=1,\ldots,K$, the accuracy properties of the statistical estimates $\{\hat{\theta}^{(k)}\}_k$ are summarized by two macroscopic, complementary indicators: 
\begin{enumerate}
\item The symmetric median relative error across replications, or symmetric median absolute percentage (sMdAPE) \cite{hyndman2006another},
\begin{eqnarray}
\mathrm{sMdAPE}_{\text{med}}=\mathrm{median}_{\,k=1..K}\ \mathrm{sMdAPE}^{(k)} \label{eq:sMdAPE}
\end{eqnarray}
where
\[
\mathrm{sMdAPE}^{(k)} \;=\;
\begin{cases}
\displaystyle \frac{\,\|\hat\theta^{(k)}-\theta_{\text{\tiny sim}}\|_2}{\|\hat\theta^{(k)}\|_2+\|\theta_{\text{\tiny sim}}\|_2}, & \text{if } \|\hat\theta^{(k)}\|_2+\|\theta_{\text{\tiny sim}}\|_2>0,\\[9pt]
0, & \text{otherwise.}
\end{cases}
\]
The statistic (\ref{eq:sMdAPE}) is dimensionless, bounded in $[0,1]$, well defined when vectors contain zeros, more robust than the (symmetric) MAPE, and quantifies the typical relative magnitude error of the estimator.
\item The median directional alignment (median cosine similarity) defined by 
\begin{eqnarray}
\cos_{\text{med}}=\mathrm{median}_{\,k=1..K}\ \cos\phi^{(k)}, \label{eq:mcosin}
\end{eqnarray}
where 
\[
\cos\phi^{(k)} \;=\;
\begin{cases}
\displaystyle \frac{\hat\theta^{(k)}\!\cdot\theta_{\text{\tiny sim}}}{\|\hat\theta^{(k)}\|_2\,\|\theta_{\text{\tiny sim}}\|_2}, & \text{if } \|\hat\theta^{(k)}\|_2>0 \text{ and } \|\theta_{\text{\tiny sim}}\|_2>0,\\[10pt]
1, & \text{if } \|\hat\theta^{(k)}\|_2=\|\theta_{\text{\tiny sim}}\|_2=0,\\[4pt]
0, & \text{otherwise.}
\end{cases}
\]
Again, the statistic (\ref{eq:mcosin}) is dimensionless and robust to zero vectors.  It lies in $[-1,1]$ and measures the directional agreement between estimated and true (simulation) parameters.
\end{enumerate}
However, since this estimation is carried out in high dimension and may suffer from the lack of underlying convexity (resulting, in particular, in non-uniqueness of the estimated weights), it is also relevant to compute statistic~(\ref{eq:p-value}) in order to assess whether the training of the model yields an estimation sufficiently accurate to account for the most recent mission duration values generated. 
Since we aim to account for the full predictive uncertainty, the statistic~(\ref{eq:p-value}) is in practice replaced by $\E_{\eta,\beta}[\xi^{(i)}_j]$ 
where the expectation is taken with respect to the Monte Carlo Dropout (MDC) distribution of  $\eta$ and $\beta$, used to estimate predictive uncertainty (See details on Supplementary Material): 
\begin{eqnarray*}
\E_{\eta,\beta}\left[\xi^{(i)}_j\right] & \simeq &   1 - \frac{1}{M}\sum\limits_{m=1}^M\exp\left(-\left(\frac{z^{(i)}_{n(i)}}{\eta_m\!\left(\hat{\theta},\mathbf{X}_{n(i)}^{(i)}\right)}\right)^{\beta_m\!\left(\hat{\theta},\mathbf{X}_{n(i)}^{(i)}\right)}\right)
\end{eqnarray*}
where $(\eta_m,\beta_m)_{k=1,\ldots,M}$ are simulated by MCD. Results are aggregated using the median over all missions $(i,j)$. They are summarized on Table \ref{tab:simulated_config_results}. The observed rate of monotonicity violations after training between specific covariates and survival, measured through negative partial derivatives in ${\cal{L}}_{\text{mono}}$ (see $\S$ \ref{subsection:positive_weigthing}), was found to be lower than 3\% for all experiments. \\




\begin{table}[H]
    \centering
    \caption{Simulation dataset configurations and estimation results, based on $K=20$ replicated datasets for each 
    $\text{SIM}-d_a-d_n-n_s-L-\Delta$ configuration.  The censoring rate is $\Delta=5\%$ for all simulated datasets. $M=1000$ Monte Carlo 
    Dropout (MCD) simulations are used for estimating 
    $\E_{\eta,\beta}[\xi^{(i)}_j]$.  Results (three rightmost columns) are averaged on the $K$ datasets and the standard 
    deviation is provided between parentheses.}
    \label{tab:simulated_config_results}
    \begin{tabular}{llllllll}
        \hline
         Dim. $d_a$ & Nb. ordinal  & Mission  & Nb. hidden  &  $\mathrm{sMdAPE_{med}}$ & $\cos_{med}$  &$\text{med}_{i,j}\E_{\eta,\beta}[\xi^{(i)}_j]$ \\
        & covariates $d_n$ &  number $n_s$ & layers $L_s$ &  \\
        \hline
         14 & 2 & 8, 900  & 5 & 0.22 (0.09)  & 0.85 (0.12) & 0.72 ($2. 10^{-4}$) \\
         8 & 2 & 8, 900  & 5 & 0.17 (0.04) & 0.77 (0.08) & 0.58 ($10^{-4}$) \\
         5 & 2 & 95, 000  & 5 & 0.21 (0.03) & 0.73 (0.13) & 0.63 ($3.10^{-4}$) \\
         8 & 2 & 95, 000 & 5 & 0.28 (0.05) & 0.87 (0.18) & 0.47 ($9. 10^{-5}$) \\
         5 & 2 & 95, 000 & 4 & 0.14 (0.04) & 0.82 (0.14) & 0.66 ($10^{-4}$) \\
         5 & 2 & 1, 700 & 4 & 0.39 (0.04) & 0.78 (0.11) & 0.88 ($2. 10\mathrm{e}^{-4}$) \\
        \hline
    \end{tabular}
    \label{tab: simulation configuration_results}
\end{table}

The results illustrate that in this idealized situation, the training of WTNN with coherent data generally lead to accurate results allowing relevant predictions. The sample size relative to the number of covariates, together with the architectural choices, have a significant impact on the model's ability to be correctly estimated from coherently generated data. Adequate estimation corresponds to a pair $(\mathrm{sMdAP}, \cos)$ close to $(0,1)$. Larger sample sizes and lower model dimensionality yield better estimation. These ideal values are never attained, indicating a likely limited full identifiability of the model. This outcome is expected, as it is linked to the presumed lack of convexity of the global loss function used for training. The predictive ability of these models remains reasonable, with the values of $\mathrm{med}_{i,j}\,\mathbb{E}_{\eta,\beta}[\xi^{(i)}_j]$ lying in the non-extreme regions of $(0,1)$.



\subsection{Experiments conducted on real datasets}

We compare here the training and predictive 
performances of WTNN, DPWTE, and WTTE-RNN (introduced in the Introduction and detailed in the Supplementary Material) on three real-world military vehicle trajectory datasets described in Table \ref{sec:consistency_survival}, using the metrics defined in Section~\ref{subsec:performance_comparisons}.  
\subsubsection{Performance indicators}

Several performance indicators are 
employed 
to compare the predictive ability of different instances of the competing models (including variations of the WTNN model). £
They are selected according to their ability to assess both discrimination and calibration.
\begin{itemize}
\item \emph{Discrimination} refers to the model's ability to correctly distinguish and classify different categories. For instance, it evaluates the capacity to correctly identifying  whether a vehicle will successfully complete its mission or not (class 0 or 1).
\item \emph{Calibration}, on the other hand, assesses how well the predicted probabilities align with observed outcomes. In other words, it compares predicted versus real probabilities.
\end{itemize}
Among the most widely used performance indicators used in both medical \cite{PerformanceIndicator, performanceIndicatorSante1, performanceIndicatorSante2} or mechanical applications \cite{DPWTE,DeepHit}, we focus on the following ones (see \cite{naidu2023review} for a review).  For each vehicle $i$ and mission $j$, let us broadly denote $s^{(i)}_j$ the prognostic score such that a \emph{higher} score indicates a \emph{higher risk}
(and therefore a shorter expected time between unavaibilities (or survival time)). In our situation, we use the opposite of the mean predictor $s^{(i)}_j =-\widehat T^{(i)}_j(\theta,\bX^{(i)}_j)$. 
\begin{enumerate}
\item {\bf Time-dependent AUC.} 
The AUC (Area Under the Curve) is well known in machine learning in connection with the ROC
(Receiver Operating Characteristic). The ROC curve plots the true positive rate against the false
positive rate, providing a visual tool to assess whether a model is able to distinguish true positives
from true negatives \cite{bradley1997use}. The area under this curve has the probabilistic
interpretation of the likelihood that the model correctly ranks a randomly chosen positive
observation higher than a negative one. In the context of survival analysis, the AUC becomes {time-dependent}. At a given time $t$,
it quantifies how well the model separates missions that fail before $t$ from those that survive
beyond $t$:
\[
\mathrm{AUC}(t)
=
\mathbb{P}\big( s^{(i)}_j > s^{(k)}_l \;\big|\; z^{(i)}_j < t,\ \delta^{(i)}_j=1;\ z^{(k)}_l > t \big),
\]
that is, the probability that a mission $(i,j)$ will fail before $t$ is assigned a higher risk score
than a mission $(k,l)$ that can still be completed at time $t$.

\item The {\bf C-index}, or {\bf concordance index} \cite{pinto2015outlier}, evaluates discrimination by generalizing the AUC. It measures the proportion of all comparable pairs in which predictions and observed outcomes are concordant: the set of \emph{comparable pairs} is defined as
\[
\mathcal{P} \;=\; \{\,((i,j),(k,l)) \,:\, z^{(i)}_j < z^{(k)}_l \ \text{and}\ \delta^{(i)}_j= 1 \,\}.
\]
That is, mission $(i,j)$ must have experienced a failure event before mission $(k,l)$,
which makes the comparison informative. (Pairs with $z^{(i)}_j = z^{(k)}_l$ or where the earliest mission is censored are non-comparable.)  For predictors trained on censored datasets, the empirical C-index based on Harrell's formulation \cite{Harrell, concordanceindex} is given by 
\[
\widehat{C}
\;=\;
\frac{1}{\lvert \mathcal{P} \rvert}
\sum_{((i,j),(k,\ell))\in \mathcal{P}}
\Big[
\1_{\!\{\,s_{i,j} > s_{k,\ell}\,\}}
\;+\;
\tfrac{1}{2}\,\1_{\!\{\,s_{i,j} = s_{k,\ell}\,\}}
\Big].
\]
It satisfies $\widehat{C} \in [0,1]$, with $\widehat{C}=1$ for perfect discrimination
$\widehat{C}\approx 0.5$ for random (non-informative) rankings, and $\widehat{C}\approx 0$ for worst discrimination. 
\item {\bf Integrated Brier score.} A proper scoring rule, the Brier Score (BS) \cite{BrierArticle} can be used to  evaluate the accuracy of the predicted survival function at a given time $t$ \cite{performanceIndicatorSante1}. Roughly speaking, it answers to the calibration question "how close to the real probabilities are our estimates?".  For each vehicle $i$ and mission $j$, denote $\hat{S}(t\,|\,\bX^{(i)},j,\theta)$ the predicted survival
probability at time $t$ for covariates $\bX^{(i)}$. Lying between 0 (best possible
value) and 1, the BS is defined in our censored context by
 \begin{equation}
   \mathrm{BS}(t) = \frac{1}{n}  \sum_{i,j}  w^{(i)}_j  \left( \1_{\{ z^{(i)}_j > t \}}  - \hat{S}(t\,|\,X^{(i)}_j,\theta)\right)^2,
\end{equation} 
reminding that $n$ is the total number of missions across all vehicles. The Integrated BS (IBS; \cite{graf1999assessment})
\begin{equation*}
    \mathrm{IBS} = \frac{1}{t_\text{max}}\int_{0}^{t_\text{max}} \mathrm{BS}(t) \ d t 
\end{equation*}
allows to assess performance over the time interval $[0, t_{\text{max}}]$.
\end{enumerate}

\subsubsection{Comparisons}

As the DPWTE model is based on a Weibull mixture, several instances of this model were used for this comparative study. The \texttt{DPWTE p1} configuration indicates that the model is trained using a single Weibull distribution, making it directly comparable to the WTTE and WTNN models. With \texttt{DPWTE p10}, the model selects the optimal number of Weibull components from a range of 1 to 10, which typically results in a mixture of 3 to 5 components in our study, as highlighted in Table \ref{tab:Performance_indicator_synthesis}. In this table, the bold values correspond to the best performance for each indicator and each fleet. The bold model names indicate the most performant models for each fleet with respect to a specific indicator. The diversity of performance indicators enables a more nuanced model selection process. Given that the primary objective is to achieve well-calibrated predictions, the IBS appears to be the most appropriate metric. The concordance index and AUC can serve as complementary criteria to support the final decision. Remind that the concordance index only reflects a model’s ability to rank survival times correctly; it does not assess calibration, probability accuracy, or temporal coherence, and therefore cannot, on its own, indicate whether the predicted survival distribution is meaningful.\\





\begin{table}[H]
\centering
\caption{Performance indicators computed for the models in competition over three different vehicle fleets (described on Table \ref{sec:consistency_survival}).}
\label{tab:Performance_indicator_synthesis}
\renewcommand{\arraystretch}{1.2} 
\begin{tabular}{lllll} 
Performance indicator &
 Model  & 
 Fleet 1 & Fleet 2 & Fleet 3 \\
\hline

\multirow{4}{*}{C-Index} & 
                     DPWTE p10 & 0.51 & 0.50 & $ 0.497$ \\ 
                     & DPWTE p1  & 0.50 & 0.49 & $0.508 $\\ 
                     & \textbf{WTTE}  & $ \mathbf{0.58}$ & $\mathbf{0.80} $& $ \mathbf{0.518}$ \\ 
                     & WTNN  & 0.44 & 0.44 & 0.506 \\ 
\\

\multirow{4}{*}{IBS} &  
                     DPWTE p10 & 0.34 & 0.35 & 0.28 \\ 
                    & DPWTE p1 & 0.52 & 0.52 & 0.41 \\ 
                    & \textbf{WTTE} & $ \mathbf{0.04} $ & $ \mathbf{0.05} $ & 0.30 \\ 
                    & \textbf{WTNN} & $ \mathbf{0.04} $ & $ \mathbf{0.05} $ & $ \mathbf{0.10} $\\ 
\\

\multirow{4}{*}{Time-dependent AUC} & 
                    DPWTE p10 & 0.48  & 0.50  & $ \mathbf{0.51}$  \\ 
                    & DPWTE p1 & 0.52 & 0.52 & 0.504 \\ 
                    & WTTE & 0.37 & 0.05 & 0.491 \\ 
                    & \textbf{WTNN} & $ \mathbf{0.57}$ & $ \mathbf{0.58}$ & $\mathbf{0.51}$ \\ 
\\


\multirow{4}{*}{Training time (s)} &  
                    \textbf{DPWTE p10} & $ \mathbf{6}$ & $\mathbf{2}$  & $\mathbf{2}$  \\ 
                    & \textbf{DPWTE p1} & $\mathbf{6}$ & $\mathbf{2}$ & 3 \\ 
                    & WTTE & 2 399 & 55 & 100 \\ 
                    & WTNN & 37 & 9 & $\mathbf{2}$ \\ 
\hline

\end{tabular}
\end{table}

These experiments highlight that the concordance index alone gives a distorted view of performance: WTTE achieves the highest C-index on two fleets, but its calibration is poor and its training time becomes prohibitive for larger datasets. DPWTE performs inconsistently, with mixtures that slightly improve discrimination but fail to produce well-aligned survival probabilities. When calibration is examined using the integrated Brier score, WTNN consistently equals or outperforms WTTE and clearly outperforms all DPWTE variants. This indicates that WTNN not only ranks missions reasonably well, but more importantly, assigns survival probabilities that remain consistent with observed outcomes over time. WTNN also maintains competitive AUC values, showing that its discrimination does not collapse despite the additional structural constraints.

The stability of the WTNN's learning behavior is another interesting advantage. Its optimization remains fast and reproducible across all fleets, while the WTTE becomes unreliable and significantly slower as the sample size increases. As a result, based on this data, WTNN appears to be the better model that offers calibrated predictions, acceptable discrimination, controlled computational cost, and robustness on heterogeneous real-world fleets.





An additional model comparison, in terms of predictive power, consists in assessing how well the models can predict the survival value for the last observation across vehicle trajectories by estimating (\ref{eq:p-value}). Figure \ref{fig:Boxplot_interval} illustrates, for WTNN, how the  survival function behaves when integrated over the predictive uncertainty quantified through MCD. Table \ref{tab:pred_pvalues} then summarizes the quantile thresholds corresponding to the position of each last observation within this survival distribution. These results show that the final observations are consistently better explained by the WTTE and WTNN models, in the sense that they correspond to non-extreme quantile levels. Moreover, WTNN provides greater robustness than WTTE, as reflected by the narrower range of these quantile thresholds.

\begin{figure}[H]
    \centering
    \includegraphics[width=0.85\linewidth]{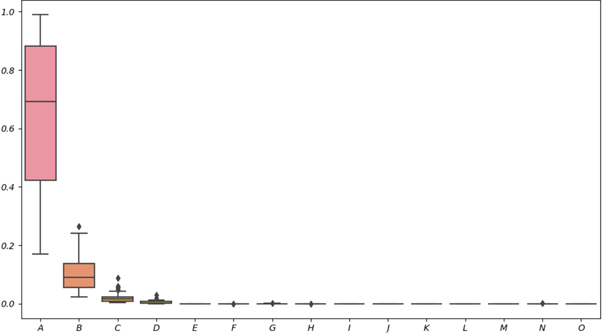}
    \caption{Predictive survival for the WTNN model (estimated for Fleet \# 3).}
    \label{fig:Boxplot_interval}
\end{figure}

\begin{table}[H]
    \centering  
     \caption{Positions (quantile thresholds) within their predictive distributions of last observations per model and fleet. Results are provided as 5\%-95\% intervals over all vehicle missions.}
    \label{tab:pred_pvalues}
\begin{tabular}{llll}
\hline
Model & Fleet 1 & Fleet 2 & Fleet 3 \\
\hline
DPWTE p10 & 2.4\% - 99.8\% & 0.1\% - 99.6\% & 1.1\% - 99.8\% \\
DPWTE p1 & 3.1\% - 96.7\% & 2.5\% - 97.4\% & 2.1\% - 97.2\% \\
WTTE & 18.5\% - 89.3\% & 12.9\% - 92.6\% & 5.8\% - 94.2\%\\
WTNN & 28.9\% - 76.9\% & 23.3\% - 83.1\% & 12.7\% - 88.3\%\\
\hline
\end{tabular}
\end{table}

\section{Discussion}

This work is motivated by the development of an effective lifetime model for military vehicles, based on covariates observed at the system level between successive immobilization periods. Within this censored framework, we introduce a Weibull-based model conditioned on a two-output neural network, termed WTNN, which can be viewed as a generalization of accelerated lifetime models. The network architecture was designed to reflect qualitative prior knowledge on the relationship between certain covariates and Weibull survival. A theoretical analysis further established architectural constraints ensuring sieve consistency. Numerical experiments, both on simulated and real datasets, indicate that enforcing coherence between the regression function and the Weibull structure allows WTNN to perform competitively with approaches originally proposed in the medical survival literature, while adapting to industrial constraints. Theoretical results also show that the problem is statistically well-posed, supporting the use of this method in settings where the sample size $n$ may vary, and in particular, can become much larger.

While we have aimed to propose construction strategies more suited than those found in the existing literature, the technical contribution presented here remains open to refinement through more extensive numerical experiments. These should include both replication of simulation-based studies, an improvement of censored data sampling (for instance following \cite{ramos2024sampling}) and a deeper analysis of training and validation procedures. The architectural insights derived from the theoretical developments of this paper can further guide such investigations. Notably, the theoretical framework can be readily extended to other conditional lifetime models, such as Cox, lognormal, or competing risks models which we discuss in more detail below.  

In a broader perspective, the lack of convexity of the full loss function remains a major challenge in learning, as it may hinder the convergence of optimization algorithms. Potential directions to improve robustness could build on recent advances in the interpretability of these algorithms \cite{bensaid2024convergence}. Addressing this non-convexity may be combined with the need to account for uncertainty through a Bayesian approach. Such a methodology would additionally yield Bayesian estimators of the predictive survival function endowed with a total order \cite{robert2007bayesian}, thereby enabling an effective ranking of vehicles, one of the original motivations of this work. This line of research will be the focus of future contributions.  

Regarding interpretability, it is clear that an essential requirement for the industrial deployment of WTNN, based on a dense neural network, is to provide mechanisms for understanding the network’s behavior \cite{li2022interpretable}. This concern also motivated the theoretical study presented here. Although not directly addressed in this work, we consider it essential to pursue such investigations in future contributions. In particular, we believe that the possibility of using Proposition~\ref{prop:constraints} to interpret monotonicities detected in Weibull survival represents a key strength of our approach, readily transferable to other models, and a first step toward practical interpretability, as stressed in \cite{nguyen2023mononet}. Deeper exploration of architectural choices and their connection to classical reliability quantities should also shed light on potential improvements to the proposed framework.  

Additional avenues for future research to enhance the generalization of our approach include:  
(a) relaxing the strong AGAN assumption by adopting more general frameworks, such as virtual age models (see Introduction), which allow for the explicit modeling of maintenance effects;  
(b) replacing Weibull models with competing risks models (e.g., \cite{bousquet2006alternative}). Unlike mixture-based approaches such as DeepWeiSurv~\cite{bennis2020estimation} and DPWTE~\cite{bennis2021dpwte}, competing risks models provide interpretability by explicitly associating each risk with a distinct failure mode or degradation mechanism~\cite{kalbfleisch2002statistical}, making them well-suited to operational contexts where subsystem-specific or usage-dependent failures are expected. This direction could leverage recent advances in strictly proper scoring rules for censored competing-risks survival data~\cite{alberge2025p52}, which enable consistent probabilistic modeling while supporting stochastic optimization over mini-batches. Building on the core ideas of this paper, and given the high cost and scarcity of industrial data, we suggest developing a shared neural backbone for the parameters of such competing risks models, including classification probabilities. These directions are left for future work.

\bibliographystyle{plain}
\bibliography{biblio}

\appendix
\section{Proofs}

\begin{preuve}[Proposition \ref{prop:constraints}]
Assumption (\ref{hypo:monotonicity_2}) is equivalent to have $\log p(t|\mathbf{X})\downarrow \mathbf{X}^{o_a}$ $\forall t\geq 0$, and equivalently
\begin{eqnarray*}
\hat{\beta}(\theta,\mathbf{X})\left(\log t - \log \hat{\eta}(\theta,\mathbf{X})\right) \uparrow \mathbf{X}^{o_a} \ \ & & \ \forall \theta, \ \forall t\geq 0.
\end{eqnarray*}
In particular, this is true for any $t=\alpha\hat{\eta}(\theta,\mathbf{X})$ with $\alpha>1$. Hence $\hat{\beta}(\theta,\mathbf{X})\uparrow \mathbf{X}^{o_a}$. Furthermore, (\ref{hypo:monotonicity_2}) implies that 
\begin{eqnarray*}
\mathbb{E}[T|\mathbf{X}] & = & \int_{0}^{\infty} p(t|\mathbf{X}) \ dt\ \downarrow \mathbf{X}^{o_a} \\
& = & \hat{\eta}(\theta,\mathbf{X})\Gamma\left(1+1/\hat{\beta}(\theta,\mathbf{X})\right).
\end{eqnarray*}
Consequently, it is necessary that $\hat{\eta}(\theta,\mathbf{X})\downarrow \mathbf{X}^{o_a}$ when $\Gamma(1+1/\hat{\beta}(\theta,\mathbf{X}))\uparrow \mathbf{X}^{o_a}$, namely when $\hat{\beta}(\theta,\mathbf{X})>\beta_0$. $\square$
\end{preuve}


\begin{preuve}[Proposition \ref{prop:upper_bound_eta}]
From the properties of the \texttt{softplus} activation function, for any $\ell>1$:
\begin{eqnarray*}
\|H_{\ell}\|_{\infty} & = & \bigl\|\text{\texttt{softplus}}\bigl(\bW_{\ell}H_{\ell-1}+b_{\ell}\bigr)\bigr\|_{\infty}, \\
& \leq & \log 2 + \|\bW_{\ell}H_{\ell-1}+b_{\ell}\|_{\infty}, \\
& \leq & \log 2 + \sqrt{m_{\ell-1}} R \|H_{\ell-1}\|_{\infty} + \|b_{\ell}\|_{\infty} \ \ \ \text{from Frobenius inequality,} \\
& \leq & \sum\limits_{k=0}^{\ell-1} \left(\prod\limits_{j=1}^k m_{\ell-j}\right)^{1/2}\left(R^k\log 2 + \|b_{k}\|_{\infty}\right) + \left(\prod\limits_{j=1}^{\ell} m_{\ell-j}\right)^{1/2}\|\bX\|_{\infty}
\end{eqnarray*}
with $m_0=d$ and setting $b_0=0$ by convention. With $\|b_k\|_{\infty}\leq \|b_k\|\leq R$, it comes, from (\ref{eq:eta_predictor})
\begin{eqnarray*}
\|\tilde{\eta}(\theta,\bX)-\eta_{min}\|_{\infty} & \leq & \log 2 + R\sum\limits_{k=0}^{L} m^{k/2}_{\sup}\left(1+R^{k-1}\log 2\right) +  m^{L/2}_{\sup}\|\bX\|_{\infty}
\end{eqnarray*}
which proves the results, since this upper bound is not modified by the final pooling operation.
\end{preuve}


\begin{preuve}[Proposition \ref{prop:C2-diff}]
The proof of this proposition follows from that of the more general Proposition~\ref{prop:C2-networks} stated below.
\end{preuve}


\begin{proposition}\label{prop:C2-networks}
Denote generically $\sigma$ the activation function and assume  $\sigma$ of class $C^{2}$ satisfying 
$\|\sigma'\|_{\infty}\le S_{1}$ and $\|\sigma''\|_{\infty}\le S_{2}$ (Assumption \ref{assumption:regularity_activations}). Denote $\nabla_{\theta}$ and $\nabla_{\theta,\theta}$ the gradient and Hessian operators with respect to $\theta$. 
Under Assumptions~\ref{assumption:compacity_param} and~\ref{assumption:compacity_input}, 
for any scalar network output $g(\theta,\mathbf X)$, in particular for $g(\theta,\mathbf X)=\beta(\theta,\mathbf X)$ and $g(\theta,\mathbf X)=\eta(\theta,\mathbf X)$,
\begin{enumerate}[label=(\alph*),nosep]
  \item $g(\theta,\mathbf X)$ is $C^{2}$ with respect to~$\theta$;
  \item $\displaystyle\sup_{\theta\in\Theta_{n}}\bigl\|\nabla_{\theta}g(\theta,\mathbf X)\bigr\|
        \le \Delta_{1}\,R^{L}$; \\`
\vspace{-0.2cm}

  \item $\displaystyle\sup_{\theta\in\Theta_{n}}\bigl\|\nabla^{2}_{\theta\theta}g(\theta,\mathbf X)\bigr\|
        \le \Delta_{2}\,R^{2L}$, \\
\end{enumerate}
\vspace{-0.2cm}

\noindent where the positive constants $\Delta_{1},\Delta_{2}$ depend only on  $R$, $S_{1}$, $S_{2}$ and~$\Delta$. Note that if activation functions are \texttt{softplus}, their derivatives are sigmoid, then $S_1<1$, and $\Delta_1>S_1\Delta/(R(1-S_1))$.
\end{proposition}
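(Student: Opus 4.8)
The statement is essentially a chain-rule bookkeeping argument combined with the uniform bounds from Assumptions~\ref{assumption:compacity_param}--\ref{assumption:regularity_activations}. I would proceed by induction on the layer index $\ell$, tracking simultaneously the network's pre-activations and activations $H_\ell$, their first derivatives $\nabla_\theta H_\ell$, and their second derivatives $\nabla^2_{\theta\theta} H_\ell$. First, part~(a): a finite composition of affine maps $\theta\mapsto \mathbf W_\ell H_{\ell-1}+b_\ell$ (which are polynomial in $\theta$, hence $C^\infty$) with the $C^2$ activation $\sigma$ is $C^2$; since $\Theta$ is compact (Assumption~\ref{assumption:compacity_param}) and $\mathbf X$ is fixed and bounded (Assumption~\ref{assumption:compacity_input}), all the derivatives are well-defined and continuous on $\Theta$, so the supremum is attained and finite. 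The final activations ($f_\beta$, $\sigma_\eta$, sigmoid, softplus, pooling) are themselves $C^2$ with bounded derivatives, so composing with them preserves $C^2$ and only multiplies the bounds by constants depending on those fixed functions.

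For part~(b), the induction hypothesis is a bound of the form $\|\nabla_\theta H_\ell\| \le C_\ell R^\ell$. The recursion $H_\ell = \sigma(\mathbf W_\ell H_{\ell-1}+b_\ell)$ gives, by the chain rule,
\[
\nabla_\theta H_\ell = \mathrm{diag}\!\bigl(\sigma'(\cdot)\bigr)\Bigl(\nabla_\theta(\mathbf W_\ell H_{\ell-1}) + \nabla_\theta b_\ell\Bigr),
\]
where the term $\nabla_\theta(\mathbf W_\ell H_{\ell-1})$ splits into a piece hitting the entries of $\mathbf W_\ell$ (bounded using $\|H_{\ell-1}\|_\infty$, which is itself uniformly bounded by the argument in the proof of Proposition~\ref{prop:upper_bound_eta}) and a piece $\mathbf W_\ell\,\nabla_\theta H_{\ell-1}$ (bounded by $\|\mathbf W_\ell\|_F \le R$ times the inductive bound on $\nabla_\theta H_{\ell-1}$, by sub-multiplicativity of the Frobenius/operator norms). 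The $\nabla_\theta b_\ell$ contribution is $O(1)$. Using $\|\sigma'\|_\infty \le S_1$ to absorb the diagonal factor, one obtains a recursion of the shape $C_\ell R^\ell \le S_1\bigl(R\cdot C_{\ell-1}R^{\ell-1} + \text{(lower-order in $R$)}\bigr)$, which telescopes to the claimed $\Delta_1 R^L$ with $\Delta_1$ depending only on $R, S_1, \Delta$ (the $\Delta$-dependence enters through the bounds on $\|H_{\ell-1}\|_\infty$ and on $\|\mathbf X\|_\infty \le \Delta$). The explicit lower bound $\Delta_1 > S_1\Delta/(R(1-S_1))$ for the softplus case comes from isolating the single contribution of the first-layer weights $\mathbf W_1^{o_a}$ acting directly on $\mathbf X$: that term alone forces $\|\nabla_\theta H_1\| \gtrsim S_1 \|\mathbf X\|_\infty$, and propagating it through the geometric series $\sum_k (S_1 R)^{-k}$-type accumulation produces the stated constant; since softplus has sigmoid derivative, $S_1 < 1$ and the series converges.

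For part~(c), differentiate the recursion for $\nabla_\theta H_\ell$ once more. The Hessian picks up two kinds of terms: (i) a "second-order through $\sigma$" term involving $\sigma''$, which by $\|\sigma''\|_\infty \le S_2$ contributes a quadratic form in $\nabla_\theta H_{\ell-1}$, hence is controlled by $S_2 (\Delta_1 R^{\ell-1})^2 = O(R^{2(\ell-1)})$; (ii) a "first-order through $\sigma$" term $\mathrm{diag}(\sigma')\,\nabla^2_{\theta\theta}(\mathbf W_\ell H_{\ell-1}+b_\ell)$, where $\nabla^2_{\theta\theta}(\mathbf W_\ell H_{\ell-1})$ contains $\mathbf W_\ell\,\nabla^2_{\theta\theta} H_{\ell-1}$ (bounded by $R$ times the inductive bound $\Delta_2 R^{2(\ell-1)}$) plus cross terms between $\nabla_\theta \mathbf W_\ell$ and $\nabla_\theta H_{\ell-1}$ (bounded by $\Delta_1 R^{\ell-1}$). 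Collecting, $\|\nabla^2_{\theta\theta} H_\ell\| \le S_1 R \cdot \Delta_2 R^{2(\ell-1)} + O(R^{2(\ell-1)})$, which telescopes to $\Delta_2 R^{2L}$. Finally, integrability with respect to $\mathbf X$ is immediate: $\mathbf X$ ranges over a bounded set (Assumption~\ref{assumption:compacity_input}), the bounds are uniform in $\mathbf X$, hence constant functions dominate everything and are trivially integrable against any probability measure on that set.

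\textbf{Main obstacle.} The only genuinely delicate point is bookkeeping the combinatorial explosion of cross terms when differentiating the layerwise recursion twice: a naïve count suggests the number of terms grows like $L!$ or worse, but the operator-norm sub-multiplicativity collapses them into a single geometric recursion in $R$. Getting the constants $\Delta_1, \Delta_2$ to depend only on $(R, S_1, S_2, \Delta)$ and \emph{not} on $L$ or the widths $m_\ell$ requires being careful that every width factor $\sqrt{m_\ell}$ either cancels (because the corresponding weight block has a column/row dimension that contracts it) or is absorbed into the definition of $R$ via $\|\mathbf W_\ell\|_F \le R$ — the Frobenius bound already encodes the width, so no separate $m_\ell$ factor should leak out. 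The sharp constant $\Delta_1 > S_1\Delta/(R(1-S_1))$ for softplus is then a matter of keeping the leading (first-layer, direct-on-$\mathbf X$) contribution explicit throughout the telescoping rather than bounding it crudely.
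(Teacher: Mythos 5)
Your proposal follows essentially the same route as the paper's proof: a layerwise induction tracking $H_\ell$, $\nabla_\theta H_\ell$ and $\nabla^2_{\theta\theta}H_\ell$, splitting the chain-rule derivative into the piece hitting the entries of $\mathbf W_\ell$ (controlled by the activation bound) and the piece $\mathbf W_\ell\,\nabla_\theta H_{\ell-1}$ (controlled by $\|\mathbf W_\ell\|_F\le R$ and the inductive hypothesis), with the constants determined by the solvability conditions $\Delta_1\ge S_1(\Delta+\Delta_1)$ and $\Delta_2\ge S_2\Delta_1^2+2S_1\Delta_2$ — which is exactly where the $S_1<1$ requirement for softplus enters, matching your geometric-series reading. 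The argument is correct and no substantive deviation from the paper's proof is present.
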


\begin{proof}[Proof of Proposition~\ref{prop:C2-networks}]
We prove by induction on the hidden layers $\ell = 1,\dots,L$ that
\[
  \bigl\|\nabla_{\theta} h_\ell(\theta,\mathbf X)\bigr\|
      \;\le\; \Delta_{1}\,R^{\ell},
  \qquad
  \bigl\|\nabla_{\theta\theta}^{2} h_\ell(\theta,\mathbf X)\bigr\|
      \;\le\; \Delta_{2}\,R^{2\ell},
\]
where $h_\ell$ denotes the output of the $\ell$-th layer. 

\begin{description}
\item[Step 0 (input).]  
The input layer is $h_0(\theta,\mathbf X)=\mathbf X$, which is independent of~$\theta$.
Hence $J_0:=\nabla_{\theta}h_0=0$ and
$H_0:=\nabla_{\theta\theta}^{2}h_0=0$.

\item[Induction hypothesis.]  
Assume for some $\ell-1\ge 0$ that
\(
\|J_{\ell-1}\|\le \Delta_{1}R^{\ell-1}
\)
and
\(
\|H_{\ell-1}\|\le \Delta_{2}R^{2(\ell-1)}.
\)
Set
\( z_{\ell}=W_{\ell}h_{\ell-1}+b_{\ell}\)
and
\(h_\ell=\sigma(z_\ell)\).
With $\otimes$ denoting the Kronecker product, the chain rule yields
\[
  J_{\ell}
  \;=\;
  \sigma'(z_\ell)\Bigl[
         (I\!\otimes\! h_{\ell-1}^{\top})
         \frac{\partial\operatorname{vec}W_{\ell}}{\partial\theta^{\top}}
         \;+\;
         W_{\ell}J_{\ell-1}
     \Bigr],
\]
where $\operatorname{vec}$ stacks the columns of $W_{\ell}$ and
$\partial\operatorname{vec}/\partial\theta^{\top}$ is the corresponding Jacobian.
Hence
\begin{eqnarray*}
  \|J_{\ell}\|
  & \leq & 
  S_1\bigl(\|h_{\ell-1}\| + \|W_\ell\|\,\|J_{\ell-1}\|\bigr), \\
  & \leq & 
  S_1\bigl(\Delta R^{\ell-1}+R \Delta_{1} R^{\ell-1}\bigr), \\
  & \leq & 
  \Delta_{1}R^{\ell},
\end{eqnarray*}
provided $\Delta_{1}\ge S_1(\Delta+\Delta_{1})$ (remind that $R>1$).

For the Hessian,
\[
  H_{\ell}
  \;=\;
  \sigma''(z_\ell)\bigl[J_{\ell}J_{\ell}^{\top}\bigr]
  \;+\;
  \sigma'(z_\ell)
     \Bigl[
        H_{\ell-1}\otimes W_\ell
        \;+\;
        (J_{\ell-1}\otimes I)
          \frac{\partial\operatorname{vec}W_{\ell}}
               {\partial\theta^{\top}}
        \;+\;\text{sym.}
     \Bigr],
\]
where “\text{sym.}” denotes the transposed term, i.e.\ $A+\text{sym.}=A+A^{\top}$.
Taking operator norms gives
\begin{eqnarray*}
  \|H_{\ell}\|
  & \leq & 
  S_2\|J_{\ell}\|^{2}
  \;+\;2S_1\|W_\ell\|\,\|H_{\ell-1}\|, \\
  & \leq & 
  S_2\Delta_{1}^{2}R^{2\ell}
  \;+\;2S_1\Delta_{2}R^{2\ell}, \\
  & \leq & 
  \Delta_{2}R^{2\ell},
\end{eqnarray*}
for any $\Delta_{2}\ge S_2\Delta_{1}^{2}+2S_1\Delta_{2}$.
\end{description}
Because $\beta(\theta,\mathbf X)$ is affine in $h_{L-1}$, its derivatives inherit the same bounds.
For $\eta(\theta,\mathbf X)$, the chain rule yields identical bounds up to a multiplicative constant.  
Thus, by induction, inequalities~(b)–(c) hold for every layer. Finally, the $C^{2}$ property of $g(\theta,\mathbf X)$ follows from the fact that the operations applied at each layer (matrix multiplication, addition, and composition with $\sigma$) all preserve the $C^{2}$ class. Consequently, if $h_{\ell-1}$ is $C^{2}$, then $h_{\ell}$ is $C^{2}$ as well. The inductive argument thus implicitly establishes property~(a), which 
completes the proof. \qedhere
\end{proof}


\begin{preuve}[Corollary \ref{cor:existence_consistency}]
The continuity of the mapping $\theta \mapsto \ell_n(\theta)$ on the compact set
$\Theta$ guarantees the existence of at least one maximiser.
To establish consistency we still need a law of large numbers.
Proposition~\ref{lemma:ULLN} below shows that, under
Assumptions~\ref{assumption:compacity_param}--\ref{assump:gc},
the normalised log-likelihood 
converges uniformly to its expectation.
\end{preuve}


\begin{proposition}[Uniform Law of Large Numbers (LLN)]\label{lemma:ULLN}
Under Assumptions~\ref{assumption:compacity_param} to \ref{assump:gc}, 
\[
  \sup_{\theta\in\Theta}\Bigl|\frac{1}{n}\ell_n(\theta)-\E\left[\ell_n(\theta)\right]\Bigr|\xrightarrow{a.s.}0.
\]
where the expectation is defined wrt the distribution of  $T$.
\end{proposition}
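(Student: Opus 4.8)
The plan is to reduce the statement (reading $\ell_n$ as the log-likelihood $\sum_{i,j}\log f^{(i)}_j$, as in Corollary~\ref{cor:existence_consistency}) to a bracketing uniform strong law for the triangular array of per-mission log-density contributions; the non-standard feature is that the sieve makes the indexing function class depend on $n$, and the rate calibration of Assumption~\ref{assump:gc} is precisely what keeps its metric entropy of order $o(n)$. Write
\[
\frac1n\log\ell_n(\theta)=\frac1n\sum_{i=1}^{N}\sum_{j=1}^{n(i)} m_\theta\bigl(z^{(i)}_j,\bX^{(i)}_j,\delta^{(i)}_j\bigr),\qquad
m_\theta(z,\bx,\delta)=\delta\bigl[\log\tfrac{\beta}{\eta}+(\beta-1)\log\tfrac{z}{\eta}\bigr]-\bigl(\tfrac{z}{\eta}\bigr)^{\beta},
\]
with $(\eta,\beta)=(\eta(\theta,\bx),\beta(\theta,\bx))$. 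Conditionally on the design $\{\bX^{(i)}_j\}$ the summands are independent (AGAN plus inter-vehicle independence), so what is needed is a uniform SLLN for an independent, non-identically distributed array whose index class $\mathcal M_n:=\{m_\theta:\theta\in\Theta_n\}$ varies with $n$. Using Constraint~(\ref{constraint-eta-1}), the bound $\eta(\theta,\bX)\le\eta_{\max}$ of Proposition~\ref{prop:upper_bound_eta} and Constraint~(\ref{constraint-beta-1}), for every $\theta\in\Theta_n$ and every $(i,j)$ one checks $|m_\theta(z,\bx,\delta)|\le M(z):=C_1+C_2|\log z|+C_3\max(1,z^{\beta_{\max}})$; since $T\mid\bX$ is Weibull with parameters in those compact ranges and $\bX$ lies in a compact set (Assumption~\ref{assumption:compacity_input}), $\E[|\log Z|]$ and $\E[Z^{(1+\epsilon)\beta_{\max}}]$ are finite uniformly in $(i,j)$, so $M$ is $\Pp$-integrable and $\{M(Z^{(i)}_j)\}$ is uniformly integrable.

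Next I would quantify the size of $\mathcal M_n$. By Proposition~\ref{prop:C2-diff} (and Proposition~\ref{prop:C2-networks}), $\|\nabla_\theta\eta(\theta,\bX)\|$ and $\|\nabla_\theta\beta(\theta,\bX)\|$ are $\le\Delta_1R^{L}$ on $\Theta_n$; differentiating $m_\theta$ and using the same Weibull moment bounds gives $\|\nabla_\theta m_\theta(z,\bx,\delta)\|\le R^{L}\dot M(z)$ with $\dot M(z)=C(1+|\log z|+z^{\beta_{\max}}(1+|\log z|))$ again $\Pp$-integrable. Hence $\theta\mapsto m_\theta$ is Lipschitz from $(\Theta_n,\|\cdot\|)$ into $L^1(\Pp)$ with constant $CR^{L}\E[\dot M]$, and since $\Theta_n$ is compact of Euclidean dimension $p_n$ with diameter $O(R\sqrt{p_n})$ (Assumption~\ref{assumption:compacity_param}),
\[
\log N_{[\,]}\bigl(\varepsilon,\mathcal M_n,L^1(\Pp)\bigr)\ \lesssim\ p_n\Bigl(L\log R+\log\tfrac{\sqrt{p_n}}{\varepsilon}\Bigr).
\]
Assumption~\ref{assump:gc} yields $p_nL=o(n^{1/3})\,o(n^{2/3})=o(n)$ and $p_n\log p_n=o(n)$, hence $\tfrac1n\log N_{[\,]}(\varepsilon,\mathcal M_n,L^1(\Pp))\to0$ for every fixed $\varepsilon>0$.

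Finally I would run the bracketing SLLN. Fix $\varepsilon>0$ and an $\varepsilon$-bracketing of $\mathcal M_n$; replacing the supremum over $\Theta_n$ by the maximum over the $N_{[\,]}(\varepsilon)$ brackets costs at most $C\varepsilon$. After truncating the envelope at a slowly growing level (and absorbing the discarded tail through the uniform integrability above), a Bernstein inequality applied to each independent, non-identically distributed bracket average gives a deviation bound $\exp(-cn\varepsilon^2)$; a union bound over the $N_{[\,]}(\varepsilon)=\exp(o(n))$ brackets together with Borel--Cantelli (along a subsequence, then monotonicity in the truncation level) gives $\limsup_n\sup_{\theta\in\Theta_n}|\tfrac1n\log\ell_n(\theta)-\E[\tfrac1n\log\ell_n(\theta)]|\le C\varepsilon$ almost surely, and letting $\varepsilon\downarrow0$ closes the argument. (Assumption~\ref{assum:censoring_vanish} is used only to ensure that a non-vanishing fraction of the statistical information comes from uncensored terms, which matters for the downstream results but not for the ULLN itself.)

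The main obstacle is the interaction between the growing sieve and the exponential concentration: because $\mathcal M_n$ grows in dimension and has a Lipschitz constant $R^{L}$ that blows up, no single fixed bracketing works, and the argument closes only because Assumption~\ref{assump:gc} keeps $p_nL=o(n)$ so that $\exp(-cn\varepsilon^2)$ beats the $\exp(o(n))$ union bound; a secondary technicality is upgrading in-probability to almost-sure convergence with an only integrable (not bounded) envelope, which is exactly what forces the envelope truncation and the uniform integrability established in the first step.
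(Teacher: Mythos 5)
Your proposal is correct and follows essentially the same route as the paper: the same decomposition into per-mission log-density terms, the same integrable envelope obtained from the box constraints on $(\eta,\beta)$ and the Weibull moment properties, the same Lipschitz-in-$\theta$ bound of order $R^{L}$ coming from Proposition~\ref{prop:C2-networks}, and the same bracketing-entropy count of order $p_n\log(\cdot/\varepsilon)$ controlled by the sieve rate of Assumption~\ref{assump:gc}. The only divergence is cosmetic: the paper closes by checking an entropy-integral condition and citing a bracketing Glivenko--Cantelli theorem of van der Vaart and Wellner, whereas you re-derive the uniform strong law by hand via envelope truncation, Bernstein's inequality, a union bound over the $\exp(o(n))$ brackets, and Borel--Cantelli (and your side remark that Assumption~\ref{assum:censoring_vanish} does no real work here is accurate, since $n_u/n$ and $n_c/n$ are trivially bounded by $1$).
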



\begin{preuve}[Proposition \ref{lemma:ULLN}]
From Proposition~\ref{prop:C2-networks}, the likelihood $\ell_n(\theta)$ is differentiable on the compact set $\Theta$. With $\log \ell_n(\theta) =  \sum_{i=1}^N\sum_{j=1}^{n(i)} \tilde{\ell}^{(i)}_j(\theta)$ 
with
\[
\tilde{\ell}^{(i)}_j(\theta)
    = \delta^{(i)}_j\left(\log\beta-\log\eta
     +(\beta-1)\bigl[\log z^{(i)}_j-\log\eta\bigr]\right)
     -(z^{(i)}_j/\eta)^{\beta},
\]
(simplifying temporarily the notations $\beta=\beta(\theta,\bX),\;
     \eta=\eta(\theta,\bX)$), then
\begin{eqnarray}
  \nabla_\theta\log \ell_n(\theta)
  & =& 
  \frac{\partial\log \ell_n(\theta)}{\partial\beta}\,
        \nabla_\theta\beta(\theta,X)
  \;+\;
  \frac{\partial\log \ell_n(\theta)}{\partial\eta}\,
        \nabla_\theta\eta(\theta,X)\label{eq:grad_loglik}
\end{eqnarray}
with
\begin{eqnarray*}
\frac{\partial\log \ell_n(\theta)}{\partial\beta} & = & \sum\limits_{i=1}^N\sum\limits_{j=1}^{n(i)} \delta^{(i)}_j\left(\frac{1}{\beta} + \log(z^{(i)}_j/\eta) -(z^{(i)}_j/\eta)^{\beta}\log (z^{(i)}_j/\eta)\right), \\
\frac{\partial\log \ell_n(\theta)}{\partial\eta} & = & \sum\limits_{i=1}^N\sum\limits_{j=1}^{n(i)} \frac{\beta}{\eta} \left({(z^{(i)}_j/\eta)^{\beta}}-1\right). 
\end{eqnarray*}
Under bounding constraints (\ref{constraint-eta-1}) and (\ref{constraint-beta-1}), and defining $s^{(i)}_j=(z^{(i)}_j/\eta)^{\beta}$, one gets the bounds
\begin{eqnarray*}
\left|\frac{\partial\log \ell_n(\theta)}{\partial\beta}\right| & \leq  &  C_1\sum\limits_{i=1}^N\sum\limits_{j=1}^{n(i)} \delta^{(i)}_j\left(1 + |\log s^{(i)}_j| +  s^{(i)}_j |\log s^{(i)}_j|\right), \\
\left|\frac{\partial\log \ell_n(\theta)}{\partial\eta} \right| & \leq & C_2 \sum\limits_{i=1}^N\sum\limits_{j=1}^{n(i)}\left(1+s^{(i)}_j\right)
\end{eqnarray*}
with $C_1=\beta^{-1}_{\min}$ and $C_2=\beta_{\max}/\eta_{\min}$.
Since $s^{(i)}_j$ given $\delta^{(i)}_j=1$ is a realization of the random variable $(T/\eta)^{\beta}|\theta,\bX\sim {\cal{E}}(1)$, then $\E[|\log S|]<\infty$ and $\E[S|\log S|]<\infty$. Furthermore, when $\delta^{(i)}_j=0$, as the time allotted to any mission is finite, we always have $\E[S^{(i)}_j]<\infty$. Hence the upper bounds above are integrable. Hence, from  Proposition \ref{prop:C2-networks}, it exists $\Delta_{1}>0$  such that the score (\ref{eq:grad_loglik}) is such that
\begin{eqnarray}
\sup\limits_{\theta\in\Theta} \| \nabla_{\theta} \log\ell_n(\theta)\| & \leq & \Delta_1 R^L \left[n_u F_u(\bZ) + n_c F_c(\bZ)\right]
\label{eq:lipschitz_1}
\end{eqnarray}
where $(F_u(\bZ),F_c(\bZ))$ are $L^1$-integrable functions of the whole sample $\bZ$. 
Besides, for any $(\theta,{\theta}')\in\Theta$, applying the mean-value integral formula to  
$g(u)=\log\ell_n\!\bigl(\theta+u(\theta'-\theta)\bigr)$ gives
\begin{eqnarray}
   \log\ell_n(\theta')-\log\ell_n(\theta)
    & = & \int_{0}^{1}
       \nabla_{\theta}\log\ell_n
       \!\bigl(\theta+u(\theta'-\theta)\bigr)\,du
     \;(\theta'-\theta).\label{eq:integral}
\end{eqnarray}
Hence from (\ref{eq:lipschitz_1}) we get a random but $L^1$- integrale Lipschitz modulus for the log-likelihood within $\Theta$, whose the expectancy depends on the size of censored and uncensored observations and the depth of the network: denoting $\bar{F}_u=\E[F_u(\bZ)]$ and $\bar{F}_c=\E[F_c(\bZ)]$, then
\begin{eqnarray*}
   \left|\frac{1}{n}\log\ell_n(\theta')-\frac{1}{n}\log\ell_n(\theta)\right|
    & \leq  & \Delta_1 R^L\left(\rho_{1,n}\bar{F}_u + \rho_{2,n}\bar{F}_c\right)\|\theta'-\theta\|.\label{eq:lipschitz_2}
\end{eqnarray*}
where $\rho_{1,n}={n_u}/{n}$ and $\rho_{2,n}={n_c}/{n}$. From Assumption (\ref{assum:censoring_vanish}), which implies that $n_u=O(n)$, these two terms are upperly bounded. Remind that $\Theta$ is defined as the ball ${\cal{B}}_{p_n}(0,R)$ where  $p_n = \mbox{Card}(\theta)$. Hence, for any $\varepsilon>0$, the bracketing number $N_{\cal{B}}(\varepsilon,\mathcal {G}_n,L^1(\Pp))$ for the class of bounded functions ${\mathcal{G}}_n  =  \{ \frac{1}{n}\log\ell(\theta): \ \theta\in\Theta\}$ is such that
\begin{eqnarray}
\log N_{\cal{B}}(\varepsilon,\mathcal {G}_n,L^1(\Pp)) & \leq & p_n \log \left( \frac{\mu}{\epsilon}\right).\label{eq:bound_logN}
\end{eqnarray}
with $\mu=4 \left(\rho_{1,n}\bar{F}_u + \rho_{2,n}\bar{F}_c\right)\Delta_1 R^{L+1}\sqrt{p_n}$.
Hence, the Levin-Chandra entropy integral is
\begin{eqnarray*}
\int_0^{1} \sqrt{N_{\cal{B}}(u,\mathcal {G}_n,L^1(\Pp))}\ du & \leq & {\sqrt{{p_n}}} \int_0^{1} \sqrt{\log \frac{\mu}{u} } \ du, \\
& \leq & {\sqrt{{p_n}}} \left(\sqrt{\log \mu +1} + \mbox{A}\right)
\end{eqnarray*}
where $A$ is a constant. Note that 
\begin{eqnarray}
p_n & = & \underbrace{m_1(d+1)}_{\text{\tiny first layer}} + \sum\limits_{\ell=2}^{l-1}\underbrace{m_{\ell}(m_{\ell-1}+1)}_{\text{\tiny $\ell$ layer}} + \underbrace{2m_L}_{\text{\tiny two outputs}}. \label{eq:expr_pn}
\end{eqnarray}
Then $p_n\leq m_{\max}(m_{\max}+1)L + \text{const}$. Hence $p_n={\cal{O}}(m^2_{\max}L)$.
Finally, $\mu\sim R^L\sqrt{p_n}$ and 
\begin{eqnarray*}
\int_0^{1} \sqrt{N_{\cal{B}}(u,\mathcal {G}_n,L^1(\Pp))}\ du & = & {\cal{O}}\left(\sqrt{p_n \log \mu}\right), \\
& = & {\cal{O}}\left(m_{\max} L\right), \\
& = & {o}\left(\sqrt{n}\right) \ \ \ \text{under Assumption (\ref{assump:gc}).}
\end{eqnarray*}
From Theorem 19.4 in \cite{van1996weak}, the class ${\mathcal{G}}_n$ is Glivenko-Cantelli, and consistency then follows from Theorem~5.7 of \cite{van2000asymptotic}. $\square$
\end{preuve}

\begin{preuve}[Proposition \ref{prop:fisher}]
Using the simplified notations $\beta = \beta(\theta, \bX)$, $\eta = \eta(\theta, \bX)$, and $s^{(i)}_j = (z^{(i)}_j / \eta)^{\beta}$ as introduced in the proof of Proposition~\ref{lemma:ULLN}, we define
\begin{eqnarray*}
V^{(i)}_{1,j} & = & \delta^{(i)}_j(1/\beta + \log z^{(i)}_j/\eta) - s^{(i)}_j \log z^{(i)}_j/\eta, \\[3mm]
V^{(i)}_{2,j} & = & \frac{\beta}{\eta} (s^{(i)}_j-\delta^{(i)}_j),
\end{eqnarray*}
and denote $V_1=\sum_{i=1}^N \sum_{j=1}^{n(i)} V^{(i)}_{1,j}$ and $V_2=\sum_{i=1}^N \sum_{j=1}^{n(i)} V^{(i)}_{2,j}$ then $\bV=(V_1,V_2)$. 
From (\ref{eq:grad_loglik}), the score can be rewritten as $\nabla_{\theta} \log \ell_n(\theta) = \bV J_{\theta}(\bX)$,  
where $J_{\theta}(\bX) = \left( \nabla_{\theta} \beta(\theta, \bX),\; \nabla_{\theta} \eta(\theta, \bX) \right)^T$.  
The Fisher information from the $n$ observations can then be written as
\begin{eqnarray}
I_n(\theta) & = & \E\left[\nabla_{\theta}\log \ell_n(\theta)\nabla_{\theta}\log \ell_n(\theta)\right], \nonumber \\
& = & \E\left[V^2_1\nabla_{\theta} \beta \nabla_{\theta} \beta^T + V^2_2\nabla_{\theta} \eta \nabla_{\theta} \eta^T + 2 V_1V_2 \nabla_{\theta} \beta\nabla_{\theta} \eta^T\right], \nonumber \\
& = & \E\left[J_{\theta}(\bX)^T \bS(\bX) J_{\theta}(\bX)\right] \label{eq:carac_Fisher}
\end{eqnarray}
with $\bS(\bX)=\bV^T\bV\in \R^{2\times 2}$.   
As $\beta$ and $\eta$ are bounded, using the same arguments on integrability than in the proof of Proposition \ref{lemma:ULLN}, $\E[V^2_1]>0$ and $\E[V^2_2]>0$ and $I_n(\theta)$ is well-defined and always finite. Furthermore $\mbox{Var}[V_1|\bX]>0$ and $\mbox{Var}[V_2|\bX]>0$, hence $\bS(\bX)$ is positive definite and all entries of $\bS(\bX)$ are bounded and separated from 0 uniformly in $\bX$. Therefore there exists a constant $c_0>0$, depending only on bounds on $(\beta,\eta)$, such that $\bS(\bX)\succeq c_0 I_2$ for every $\bX$.
Then, from (\ref{eq:carac_Fisher}), for any non-zero $\upsilon\in\R^q$,
\begin{eqnarray*}
\upsilon^T I_n(\theta) \upsilon & = & \E\left[(J_{\theta}(\bX)\upsilon)^T \bS(\bX) (J_{\theta}(\bX)\upsilon)\right], \\
& \geq & c_0 \E\| J_{\theta}(\bX)\|^2 \\
& = & c_0 \upsilon^T\Sigma(\theta)\upsilon
\end{eqnarray*}
where $\Sigma(\theta)  =  \E[J_{\theta}(\bX)^TJ_{\theta}(\bX)] \in\mathbb{R}^{q\times q}$, ie. 
\begin{eqnarray*}
\Sigma(\theta)  
& = & \left(\begin{array}{cc}
    \mbox{Var}\!\bigl[\nabla_{\theta}\beta(\theta,X)\bigr]
    &
    \mbox{Cov}\!\bigl[\nabla_{\theta}\beta(\theta,X),\,
                              \nabla_{\theta}\eta(\theta,X)\bigr] \\[6pt]
\mbox{Cov}\!\bigl[\nabla_{\theta}\eta(\theta,X),\,
                              \nabla_{\theta}\beta(\theta,X)\bigr]
    &
    \mbox{Var}\!\bigl[\nabla_{\theta}\eta(\theta,X)\bigr]
  \end{array}\right).
\end{eqnarray*}
From Lemma \ref{lemma:fisher_ok} it comes $\Sigma(\theta)\succ 0$. Then from (\ref{eq:carac_Fisher}), $I_n(\theta)\succ 0$.$\square$
\end{preuve}

\begin{lemma}\label{lemma:fisher_ok}
Under Assumption \ref{assump:distinct_heads}, for any non-zero direction $\upsilon\in \R^q$, 
\begin{eqnarray*}
\mbox{Var}[\upsilon^T\nabla_{\theta}\beta(\theta,\bX)] + \mbox{Var}[\upsilon^T\nabla_{\theta} \eta(\theta,\bX)] & > & 0.
\end{eqnarray*}
\end{lemma}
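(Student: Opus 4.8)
The plan is to argue by contradiction: suppose some $\upsilon\neq 0$ in $\mathbb{R}^q$ makes the left-hand side vanish. As each summand is a non-negative variance, this forces both $\upsilon^{\top}\nabla_{\theta}\beta(\theta,\bX)$ and $\upsilon^{\top}\nabla_{\theta}\eta(\theta,\bX)$ to be $\bX$-almost surely constant, and the goal is to deduce $\upsilon=0$. The natural first move is a block decomposition $\upsilon=(\upsilon_s,\upsilon_\beta,\upsilon_\eta)$ along the shared-backbone parameters $\theta_s$ (layers $1,\dots,L-1$), the $\beta$-head parameters, and the $\eta$-head parameters. Since the two heads are disjoint, $\nabla_{\theta}\beta$ vanishes identically on the $\eta$-head block and $\nabla_{\theta}\eta$ vanishes identically on the $\beta$-head block, so the two constancy conditions never couple those two blocks.

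Writing $h_{L-1}(\bX)$ for the last shared representation, $a_\beta(\bX)=W_\beta^{\top}h_{L-1}(\bX)+b_\beta$, $\Psi_\beta$ for the smooth strictly increasing composition of the $\beta$-head nonlinearities with pooling (illustrated here for the plain linear head; the two-layer head variant is handled identically), and $J_s(\bX)=\partial h_{L-1}/\partial\theta_s$ for the Jacobian of the shared representation, the chain rule gives
\[
\upsilon^{\top}\nabla_{\theta}\beta(\theta,\bX)=\Psi_\beta'(a_\beta(\bX))\,\big(\langle J_s(\bX)\upsilon_s,\,W_\beta\rangle+\upsilon_{W_\beta}^{\top}h_{L-1}(\bX)+\upsilon_{b_\beta}\big),
\]
and symmetrically for $\eta$ with $(\Psi_\eta,a_\eta,W_\eta,\upsilon_{W_\eta},\upsilon_{b_\eta})$. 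The chosen activations have strictly positive derivative, so $\Psi_\beta'>0$ and $\Psi_\eta'>0$; combining this with absolute continuity of the covariate law and with $\mathrm{Cov}[h_{L-1}(\bX)]\succ 0$ (which makes $a_\beta(\bX)$ range over a genuine interval), a product of a non-constant positive prefactor with an affine functional of $h_{L-1}(\bX)$ can be almost surely constant only if that affine functional is almost surely constant, hence — after subtracting its mean — almost surely zero. (If one only needs the weaker second-moment bound $\E[(\upsilon^{\top}\nabla_\theta\beta)^2]+\E[(\upsilon^{\top}\nabla_\theta\eta)^2]>0$, which is all Proposition~\ref{prop:fisher} actually invokes, this step is immediate.) We are thus reduced to the almost-sure identities $\langle J_s(\bX)\upsilon_s,W_\beta\rangle+\upsilon_{W_\beta}^{\top}h_{L-1}(\bX)+\upsilon_{b_\beta}=0$ and $\langle J_s(\bX)\upsilon_s,W_\eta\rangle+\upsilon_{W_\eta}^{\top}h_{L-1}(\bX)+\upsilon_{b_\eta}=0$.

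The conclusion splits into two cases. If $\upsilon_s=0$, the first identity reads $\upsilon_{W_\beta}^{\top}h_{L-1}(\bX)+\upsilon_{b_\beta}=0$ a.s.; since $\mathrm{Cov}[h_{L-1}(\bX)]\succ 0$ prevents $h_{L-1}$ from being supported in an affine hyperplane, this forces $\upsilon_{W_\beta}=0$, $\upsilon_{b_\beta}=0$, and likewise $\upsilon_{W_\eta}=0$, $\upsilon_{b_\eta}=0$, so $\upsilon=0$. If $\upsilon_s\neq0$, put $v(\bX):=J_s(\bX)\upsilon_s\in\mathbb{R}^{m_{L-1}}$; the two identities say that $\langle v(\bX),W_\beta\rangle$ and $\langle v(\bX),W_\eta\rangle$ are each affine in $h_{L-1}(\bX)$. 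This is where the non-collinearity $\langle W_\beta,W_\eta\rangle<\|W_\beta\|_2\|W_\eta\|_2$ enters: $W_\beta$ and $W_\eta$ span a genuine plane, so the two constraints are linearly independent as functionals of $v(\bX)$ and cannot be simultaneously absorbed by the affine $h_{L-1}(\bX)$-terms once $\mathrm{Cov}[h_{L-1}]\succ0$ is used again; this forces the head components to vanish and $\upsilon_s$ to lie in the kernel of $J_s(\bX)$ for a.e.\ $\bX$, whence (for non-degenerate shared weights) $\upsilon_s=0$ and $\upsilon=0$ — the sought contradiction.

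The main obstacle is precisely this last case, i.e.\ controlling the shared-backbone part $\upsilon_s$: the gradients $\nabla_{\theta_s}\beta$ and $\nabla_{\theta_s}\eta$ are ``parallel'', each being the same Jacobian $J_s(\bX)^{\top}$ applied to $W_\beta$, respectively $W_\eta$, so a cancellation inside the $\beta$-constraint could a priori be matched by one inside the $\eta$-constraint, and it is exactly condition~(\ref{eq:A7-1}) of Assumption~\ref{assump:distinct_heads} that rules this out. A secondary technical point is the implication ``a product with a positive non-constant prefactor is a.s.\ constant $\Rightarrow$ the other factor is a.s.\ constant'', which is where absolute continuity of the covariate distribution together with the excitation condition~(\ref{eq:A7-2}) genuinely enters — and which, as noted, can be avoided altogether if one settles for the sufficient second-moment version of the claim.
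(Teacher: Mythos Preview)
Your block decomposition $\upsilon=(\upsilon_s,\upsilon_\beta,\upsilon_\eta)$ is more explicit than the paper's route, which asserts (rather loosely) that the full gradient factors as $\nabla_\theta\beta=B_\beta^{\top}h_{L-1}(\bX)$ for a fixed matrix $B_\beta$ and then argues on $h_{L-1}$ directly. Your $\upsilon_s=0$ case is clean and correct. The genuine gap is in the $\upsilon_s\neq 0$ case.

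The step ``$W_\beta$ and $W_\eta$ span a genuine plane, so the two constraints are linearly independent as functionals of $v(\bX)$ and cannot be simultaneously absorbed by the affine $h_{L-1}(\bX)$-terms'' does not go through as written: non-collinearity only says the linear functionals $\langle\,\cdot\,,W_\beta\rangle$ and $\langle\,\cdot\,,W_\eta\rangle$ on $\mathbb{R}^{m_{L-1}}$ are independent, but places no obstruction on both $\langle v(\bX),W_\beta\rangle$ and $\langle v(\bX),W_\eta\rangle$ being affine in $h_{L-1}(\bX)$ --- if $v(\bX)$ itself were affine in $h_{L-1}(\bX)$, or identically zero, both identities are trivially satisfied for suitable head components. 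You never explain what structural property of $v(\bX)=J_s(\bX)\upsilon_s$ is incompatible with this. More seriously, your closing step ``$\upsilon_s\in\ker J_s(\bX)$ for a.e.\ $\bX$, whence (for non-degenerate shared weights) $\upsilon_s=0$'' invokes a hypothesis --- that $\bigcap_{\bX}\ker J_s(\bX)=\{0\}$ --- that is \emph{not} implied by Assumption~\ref{assump:distinct_heads}. Indeed, if the shared backbone has any parametric redundancy at $\theta$ (a dead neuron, or two neurons in a hidden layer sharing identical incoming weights), there exists $\upsilon_s\neq 0$ with $J_s(\bX)\upsilon_s\equiv 0$; taking $\upsilon=(\upsilon_s,0,0)$ then makes both directional derivatives identically zero while (\ref{eq:A7-1}) and (\ref{eq:A7-2}) can perfectly well hold. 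So under Assumption~\ref{assump:distinct_heads} alone the lemma genuinely needs an additional local identifiability condition on the shared layers --- your parenthetical correctly flags this but does not resolve it, and the paper's own argument glosses over the same point.
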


\begin{preuve}[Lemma \ref{lemma:fisher_ok}]
Because \(\beta\) and \(\eta\) are obtained by linear heads on the shared
representation \(h_{L-1}(X)\),
the chain rule yields matrices
\(B_\beta,B_\eta\in\mathbb R^{m_{L-1}\times q}\) such that
\[
\nabla_\theta\beta(\theta,X)=B_\beta^{\top}h_{L-1}(X)
\ \ \text{and}
\ \ \ 
\nabla_\theta\eta(\theta,X)=B_\eta^{\top}h_{L-1}(X).
\]
Define \(a(v):=B_\beta v\) and \(b(v):=B_\eta v\) in \(\mathbb R^{m_{L-1}}\).
Then
\[
v^{\top}\nabla_\theta\beta(\theta,X)=a(v)^{\top}h_{L-1}(X) \ \ \text{and}
\ \ \ 
v^{\top}\nabla_\theta\eta(\theta,X)=b(v)^{\top}h_{L-1}(X).
\]
From (\ref{eq:A7-2}) (Assumption \ref{assump:distinct_heads}),
\(\operatorname{Var}[u^{\top}h_{L-1}(X)]>0\) for every
\(u\neq0\).
Consequently
\[
\operatorname{Var}\!\bigl[v^{\top}\nabla_\theta\beta\bigr]=0
\;\Longleftrightarrow\;a(v)=0 \ \ \text{and}
\ \ \ 
\operatorname{Var}\!\bigl[v^{\top}\nabla_\theta\eta\bigr]=0
\;\Longleftrightarrow\;b(v)=0 .
\]
Assume, for contradiction, that both variances vanish. Then \(a(v)=b(v)=0\).  In particular the first rows of
\(B_\beta\) and \(B_\eta\)—namely the two head vectors
\(w_\beta\) and \(w_\eta\)—satisfy
\((w_\beta-w_\eta)^{\top}h_{L-1}(X)=0\) almost surely.  Because \(h_{L-1}(X)\) has full covariance and
$W_\beta$ and $W_\eta$ are not collinear  (Assumption~\ref{assump:distinct_heads}),
this is impossible unless \(v=0\), contradicting our choice of \(v\). Therefore at least one of the two variances is positive, and their sum is
strictly positive.
\end{preuve}

\begin{preuve}[Theorem \ref{theo:clt}]
We adopt the shorthand notation \( S_n(\theta) = \nabla_{\theta} \log \ell_n(\theta) \) and recall the notation $p_n=\mbox{Card}(\theta)$. Denote  $\psi^{(i)}_j(\theta)=\nabla_{\theta} \log f^{(i)}_j(\theta|{\cal{F}}^{(i)}_{j-1})$ the individual score for observation $Z^{(i)}_j$. Using the first-order condition \( S_n(\hat{\theta}_n) = 0 \), a Taylor expansion of $S_n(\theta)$ 
between \(\theta_0\) and \(\hat{\theta}_n\) yields
\begin{eqnarray}
0 & = & S_n(\theta_0) + \nabla^2_{\theta\theta} \log \ell_n(\theta_0) h(1+o(1)) \label{eq:taylor}
\end{eqnarray}
where \( h = \hat{\theta}_n - \theta_0 \). 
From Lemma~\ref{lemma:lindererg} beneath, the individual score terms
$\psi_j^{(i)}(\theta_0)$
satisfy a uniform \( L^{2+\delta'} \)-bound and a Lindeberg-type condition. Hence the central limit theorem applies:
\[
\frac{S_n(\theta_0)}{\sqrt{n}} \xrightarrow{d} \mathcal{N}(0, I(\theta_0)).
\]
These score terms are iid, centered, with covariance $I(\theta_0)$ which exists and is not degenerate. From the above result, it comes $\E\|S_n(\theta_0)\|^2 = n \mbox{tr}(I(\theta_0))={\cal{O}}(n p_n)$, then $\|S_n(\theta_0)\|={\cal{O}}(\sqrt{n p_n})$. Now, from Proposition \ref{proposition:conv_hessian} beneath, 
\begin{eqnarray*}
-\frac{1}{n}\nabla^2_{\theta\theta} \log \ell_n{\theta} & \xrightarrow[]{\Pp} &  I(\theta_0).
\end{eqnarray*}
Hence the inverse of the left-hand term remains bounded in probability, and
\begin{eqnarray}
\|h\| & = & {\cal{O}}\left(\left[-\frac{1}{n}\nabla^2_{\theta\theta} \log \ell_n{\theta}\right]^{-1} \frac{\|S_n(\theta_0)\|}{n}\right) \ = \  {\cal{O}}\left(\sqrt{p_n/n}\right). \label{eq:norm_h}
\end{eqnarray}
Consider now the integral remainder in (\ref{eq:taylor})
\begin{eqnarray}
R_n(\theta_0) = \left[\int_0^1 \left[ \nabla^2_{\theta\theta} \log \ell_n(\theta_0 + sh) - \nabla^2_{\theta\theta} \log \ell_n(\theta_0) \right] ds \right] h. \label{eq:hessian_difference}
\end{eqnarray}
such that
\begin{eqnarray}
-\frac{1}{n} \nabla^2_{\theta\theta} \log \ell_n(\theta_0) \sqrt{n} h &=& \frac{S_n(\theta_0)}{\sqrt{n}} + \frac{R_n(\theta_0)}{\sqrt{n}}.\label{eq:equilibrium}
\end{eqnarray}
Hence
\begin{eqnarray}
\frac{R_n(\theta_0)}{\sqrt{n}} & \leq & \sup\limits_{\|\theta-\theta_0\|\leq \|h\|}\left\|-\frac{1}{n}\nabla^2_{\theta\theta} \log \ell_n{\theta} - I(\theta_0)\right\|\sqrt{n}\| h \|. \nonumber
\end{eqnarray}
From (\ref{eq:norm_h}), one has $\sqrt{n}\|h\|={\cal{O}}(\sqrt{p_n})$. Besides, from Proposition \ref{proposition:conv_hessian},  $$\sup\limits_{\|\theta-\theta_0\|\leq \|h\|}\left\|-\frac{1}{n}\nabla^2_{\theta\theta} \log \ell_n{\theta} - I(\theta_0)\right\|=o(1).$$ Then ${R_n(\theta_0)}/{\sqrt{n}}=o(1)$ if $\sqrt{p_n}o(1)\to 0$, which is true for $p_n=o(n^{1/3})$ (Assumption \ref{assump:gc}). Finally, from \eqref{eq:equilibrium} and the convergence 
$
\frac{1}{n} \nabla^2_{\theta\theta} \log \ell_n(\theta) \xrightarrow{P} -I(\theta)$, we conclude that
\[
\sqrt{n}h \ = \ \sqrt{n}(\hat\theta_n - \theta) \xrightarrow{d} \mathcal{N}(0, I(\theta)^{-1}). \ \square
\]
\end{preuve}

\begin{proposition}[Uniform LLN for the Hessian]\label{proposition:conv_hessian}
Denote $I(\theta_0)=
  \E_{\theta_0}\!\Bigl[\,\nabla_{\!\theta}\ell_1(\theta_0)\;
      \nabla_{\!\theta}\ell_1(\theta_0)^{\!\top}\Bigr]$ the unit Fisher information matrix at $\theta_0$.
      Under Assumptions \ref{assumption:compacity_param} to \ref{assump:distinct_heads},
there exists a decreasing sequence $r_n\to 0$, such that $r_n\sqrt{n}\to \infty$ and 
\begin{eqnarray*}
\sup\limits_{\|\theta-\theta_0\|\leq r_n} \left\|-\frac{1}{n}\nabla^2_{\theta\theta} \log \ell_n{\theta} - I(\theta_0)\right\| & \xrightarrow[]{\Pp} & 0.
\end{eqnarray*}
\end{proposition}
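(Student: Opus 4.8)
The plan is to split the supremum into a stochastic fluctuation term, a deterministic modulus‑of‑continuity term, and a bias term that is zero by the information identity. Write $M_n(\theta):=-\tfrac1n\nabla^2_{\theta\theta}\log\ell_n(\theta)$ and $H_n(\theta):=\E_{\theta_0}[M_n(\theta)]$, so that for any $\theta$ with $\|\theta-\theta_0\|\le r_n$ one has $\|M_n(\theta)-I(\theta_0)\|\le T_1+T_2+T_3$, where $T_1=\sup_{\theta\in\Theta_n}\|M_n(\theta)-H_n(\theta)\|$, $T_2=\sup_{\|\theta-\theta_0\|\le r_n}\|H_n(\theta)-H_n(\theta_0)\|$, and $T_3=\|H_n(\theta_0)-I(\theta_0)\|$. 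The term $T_3$ vanishes: at $\theta_0$ the model is correctly specified, the per‑mission densities are $C^2$ near $\theta_0$ and, together with their first two $\theta$‑derivatives, dominated by $\mathbf X$‑integrable envelopes (Proposition~\ref{prop:C2-networks} and the envelope bounds established in the proof of Proposition~\ref{lemma:ULLN}); hence one may differentiate under the integral and apply the second Bartlett identity $-\E_{\theta_0}[\nabla^2_{\theta\theta}\log f^{(i)}_j(\theta_0)]=\E_{\theta_0}[\nabla_\theta\log f^{(i)}_j(\theta_0)\,\nabla_\theta\log f^{(i)}_j(\theta_0)^{\!\top}]$, whose average over the $n$ missions is exactly the unit Fisher information $I(\theta_0)$, finite and positive definite by Proposition~\ref{prop:fisher}.

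For $T_1$ I would repeat the argument of Proposition~\ref{lemma:ULLN} one derivative higher. Differentiating the explicit per‑mission log‑density twice in $(\beta,\eta)$ and applying the chain rule expresses each Hessian entry through the network quantities $\beta,\eta,\nabla_\theta\beta,\nabla_\theta\eta,\nabla^2_{\theta\theta}\beta,\nabla^2_{\theta\theta}\eta$ and through $s^{(i)}_j=(z^{(i)}_j/\eta)^{\beta}$ and $\log s^{(i)}_j$; constraints~(\ref{constraint-eta-1})–(\ref{constraint-beta-1}) together with $\E[|\log S|]<\infty$, $\E[S|\log S|]<\infty$ and $\E[S(\log S)^2]<\infty$ for $S\sim\mathcal E(1)$ (and the analogous finite moments on the censored set) furnish an $\mathbf X$‑integrable envelope with network factor of order $\Delta_2 R^{2L}$. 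To obtain uniformity in $\theta$ I would extend the induction of Proposition~\ref{prop:C2-networks} to third‑order $\theta$‑derivatives — legitimate because \texttt{softplus}/SReLU are $C^3$ with bounded third derivative — yielding a random, $L^1$‑integrable Lipschitz modulus of order $\Delta_3 R^{3L}\bigl(\rho_{1,n}\bar F_u+\rho_{2,n}\bar F_c\bigr)$ for $\theta\mapsto M_n(\theta)$, the censoring ratios being bounded by Assumption~\ref{assum:censoring_vanish}. The bracketing estimate then reads $\log N_{[\,]}(\varepsilon,\mathcal H_n,L^1(\Pp))\le p_n\log(\mu_n/\varepsilon)$ with $\mu_n\asymp R^{3L}\sqrt{p_n}$, so the entropy integral is $\mathcal O(\sqrt{p_n\log\mu_n})=\mathcal O(\sqrt{p_n\log p_n})=o(\sqrt n)$ under Assumption~\ref{assump:gc}, and the same triangular‑array Glivenko–Cantelli reasoning as in Proposition~\ref{lemma:ULLN} gives $T_1\xrightarrow{\Pp}0$ (entrywise, hence in operator norm since $p_n<\infty$; one may also check directly $\E\|M_n(\theta_0)-H_n(\theta_0)\|_F^2\le p_n^2 C_n/n=n^{-1/3+o(1)}\to0$ with $C_n=\mathcal O(R^{4L})=n^{o(1)}$).

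For $T_2$, taking expectations in the Lipschitz bound above gives $\|H_n(\theta)-H_n(\theta_0)\|\le L_H(n)\|\theta-\theta_0\|$ with $L_H(n)=\mathcal O(R^{3L})$. Since the sieve construction uses a polylogarithmic depth $L\asymp(\log n)^{1/2}$, one has $R^{3L}=\exp\{3(\log R)(\log n)^{1/2}\}=n^{o(1)}$, i.e.\ $L_H(n)$ grows slower than any power of $n$, whereas the estimation rate the proposition must cover is $\sqrt{p_n/n}=n^{-1/3+o(1)}$ — a polynomial gap. It therefore suffices to take, say, $r_n=n^{-1/4}$: then $r_n\downarrow 0$, $r_n\sqrt n=n^{1/4}\to\infty$, $\sqrt{p_n/n}=o(r_n)$ (so the ball is still wide enough for the use made of the proposition in the proof of Theorem~\ref{theo:clt}), and $T_2\le L_H(n)\,r_n=n^{-1/4}\,n^{o(1)}\to0$. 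Combining $T_1\xrightarrow{\Pp}0$, $T_2\to0$ and $T_3=0$ yields the claim.

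The step I expect to be the main obstacle is the calibration of $r_n$: the modulus of continuity of the expected Hessian deteriorates with the depth (like $R^{3L}$), yet $r_n$ must simultaneously tend to zero, stay above the estimation rate $\sqrt{p_n/n}$, and satisfy $r_n\sqrt n\to\infty$; reconciling these constraints is exactly where the polylogarithmic depth and the condition $p_n=o(n^{1/3})$ of Assumption~\ref{assump:gc} are genuinely used, and a cruder bound (a polynomially growing $L$, or replacing the $\sqrt{p_n}$ slack in the entropy/Frobenius estimates by $p_n$) would break the argument. The only ancillary point to flag is that the third‑order extension of Proposition~\ref{prop:C2-networks}, though routine for $C^3$ activations such as \texttt{softplus}, must be stated explicitly, since Assumption~\ref{assumption:regularity_activations} as written guarantees only $C^2$.
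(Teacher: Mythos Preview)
Your proof is correct and follows the same route as the paper: establish an integrable envelope and Lipschitz control for the Hessian, invoke Glivenko--Cantelli (the paper cites it directly after exhibiting the envelope $nCg_n(\mathbf Z)$), then restrict to a shrinking ball with $r_n=n^{-1/4}$ and use continuity of the expected Hessian at $\theta_0$ together with the Bartlett identity. Your treatment is more explicit than the paper's on the two points you flag --- the third-order extension of Proposition~\ref{prop:C2-networks} needed for the Lipschitz modulus of the Hessian, and the calibration of $r_n$ against the depth-dependent constant $R^{3L}$ --- both of which the paper absorbs into a one-line appeal to ``the same (shortened) arguments'' as Proposition~\ref{lemma:ULLN} and to ``continuity of $I(\theta)$''.
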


\begin{preuve}[Proposition \ref{proposition:conv_hessian}]
Using the same notations than in the proof of Proposition \ref{lemma:ULLN},  the Hessian can be written as
\begin{eqnarray*}
 \nabla^2_{\theta\theta} \log \ell_n(\theta) & = & \sum\limits_{i=1}^N\sum\limits_{j=1}^{n(i)} \delta^{(i)}_j H^{(i)}_{j,u}(\theta) + (1-\delta^{(i)}_j) H^{(i)}_{j,c}(\theta)
\end{eqnarray*}
where
\begin{eqnarray*} 
H^{(i)}_{j,u}(\theta) & = & -\bigl(\log\frac{Z^{(i)}_j}{\eta}-s^{(i)}_{j}\bigr)\,
  \nabla_{\!\theta}^2\beta
-\frac{\beta}{\eta}\,(s^{(i)}_{j}-1)\,
  \nabla_{\!\theta}^2\eta
+\Bigl(-\tfrac{1}{\beta}+s^{(i)}_{j}\Bigr)\,
  \bigl(\nabla_{\!\theta}\beta\bigr)
  \bigl(\nabla_{\!\theta}\beta\bigr)^{\!\top} \\
&& +\frac{\beta}{\eta^{2}}\,(1-s^{(i)}_{j})\,
  \bigl(\nabla_{\!\theta}\eta\bigr)
  \bigl(\nabla_{\!\theta}\eta\bigr)^{\!\top} 
+\frac{s^{(i)}_{j}-1}{\eta}\,
  \Bigl[
    \bigl(\nabla_{\!\theta}\beta\bigr)
    \bigl(\nabla_{\!\theta}\eta\bigr)^{\!\top}
   +\bigl(\nabla_{\!\theta}\eta\bigr)
    \bigl(\nabla_{\!\theta}\beta\bigr)^{\!\top}
  \Bigr], \\   
  H^{(i)}_{j,c}(\theta) & = &-\,s^{(i)}_{j}\,\log_{ij}\,
  \nabla_{\!\theta}^2\beta
+\;s^{(i)}_{j}\,
  \frac{\beta}{\eta}\,
  \nabla_{\!\theta}^2\eta 
+\;s^{(i)}_{j}\,\log_{ij}^{\,2}\,
  \bigl(\nabla_{\!\theta}\beta\bigr)
  \bigl(\nabla_{\!\theta}\beta\bigr)^{\!\top} \\
&&-\;s^{(i)}_{j}\,
  \frac{\beta}{\eta^{2}}\,
  \bigl(\nabla_{\!\theta}\eta\bigr)
  \bigl(\nabla_{\!\theta}\eta\bigr)^{\!\top} 
-\;s^{(i)}_{j}\,
  \frac{\log_{ij}}{\eta}\,
  \Bigl[
    \bigl(\nabla_{\!\theta}\beta\bigr)
    \bigl(\nabla_{\!\theta}\eta\bigr)^{\!\top}
   +\bigl(\nabla_{\!\theta}\eta\bigr)
    \bigl(\nabla_{\!\theta}\beta\bigr)^{\!\top}
  \Bigr]. 
\end{eqnarray*}
The existence of $\E\left[\nabla_{\theta}\log \ell_n(\theta)\nabla_{\theta}\log \ell_n(\theta)\right]$ (Proposition~\ref{prop:fisher}) implies that \(\mathbb{E}[\nabla^2_{\theta\theta} \log \ell_n(\theta)] < \infty\). This result can also be established by noting that the Hessian \(\nabla^2_{\theta\theta} \log \ell_n(\theta)\) can be written as a sum of matrices involving first- and second-order derivatives of the mappings \(\theta \mapsto \beta(\theta, \mathbf{X})\) and \(\theta \mapsto \eta(\theta, \mathbf{X})\), each multiplied by scalar functions of \((Z, \mathbf{X})\) whose expectations are finite due to the moment condition \(\mathbb{E}[Z^m] < \infty\) for all \(m > 0\), a property satisfied by the Weibull distribution.
From Proposition \ref{prop:C2-networks}, there exist $(M_1,M_2)>0$ such that
\begin{eqnarray}
  \sup_{\theta,X}\bigl\lVert\partial_{\bm\theta}\beta\bigr\rVert\le M_1,
  \quad
  \sup_{\theta,X}\bigl\lVert\partial_{\bm\theta}^2\beta\bigr\rVert
  +\sup_{\theta,X}\bigl\lVert\partial_{\bm\theta}^2\eta\bigr\rVert\le M_2. \label{eq:upper-bounds}
\end{eqnarray}
With $\lvert\log Z-\log\eta(\theta)\rvert
         \le |\log Z|+|\log\eta_{\min}^{-1}|
         \le C_0(1+T)$ since $\eta_{\min}\le\eta(\theta)\le\eta_{\max}$, $|\log Z|\le 1+Z$ and $0\le s(Y,\theta)\le
        T^{\beta_{\max}}/\eta_{\min}^{\beta_{\max}}$ with
        $\beta_{\max}:=\sup_{\theta}\beta(\theta)$ hence
       $|1-s|\le 1+s$, and based on the same (shortened) arguments than the proof of Proposition (\ref{lemma:ULLN}), it exists $C>0$ such that  
\[
  \sup_{\theta\in\Theta}\lVert \nabla^2_{\theta\theta} \log \ell_n(\theta)\rVert
  \;\le\;
  n C g_n(\bZ),
\]
with 
\begin{eqnarray*}
g_n(\bZ) & =& \frac{1}{n} \sum\limits_{i=1}^N\sum\limits_{j=1}^{n(i)} \bigl(1+|\log z^{(i)}_j|+(z^{(i)}_j)^{\beta_{\max}}\bigr)
\end{eqnarray*}
being finite and square integrable (from the properties of the Weibull distribution). Then $n C g_n(\bZ)$ is a $L^2$ uniform envelope for $\nabla^2_{\theta\theta} \log \ell_n(\theta)$. 
The Glivenko-Cantelli theorem states that
\begin{eqnarray*}
\sup\limits_{\theta\in\Theta} \left\|\frac{1}{n}\nabla^2_{\theta\theta} \log \ell_n(\theta) - \E_{\theta_0}\left[\nabla^2_{\theta\theta}\log \ell_1(\theta)\right]\right\| & \xrightarrow[]{\Pp} & 0.
\end{eqnarray*}
Then, by reducing $\Theta$ to the ball ${\cal{B}}(\theta_0,r_n)$ with $r_n\to 0$ and $r_n\sqrt{n}\to\infty$ (for instance $r_n=n^{-1/4}$), since $I(\theta)$ is continuous, it comes
\begin{eqnarray*}
\sup\limits_{\|\theta-\theta_0\|\leq r_n} \left\|-\frac{1}{n}\nabla^2_{\theta\theta} \log \ell_n{\theta} - I(\theta_0)\right\| & \xrightarrow[]{\Pp} & 0. \  \square
\end{eqnarray*}

\end{preuve}

\begin{lemma}[Moment - Lindeberg]\label{lemma:lindererg}
For all $i\in\{1,\ldots,N\}$ and $j\in\{1,\ldots,n(i)\}$, denote $\psi^{(i)}_j(\theta)=\nabla_{\theta} \log f^{(i)}_j(\theta|{\cal{F}}^{(i)}_{j-1})$ the individual score for observation $Z^{(i)}_j$. Under Assumptions \ref{assumption:compacity_param} to \ref{assump:distinct_heads}, there exists $\delta'>0$ such that, for all $\varepsilon>0$
\begin{eqnarray}
\sup\limits_{\theta\in\Theta} \E\left\| \psi^{(i)}_j(\theta)\right\|^{2+\delta'} & < & \infty.\label{eq:lindeberg-1}
\end{eqnarray}
Besides, 
\begin{eqnarray}
 \frac{1}{n}\sum\limits_{i,j} \E\left\| \psi^{(i)}_j(\theta_0)\right\|^2 \1_{\{\|\psi^{(i)}_j(\theta_0)\|\geq \varepsilon\sqrt{n}\}}& \xrightarrow[n\to\infty]{} & 0 \label{eq:lindeberg-2}
\end{eqnarray}
\end{lemma}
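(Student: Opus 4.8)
The plan is to reduce both claims to uniform $(2+\delta')$-moment bounds on the two scalar factors appearing in the chain-rule expansion of the individual score, and then to combine these with the gradient bounds of Proposition~\ref{prop:C2-networks} and with the moment structure of the Weibull law already exploited in the proofs of Propositions~\ref{lemma:ULLN} and~\ref{proposition:conv_hessian}. First I would recall, exactly as in the proof of Proposition~\ref{prop:fisher}, that the chain rule gives
\[
\psi^{(i)}_j(\theta)\;=\;V^{(i)}_{1,j}(\theta)\,\nabla_\theta\beta(\theta,\bX^{(i)}_j)\;+\;V^{(i)}_{2,j}(\theta)\,\nabla_\theta\eta(\theta,\bX^{(i)}_j),
\]
with $V^{(i)}_{1,j}=\delta^{(i)}_j\bigl(1/\beta+\log(z^{(i)}_j/\eta)\bigr)-s^{(i)}_j\log(z^{(i)}_j/\eta)$, $V^{(i)}_{2,j}=(\beta/\eta)(s^{(i)}_j-\delta^{(i)}_j)$ and $s^{(i)}_j=(z^{(i)}_j/\eta)^{\beta}$ (shortening $\beta=\beta(\theta,\bX^{(i)}_j)$, $\eta=\eta(\theta,\bX^{(i)}_j)$). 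By the triangle inequality and Proposition~\ref{prop:C2-networks}, which bounds $\|\nabla_\theta\beta\|$ and $\|\nabla_\theta\eta\|$ uniformly over $\Theta$ by $\Delta_1R^{L}$ — a genuine constant, the architecture being fixed in the CLT regime — it then suffices to prove that for some $\delta'>0$,
\[
\sup_{\theta\in\Theta}\E\bigl|V^{(i)}_{1,j}(\theta)\bigr|^{2+\delta'}<\infty
\quad\text{and}\quad
\sup_{\theta\in\Theta}\E\bigl|V^{(i)}_{2,j}(\theta)\bigr|^{2+\delta'}<\infty .
\]

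Next I would exploit the bounds $\beta\in[\beta_{\min},\beta_{\max}]$ (constraint~(\ref{constraint-beta-1})) and $\eta\in[\eta_{\min},\eta_{\max}]$ (constraint~(\ref{constraint-eta-1}) together with Proposition~\ref{prop:upper_bound_eta}) to obtain, for a constant $C$ depending only on $(\beta_{\min},\beta_{\max},\eta_{\min},\eta_{\max})$,
\[
\bigl|V^{(i)}_{k,j}(\theta)\bigr|\;\le\;C\bigl(1+|\log z^{(i)}_j|+s^{(i)}_j+s^{(i)}_j|\log z^{(i)}_j|\bigr),\qquad k\in\{1,2\},
\]
so it remains to bound the $(2+\delta')$-moments of $|\log z^{(i)}_j|$ and of $s^{(i)}_j(\theta)$ uniformly in $\theta$. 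On the uncensored event I would write $T^{(i)}_j=\eta_0\,E^{1/\beta_0}$ with $E\sim\mathcal E(1)$ and $(\eta_0,\beta_0)=(\eta(\theta_0,\bX^{(i)}_j),\beta(\theta_0,\bX^{(i)}_j))$, so that $\log(z^{(i)}_j/\eta(\theta,\bX^{(i)}_j))$ and $s^{(i)}_j(\theta)$ are respectively dominated, with coefficients controlled by $(\beta_{\min},\beta_{\max},\eta_{\min},\eta_{\max})$ only, by $C(1+|\log E|)$ and $C(1+E^{\beta_{\max}/\beta_{\min}})$; since $E\sim\mathcal E(1)$ has finite moments of every order, the uncensored contribution is bounded uniformly in $\theta$. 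On the censored event $z^{(i)}_j=Y^{(i)}_j$; after normalizing durations so that $z^{(i)}_j\ge1$ (as recommended in~\S\ref{sec:consistency_survival}, which makes $|\log z^{(i)}_j|\le z^{(i)}_j$) and using $s^{(i)}_j(\theta)\le(Y^{(i)}_j/\eta_{\min})^{\beta_{\max}}$, this contribution is controlled by $\E\bigl[(Y^{(i)}_j)^{(2+\delta')\beta_{\max}}\bigr]$, which is finite because the time allotted to a mission is bounded. Combining both events yields~(\ref{eq:lindeberg-1}).

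For~(\ref{eq:lindeberg-2}) I would invoke the elementary truncation bound: on $\{\|\psi^{(i)}_j(\theta_0)\|\ge\varepsilon\sqrt n\}$ one has $\|\psi^{(i)}_j(\theta_0)\|^{2}\le(\varepsilon\sqrt n)^{-\delta'}\|\psi^{(i)}_j(\theta_0)\|^{2+\delta'}$, whence
\[
\frac1n\sum_{i,j}\E\Bigl[\|\psi^{(i)}_j(\theta_0)\|^{2}\,\mathbf{1}_{\{\|\psi^{(i)}_j(\theta_0)\|\ge\varepsilon\sqrt n\}}\Bigr]\;\le\;\frac{1}{\varepsilon^{\delta'}n^{\delta'/2}}\,\sup_{\theta\in\Theta}\E\bigl\|\psi^{(i)}_j(\theta)\bigr\|^{2+\delta'}\;\xrightarrow[n\to\infty]{}\;0,
\]
since the average runs over the $n$ missions and the supremum on the right is the finite constant furnished by~(\ref{eq:lindeberg-1}).

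The step I expect to be the main obstacle is the uniform-in-$\theta$ control of the censored observations: the convenient identity $s^{(i)}_j\sim\mathcal E(1)$ holds only for uncensored observations evaluated at $\theta_0$, so away from $\theta_0$ one must track explicitly how the power $s^{(i)}_j(\theta)$ and the logarithmic terms depend on $\theta$ — handled above through the Weibull reparametrization and the uniform two-sided bounds on $\beta$ and $\eta$ — and one must lean on a mild moment/boundedness hypothesis on the censoring times $Y^{(i)}_j$. Everything else is routine bookkeeping that mirrors the moment computations already carried out in the proofs of Propositions~\ref{lemma:ULLN} and~\ref{proposition:conv_hessian}.
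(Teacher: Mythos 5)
Your proposal is correct and follows essentially the same route as the paper: the same chain-rule decomposition of the score into the two scalar factors times $\nabla_\theta\beta$ and $\nabla_\theta\eta$, the same uniform gradient bounds from Proposition~\ref{prop:C2-networks}, the same polynomial-in-$(|\log z|, z^{\beta_{\max}})$ envelope controlled by the bounds on $(\beta,\eta)$ and the finiteness of Weibull moments, and the same Markov truncation argument for the Lindeberg condition. Your separate treatment of the censored observations (via boundedness of the allotted times $Y^{(i)}_j$) is in fact slightly more careful than the paper's, which simply invokes finiteness of all moments of $Z^{(i)}_j\le T^{(i)}_j$.
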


\begin{preuve}[Lemma \ref{lemma:lindererg}]
{\bf Inequation (\ref{eq:lindeberg-1}).} One can prove (\ref{eq:lindeberg-1}) based on the proofs and notations of Proposition \ref{prop:C2-networks} and Proposition \ref{lemma:ULLN}. With
\begin{eqnarray}
\psi^{(i)}_j(\theta) & = & 
  A^{(i)}_j(\theta)\;\nabla_{\!\theta}\beta
  \;+\;
  \frac{\beta}{\eta}\Bigl(s^{(i)}_j-\delta^{(i)}_j\Bigr)\;
  \nabla_{\!\theta}\eta \label{eq:score_indiv}
\end{eqnarray}
with $A^{(i)}_j(\theta):=
  \delta^{(i)}_j\bigl(\log({Z^{(i)}_j}/{\eta})-s^{(i)}_j\bigr)
  -\bigl(1-\delta^{(i)}_j\bigr)\,s^{(i)}_j(\theta)\,\log({Z^{(i)}_j}/{\eta})$, 
then there exists $M_1>0$ such that 
\begin{eqnarray*}
\sup_{\theta\in\Theta}\left\| \psi^{(i)}_j(\theta)\right\| & \leq & M_1\,|A^{(i)}_j(\theta)|
        +M_1\,\frac{\beta_{\max}}{\eta_{\min}}\,
             \bigl|s^{(i)}_j-\delta^{(i)}_j\bigr|
\end{eqnarray*}
where $M_1:=\sup_{\theta\in\Theta}
      \max\{\|\nabla_{\!\theta}\beta\|,
            \|\nabla_{\!\theta}\eta\|\}$.
With $|A^{(i)}_j(\theta)|\le
  |\log({Z^{(i)}_j}/{\eta})|
  +s^{(i)}_j(1+|\log({Z^{(i)}_j}/{\eta})|)$,  $s^{(i)}_j-\delta^{(i)}_j\bigr|\le
  1+s^{(i)}_j$ and
\[
s^{(i)}_j(\theta)=
  \bigl(Z^{(i)}_j/\eta\bigr)^{\beta}
  \;\le\;
  \bigl(Z^{(i)}_j/\eta_{\min}\bigr)^{\beta_{\max}}
\quad \text{and} \quad
|\log({Z^{(i)}_j}/{\eta})|\le 1+Z^{(i)}_j,
\]
there exists $C>0$ such that 
\begin{eqnarray}
\sup_{\theta\in\Theta}
   \|\psi^{(i)}_j(\theta)\| & \le &  C\Bigl(
      1+|\log Z^{(i)}_j|
      +(Z^{(i)}_j)^{\beta_{\max}}
      +(Z^{(i)}_j)^{\beta_{\max}}\!\,|\log Z^{(i)}_j| \label{eq:control_score_individual}
     \Bigr)
\end{eqnarray}
With a Weibull $Z^{(i)}_j$, \(\E[(Z^{(i)}_j)^{m}]<\infty\) for all $m>0$, and in particular for $m=2+\delta'$. Hence the right term of (\ref{eq:control_score_individual}) is a $L^{2+\delta'}$ uniform envelop for the individual score $\psi^{(i)}_j(\theta)$. \\

\noindent {\bf Convergence (\ref{eq:lindeberg-2}).} From (\ref{eq:control_score_individual}) and using Markov's inequality,
\[
\mathbf 1_{\{\lVert\psi^{(i)}_j(\theta_0)\rVert>\sqrt n\,\varepsilon\}}
   \;\le\;
   (\sqrt n\,\varepsilon)^{-\delta'}\,
   \lVert\psi^{(i)}_j(\theta_0)\rVert^{\delta'}
   \;\le\;
   (\sqrt n\,\varepsilon)^{-\delta'}\,\left(\Psi^{(i)}_j\right)^{\,\delta'}.
\]
Then 
\[
\begin{aligned}
 &\frac1n\sum_{i=1}^{N}\sum_{j=1}^{n_i}
  \E\!\Bigl[\lVert\psi^{(i)}_j(\theta_0)\rVert^{2}\,
          \mathbf 1_{\{\lVert\psi^{(i)}_j(\theta_0)\rVert>\sqrt n\,\varepsilon\}}\Bigr]  \\
 &\qquad\le
   (\sqrt n\,\varepsilon)^{-\delta}\,
   \frac1n\sum_{i,j}\E\bigl[\|\Psi^{(i)}_{j}(\theta_0)\|^{\,2+\delta'}\bigr]
   \;=\;
   \frac{C'}{n^{1+\delta'/2}}
   \;\xrightarrow[n\to\infty]{}0 \ \square
\end{aligned}
\]
\end{preuve}

\begin{preuve}[Proposition \ref{prop:archi}]
It is straigthforward to check that when $n>n_{\min}$ and $K\geq K_0$, then $m_1\geq d$. 
From (\ref{eq:expr_pn}), the total number of parameters can be written
\begin{eqnarray*}
p_n 
& = & m_1(d+1)
+ \sum_{\ell=2}^{L-1} m_\ell\,(m_{\ell-1}+1)
+ 2m_L.
\end{eqnarray*}
Hence $p_n\le m_{\max}(m_{\max}+1)L + \mathrm{const}\le 2\,m_{\max}^2 L + \mathrm{const}$ for $m_{\max}\ge 1$.
Using $m_{\max}^2 \le \tfrac{K}{2}\,\dfrac{n^{1/3}}{\log n}$ and $L_n \le \lceil(\log n)^{\tau}\rceil$, we obtain
\[
p_n \;\le\; 2\cdot \frac{K}{2}\,\frac{n^{1/3}}{\log n}\cdot (\log n)^{\tau} + \mathrm{const}
\;=\; n\,\alpha_n(K,\tau) + \mathrm{const}
\]
with \[
\alpha_n(K,\tau)\;=\; \frac{K\,n^{-2/3}}{(\log n)^{\,1-\tau}}.
\]
Therefore, for large $n$,
\[
p_n \;\le\; K\,\frac{n^{1/3}}{(\log n)^{1-\tau}} \;=\; n\,\alpha_n(K,\tau)
\]
Note that $L=(\log n)^{\tau}=o(n^{2/3})$ and that
\[
p_n \;=\; O\!\Big(\frac{n^{1/3}}{(\log n)^{1-\tau}}\Big) \;=\; o(n^{1/3}).
\]
By construction, for any $\ell\ge 2$ with $m_{\ell-1}\ge 3$ we have
$m_\ell \le m_{\ell-1}-1$, so successive widths are strictly decreasing and the depth is truncated as soon as the width $2$ is reached. For large $n$ the cap $1+(m_{\max}-2)$ dominates
$\lceil(\log n)^{\tau}\rceil$ (since $m_{\max}\asymp n^{1/6}/\sqrt{\log n}$),
so $L_n=\lceil(\log n)^{\tau}\rceil$ and the rates above are unaffected.
\end{preuve}

\section{Supplementary Material}

\subsection{Main features of DPTWE and WTTE-RNN models}

We summarized here the main characteristics of the competing models in comparison with the WTNN approach developed in the main paper. Note that DPWTE must be adapted in practice by many features for a fair comparison. Especially, it produces predictions of Weibull parameters $(\eta,\beta)$ normalized around 1 (without accounting for the well-known link between $\beta$ and the nature of aging \cite{bousquet2010elicitation}). In contrast,  WTTE-RNN can predict the waiting time until the next event, treating time as discrete (seconds, hours, etc.). It estimates at each time step the waiting time to the next event, and allows for bounded intervals for $\beta$. It also offers a method for initializing weights and biases, a key factor for enabling convergence during training. 


\subsubsection{The DPWTE model}

The Deep Parsimonious Weibull Time-to-Event (DPWTE) model 
\cite{bennis2021dpwte,DPWTE} extends the mixture-based strategy for survival analysis 
by addressing a key limitation of earlier methods such as DeepWeiSurv, described hereinafter. Reusing the notations of the main paper, let $T$ denote the event time, $Y$ the censoring time, and $(Z,\delta)$ the observed data with 
$Z=\min(T,Y)$ and $\delta=\mathbf 1\{T\le Y\}$. Let $\mathbf X$ be the covariate vector.  

\paragraph{A first version: DeepWeiSurv.}  
DeepWeiSurv assumes that the conditional distribution of $T$ can be represented as a finite mixture 
of Weibull distributions, with individual-specific parameters depending on $\mathbf X$. Formally, the 
mixture distribution is written as
\[
f(t \mid \mathbf X) \;=\; \sum_{k=1}^p \alpha_k(\mathbf X)\,
f_{W} \!\big(t \mid \beta_k(\mathbf X), \eta_k(\mathbf X)\big),
\]
where $f_W$ denotes the Weibull density with shape $\beta_k$ and scale $\eta_k$, and 
$\alpha_k(\mathbf X)$ are mixture weights such that $\sum_{k=1}^p \alpha_k(\mathbf X)=1$.  
The network architecture consists of a shared feature extractor followed by task-specific heads 
that output $(\alpha_k, \beta_k, \eta_k)$ for $k=1,\dots,p$. Model training is performed via 
the maximization of the log-likelihood accounting for censoring:
\[
\mathcal L = \sum_{i=1}^n \delta_i \log f(Z_i \mid \mathbf X_i)
+ (1-\delta_i)\log S(Z_i \mid \mathbf X_i),
\]
where $S$ is the mixture survival function. This approach provides flexibility and interpretability, 
but the number of mixture components $p$ must be fixed in advance, which may lead to 
underfitting or overfitting in practice.  

\paragraph{The DPWTE version.}  
DPWTE proposes a more flexible and parsimonious solution. Instead of fixing $p$, the method 
sets a sufficiently large upper bound and introduces a \emph{Sparse Weibull Mixture} (SWM) layer. 
This layer selects, through its weights, the subset of Weibull components that significantly contribute 
to the modeling of the distribution of $T$. To encourage sparsity, a dedicated regularization term 
is added to the likelihood-based loss, ensuring that only a small number of Weibull components 
remain active after training. As a result, DPWTE learns both the mixture parameters and its 
effective size in a data-driven manner. 


\subsubsection{The WTTE-RNN model}

The Weibull Time-To-Event Recurrent Neural Network (WTTE-RNN) was originally introduced 
by Martinsson \cite{WTTERNNW} as a framework to jointly exploit survival analysis and 
recurrent neural networks for sequential prediction of time-to-event outcomes. 

Its core construction principle is the following.  
At each time step $t$, the hidden state of an RNN encodes the history of $\mathbf X$ up to $t$, 
and outputs the parameters $(\alpha_t,\beta_t)$ of a Weibull distribution for the conditional time 
to the next event. The loss function is the log-likelihood adapted to censored observations:
\[
\mathcal L = \sum_{i=1}^n \delta_i \log f(Z_i \mid \alpha_i,\beta_i) 
+ (1-\delta_i) \log S(Z_i \mid \alpha_i,\beta_i),
\]
where $f$ and $S$ are the Weibull density and survival functions. This directly incorporates 
censoring into the training objective and allows the model to handle continuous or discrete 
waiting times, recurrent events, and time-varying covariates in a unified way.

Since its introduction, several improvements have been proposed. Cawley and Burns 
\cite{cawley2019analysis} analyzed different neural architectures (LSTM, GRU, Phased LSTM) 
that improve performance on predictive maintenance tasks. Xu et al. \cite{xu2022method} extended 
the model by combining CNN layers with LSTMs (WTTE-CNN-LSTM) to enhance feature 
extraction from raw sensor streams, achieving better accuracy for remaining useful life (RUL) 
prediction. Li et al. \cite{li2022estimation} further adapted the WTTE-RNN to equipment RUL 
estimation, while Dhada et al. \cite{dhada2023weibull} applied Weibull RNNs to histogram data for 
failure prognosis. These studies confirm the versatility of the WTTE-RNN framework across 
domains such as customer churn, predictive maintenance, and reliability engineering.


\subsection{Implementation choices for training}


Training of neural networks can diverge, resulting in very large loss values leading to NaN outputs. This issue is discussed in detail in \cite{WTTERNNW} [Section~4.1.2], where the authors propose specific initialization strategies for weights and biases, which were used in the present work. 

\subsubsection{Optimization task}

The WTNN model was trained by minimizing the likelihood-based loss function augmented with regularization terms enforcing monotonicity, head non-collinearity, and statistical alignment. The Adam optimizer was employed with an initial learning rate of 0.01, combined with a ReduceLROnPlateau scheduler \cite{al2022scheduling} that decreased the learning rate by a factor of 0.1 if no improvement was observed for 10 epochs. The training loop incorporated mechanisms to prevent divergence and adapt optimization online. Initialization used very small weights and relatively large biases (ratio up to 100:1), based on grid search experiments that avoided exploding losses. A multi-start strategy was used to mitigate non-convexity and promote reproducibility, while  the experiments were conducted using a grid search with various values for different hyperparameters. 




\begin{figure}[H]
    \centering
    \includegraphics[width=0.65\linewidth]{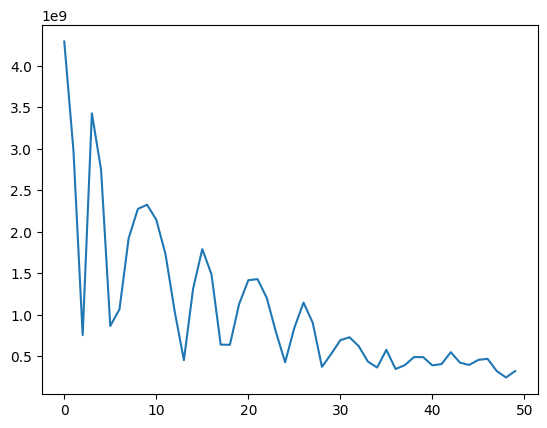}
    \caption{Typical training evolution (training rate vs number of epochs).}
    \label{fig:training evolution}
\end{figure}

\subsubsection{Initialization scale, normalization, and stability}

\paragraph{Main features.}
In this article, we aim to learn smooth, differentiable mappings from covariates to the Weibull parameters $(\beta(x), \eta(x))$, ensuring that the resulting likelihood remains well-behaved for both optimization and calibration. In preliminary experiments, the model under study, a multilayer perceptron with Softplus activations, was trained either with or without Batch Normalization (BN) layers \cite{bjorck2018understanding,santurkar2018does}. See $\S$ \ref{sec:batch_normalization} for details on BN, and further in the text for details on these preliminary experiments.  

Empirically, we observe that the scale of initialization of weights and biases has a decisive effect when BN is absent: smaller initialization magnitudes consistently yield lower losses and better calibration of the Weibull parameters. When BN is inserted after each affine transformation, this sensitivity disappears—performance becomes almost invariant to the initialization scale, even across several orders of magnitude.

This phenomenon can be understood from the properties of the \texttt{Softplus} function $$\phi(x)=\log(1+e^x)$$ whose derivative $\phi'(x)=\sigma(x)$ is the logistic sigmoid. 
\begin{itemize}
\item Without normalization, large weight scales quickly drive pre-activations into regions where $\sigma(x)\approx1$ or $\sigma(x)\approx0$, amplifying or damping gradients as they propagate through layers. 
\item Because \texttt{Softplus} is unbounded, the variance of activations follows a multiplicative growth driven by $\|W^{(\ell)}\|_F^2$ and by the local slope of the activation \cite{glorot2010understanding}. 
\item Small initial weights keep activations near zero, where \texttt{Softplus} behaves almost linearly ($\phi(x)\approx \log 2 + x/2$), thus maintaining a stable gradient flow and smoother optimization surface.
\end{itemize}

Adding BN modifies this regime completely. BN standardizes pre-activations using their batch mean and variance, and then applies a learned affine transform \cite{ioffe2015batch}. This step effectively cancels any scaling in the incoming parameters: if all incoming weights and biases are multiplied by a constant $c$, both the batch mean and variance rescale by $c$, leaving the standardized activations unchanged. As a result, \texttt{Softplus} layers following BN receive approximately invariant input distributions, and the network’s loss landscape becomes largely independent of the initialization scale. Theoretical and empirical studies confirm that BN smooths the loss surface and improves conditioning, leading to more stable gradients and faster convergence \cite{santurkar2018does, klambauer2017self,xiao2018dynamical}. \\

While this scale-invariance is beneficial for MSE objectives, it interacts differently with the Weibull maximum likelihood loss. The log-likelihood couples $\beta$ and $\eta$, making the objective exponentially sensitive to the ratio and correlation of these parameters. 
\begin{itemize}
\item In practice, both $\beta$ and $\eta$ are produced by \texttt{Softplus} transformations of network outputs $(z_\beta,z_\eta)$. 
\item BN modifies the distribution of these logits by applying batch-dependent centering and scaling before Softplus. 
\item Small shifts in $(z_\beta,z_\eta)$ can therefore produce large, nonlinear changes in $(\beta,\eta)$ and consequently in the survival function $S(t|x)=\exp\{-(t/\eta)^\beta\}$. This additional stochasticity can degrade the calibration of survival probabilities, even when the overall likelihood remains low. 
\end{itemize}

Under the MSE objective, such variability is largely benign: the network’s task is to approximate conditional expectations, and BN stabilizes both the forward and backward passes. For the Weibull MLE, however, correct relative scaling between $\beta$ and $\eta$ matters more than invariance to parameter magnitude. Over-normalization may flatten important curvature directions of the likelihood, producing optima that are numerically stable but suboptimal in terms of tail calibration. This explains why, in experiments, small initialization without BN gives the best Weibull likelihoods, while BN yields comparable MSE but slightly worse probabilistic calibration.


\paragraph{Preliminary experiments.}
Multiple experiments were conducted to quantify the impact of the initialization of the weights $\bW$ and biases $\bb$ on model performance, either when the training objective is restricted to the MSE or when it includes the negative log-likelihood. We consider here, as in the simulation study of the article, the sampling mechanism 
\begin{eqnarray*}
   \mathbf{W} \sim {\cal{N}}(\text{a},\text{b}), \\
   \mathbf{b} \sim {\cal{N}}(\text{c},\text{d}),
\end{eqnarray*}
choosing the values within all the possible combinations of $(a,b,c,d)$ in $\{0.001 ; 0.01 ; 0.1 ; 1 ;  10\}$. Training results obtained using the MSE only, without BN,  are provided on Table \ref{tab:simulated_results_RMSE}.

\begin{table}[H]
    \centering
    \caption{Training results from simulated experiments for the BN-free WTNN model, trained with the MSE loss only (averaged over 30 repetitions). }
    \label{tab:simulated_results_RMSE}
    \begin{tabular}{l|lll}
        \hline
        Case & Weight simulation & Bias simulation & RMSE value\\
        \hline
        1 & ${\cal{N}}(0.001, 0.001)$ & ${\cal{N}}(0.01, 0.1)$  & 1.149 \\
        2 & ${\cal{N}}(0.001, 0.001)$  & ${\cal{N}}(1, 0.001)$  &  1.149\\
        3 & $N(0.01, 0.01)$ & ${\cal{N}}(0.01, 1)$ & 1.158 \\
        4 & ${\cal{N}}(0.01, 0.01)$  & ${\cal{N}}(10, 10)$ & 1.501 \\
        5 & ${\cal{N}}(0.01, 0.1)$ & $N(1, 10)$ & 3.65 \\
        6 & ${\cal{N}}(0.1, 0.01)$ & ${\cal{N}}(0.001, 10)$ & 83.5 \\
        7 & $N(0.001, 1)$ & ${\cal{N}}(0.01, 0.01)$ & 648.1 \\
        8 & ${\cal{N}}(1, 0.001)$ & ${\cal{N}}(0.001, 0.01)$ & $6.9 .10^{13}$ \\
        9 & ${\cal{N}}(10, 0.001)$ & ${\cal{N}}(1, 0.001)$ & $ 3.5.10^{23}$\\
        \hline
    \end{tabular}
\end{table}

The results show that even a slight change in the initial weight values has a very strong impact on model performance, including when the data are normalized. These result are strongly stabilized when a BN layer is added after each \texttt{Softplus} layer within the network, as highlighted on Table \ref{tab:simulated_results_RMSE_BN}. 

\begin{table}[H]
    \centering
    \caption{Training results from simulated experiments for the  WTNN model incorporating BN layers, trained with the MSE loss only (averaged over 30 repetitions). }
    \label{tab:simulated_results_RMSE_BN}
    \begin{tabular}{l|lll}
        \hline
        Case & Weight simulation & Bias simulation & RMSE value\\
        \hline
        1 & ${\cal{N}}(0.001, 0.001)$ & ${\cal{N}}(0.01, 0.1)$  & 1.151 \\
        2 & ${\cal{N}}(0.001, 0.001)$  & ${\cal{N}}(1, 0.001)$  &  1.152\\
        5 & ${\cal{N}}(0.01, 0.1)$ & $N(1, 10)$ & 1.496 \\
        8 & ${\cal{N}}(1, 0.001)$ & ${\cal{N}}(0.001, 0.01)$ & 1.150\\
        9 & ${\cal{N}}(10, 0.001)$ & ${\cal{N}}(1, 0.001)$ & 1.150 \\
        \hline
    \end{tabular}
    \label{tab:simulated_results}
\end{table}



\begin{table}[H]
    \centering
    \caption{Best training results with and without BatchNorm (BN) layer (averaged over 30 repetitions). }
    \label{tab:simulated_results_RMSE_BN_2}
    \begin{tabular}{l|lll}
        \hline
        Situation & Weight simulation & Bias simulation & RMSE value\\
        \hline
        With BN layer & ${\cal{N}}(0.001, 0.001)$ & ${\cal{N}}(0.001, 0.001)$  & 1.14878 \\
        without BN layer & ${\cal{N}}(1, 0.1)$  & ${\cal{N}}(0.1, 0.001)$ & 1.14882 \\
        \hline
    \end{tabular}
    \label{tab:best RMSE results}
\end{table}

\vspace{1cm}

The same experiments have been conducted using a training loss incorporating only the Weibull negative log-likelihood, then the hybrid loss componed with the Weibull loss, the MSE and the regularization terms. They are summarized on Tables \ref{tab:simulated_results_Weibull_a} to \ref{tab:simulated_results_Weibull_MSE_BN}. A summary view highlighting best training results is provided in Table \ref{tab:bestresultsweibullallmodel}.   

\begin{table}[H]
    \centering
    \caption{Best training results between instances of the WTNN model. IBS = Integrated Brier Score. CI = C-Index. AUC = Area Under the Curve.}
    \label{tab:bestresultsweibullallmodel}
    \begin{tabular}{l|llllll}
        \hline
         & Weight simulation & Bias simulation & IBS & MSE & CI & AUC\\
Weibull loss only \\
        \cline{1-1}
        & \\
        BN-free WTNN model case 1 & ${\cal{N}}(0.1, 0.01)$ & ${\cal{N}}(0.001, 1) $  & 0.043 & 768 & 0.425 & 0.375\\
        BN-free WTNN model case 2 & ${\cal{N}}(0.01, 0.001)$ & ${\cal{N}}(0.01, 100) $  & 0.044 & 798 & 0.521 & 0.525\\
        BN WTNN model case 1 & ${\cal{N}}(100, 10)$  & ${\cal{N}}(100, 1)$  &  0.053 & 866 & 0.491 & 0.499\\
        BN WTNN model case 2 & ${\cal{N}}(0.1, 0.01)$  & ${\cal{N}}(100, 100)$  &  0.056 & 867 & 0.546 & 0.501\\
        \hline
        Complete loss  & \\
        \cline{1-1}
        & \\
        BN-free WTNN model case 1 & ${\cal{N}}(0.1, 0.01)$ & ${\cal{N}}(1, 0.01) $  & 0.043 & 765 & 0.432 & 0.381\\
        BN-free WTNN model case 2 & ${\cal{N}}(0.01, 0.001)$ & ${\cal{N}}(1, 100) $  & 0.043 & 803 & 0.521 & 0.525\\
        BN WTNN model case 1& ${\cal{N}}(1, 0.001)$  & ${\cal{N}}(0.01, 1)$  &  0.053 & 866 & 0.494 & 0.499\\
        BN WTNN model case 1 & ${\cal{N}}(0.1, 0.01)$  & ${\cal{N}}(0.1, 0.01)$  &  0.055 & 867 & 0.548 & 0.501\\
        \hline
    \end{tabular}
\end{table}

Again, the use of BN leads to close indicators, regardless of the initialization values (Tables \ref{tab:simulated_results_Weibull-BN} and \ref{tab:simulated_results_Weibull_MSE_BN}). Note that the BN-free models based on the Weibull loss only then the complete loss  allow to obtain better performance for IBs and MSE; which are the most important indicator for models choice. However, such a situation can make the training leading to significant instability, which explains the requirement of conducting preliminary experiments, especially those based on simulated datasets.



\subsubsection{Batch Normalization}\label{sec:batch_normalization}

For a hidden layer $\ell$ of the Weibull-Tailored Neural Network (WTNN), let $h^{(\ell-1)} \in \mathbb{R}^{d_{\ell-1}}$ denote the input activations and
\[
z^{(\ell)} = W^{(\ell)} h^{(\ell-1)} + b^{(\ell)}
\]
the corresponding pre-activations before applying the nonlinearity.  
Batch Normalization (BN) operates on each coordinate $j \in \{1,\dots,d_{\ell}\}$ of $z^{(\ell)}$ using the statistics of the current mini-batch
\[
\mu_j^{(\ell)} = \frac{1}{m} \sum_{i=1}^{m} z_{i,j}^{(\ell)}, 
\qquad
(\sigma_j^{(\ell)})^2 = \frac{1}{m} \sum_{i=1}^{m} \!\left(z_{i,j}^{(\ell)} - \mu_j^{(\ell)}\right)^2 ,
\]
where $m$ is the batch size. The normalized activations are then
\[
\hat{z}_{i,j}^{(\ell)} = \frac{z_{i,j}^{(\ell)} - \mu_j^{(\ell)}}{\sqrt{(\sigma_j^{(\ell)})^2 + \varepsilon}} ,
\]
with a small constant $\varepsilon > 0$ for numerical stability.  
BN introduces two learnable affine parameters $(\gamma_j^{(\ell)}, \beta_j^{(\ell)})$ that restore scale and shift:
\[
\tilde{z}_{i,j}^{(\ell)} = \gamma_j^{(\ell)} \hat{z}_{i,j}^{(\ell)} + \beta_j^{(\ell)} .
\]
The transformed activations $\tilde{z}^{(\ell)}$ are then passed through the elementwise \texttt{Softplus} function,
\[
h^{(\ell)} = \phi\!\left(\tilde{z}^{(\ell)}\right)
           = \log\!\left(1 + e^{\,\tilde{z}^{(\ell)}}\right).
\]

At inference time, BN replaces $(\mu_j^{(\ell)}, \sigma_j^{(\ell)})$ by moving averages accumulated during training.  
In the context of WTNN, this normalization re-centers and rescales each layer’s activations independently of the absolute magnitude of $(W^{(\ell)}, b^{(\ell)})$, which reduces the sensitivity of the network to initialization scale and stabilizes the differentiation of the Weibull functional parameters $(\beta(X), \eta(X))$.

\subsection{Predictive uncertainty using the Monte Carlo Dropout (MCD)}

Once the WTNN instances had been trained and validated, their operational deployment necessitated that any inference produced by these models be systematically accompanied by an associated quantification of predictive uncertainty. While the literature has recently witnessed significant advances in this domain, particularly through the emergence of model-agnostic approaches such as conformal prediction \cite{fontana2023conformal}, the direct control afforded over both the network architecture and the training procedure also renders feasible the implementation of more traditional techniques, among which Monte Carlo Dropout (MCD; \cite{gal2016dropout}) occupies a prominent place. This method provides an estimate of predictive uncertainty by characterizing the distribution of variations in the model outputs, thereby enabling the derivation of uncertainty bounds for each survival function at arbitrary times of interest, as exemplified in Figure~\ref{fig:MC vehicle_example}. For completeness, we provide beneath a succinct technical overview of the MCD procedure adopted in the present study.

\begin{figure}[H]
    \centering
    \includegraphics[width=0.9\linewidth]{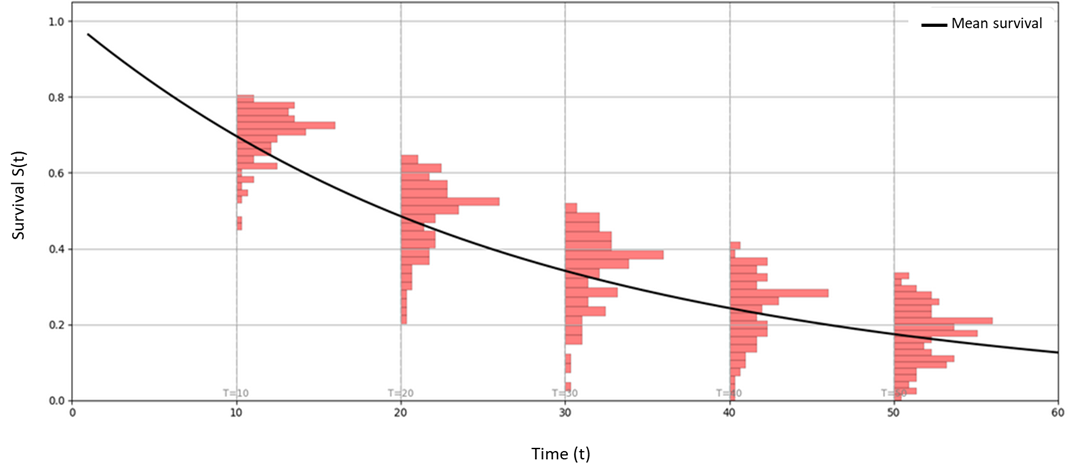}
    \caption{Conditional Weibull survival for several times of interest,  with uncertainty distribution produced by  MCD. This illustration is took from further experiments conducted on a real dataset.}
    \label{fig:MC vehicle_example}
\end{figure}

The key idea of the MCD is to interpret dropout at test time 
as a Bayesian approximation of a deep Gaussian process. Instead of a single deterministic 
prediction, MCD yields a distribution of outputs by repeatedly applying dropout masks during 
inference.

\paragraph{Principle.} 
Let $f_\theta(\mathbf X)$ denote the neural network mapping from covariates $\mathbf X$ to the 
Weibull parameters $(\eta,\beta)$. With dropout rate $p \in (0,1)$, we define the $m$-th stochastic 
forward pass as
\[
(\eta^{(m)}, \beta^{(m)}) \;=\; f_{\theta,\,\text{drop}}^{(m)}(\mathbf X), 
\qquad m = 1,\dots,M,
\]
where $f_{\theta,\,\text{drop}}^{(m)}$ indicates that dropout is applied to the hidden units 
with independent Bernoulli masks. The predictive survival distribution is then approximated by
\[
\widehat S(t \mid \mathbf X) \;=\; \frac{1}{M}\sum_{m=1}^M 
\exp\!\left[-\left(\frac{t}{\eta^{(m)}}\right)^{\beta^{(m)}}\right].
\]
This Monte Carlo estimate provides both a mean survival prediction and an empirical variance, 
which can be interpreted as predictive uncertainty.

\begin{algorithm}[H]
\caption{Monte Carlo Dropout for WTNN survival}
\begin{algorithmic}[1]
\Require Trained WTNN $f_\theta$, covariates $\mathbf X$, number of samples $M$, times of interest $\{t_\ell\}$
\For{$m = 1,\ldots,M$}
  \State Apply dropout mask to hidden units
  \State $(\eta^{(m)},\beta^{(m)}) \gets f_\theta(\mathbf X)$
  \For{each $t_\ell$}
    \State $S^{(m)}(t_\ell) \gets \exp\!\bigl(- (t_\ell / \eta^{(m)})^{\beta^{(m)}}\bigr)$
  \EndFor
\EndFor
\For{each $t_\ell$}
  \State $\widehat S(t_\ell) \gets \dfrac{1}{M}\sum_{m=1}^M S^{(m)}(t_\ell)$
  \State $\widehat{\mathrm{Var}}[S(t_\ell)] \gets \dfrac{1}{M}\sum_{m=1}^M \bigl(S^{(m)}(t_\ell) - \widehat{ S}(t_\ell)\bigr)^2$
\EndFor
\end{algorithmic}
\end{algorithm}

In practice, $M$ is set between 100 and 200 to balance computational cost and estimation 
accuracy. Predictive intervals can then be constructed at each time $t$ using the empirical 
quantiles of $\{S^{(m)}(t)\}_{m=1}^M$.

\subsection{Inverse Probability of Censoring Weighting (IPCW)}\label{appendix:IPCW}

Let $T$ denote the event time, $Y$ the censoring time, and $(Z,\delta)$ the observed data with
$Z=\min(T,Y)$ and $\delta=\mathbf 1\{T\le Y\}$. Let $\mathbf X$ be the covariate vector.
IPCW \cite{IPCWregression} addresses bias from right-censoring by reweighting each observed event by the inverse of
the conditional probability of remaining uncensored up to $Z$.
Defining $S(t\mid \mathbf X)=\mathbb P(Y\ge t\mid \mathbf X)$, for any measurable functional $h$,
\[
\mathbb E\!\left[h(T)\mid \mathbf X\right]
=\mathbb E\!\left[\frac{\delta\,h(Z)}{S(Z\mid \mathbf X)}\;\middle|\; \mathbf X\right].
\]
In practice, $S(\cdot\mid \mathbf X)$ is estimated from the censoring distribution
(e.g., Kaplan--Meier when censoring is independent of $\mathbf X$, or a semiparametric model if it depends on $\mathbf X$).
For learning with a loss $\ell$ defined at event times, IPCW yields the empirical risk
\[
\widehat{\mathcal R}
=\frac{1}{n}\sum_{i=1}^n
\underbrace{\frac{\delta_i}{\widehat S(Z_i\mid \mathbf X_i)}}_{w_i}
\,\ell\!\big(Z_i,\mathbf X_i\big),
\]
where $w_i$ are the inverse-censoring weights. This permits standard estimators or neural losses
to be used consistently in the presence of right-censoring. See also \cite{Cwiling2024} for more details.


\subsection{Appendix: results of training experiments}

\begin{table}[H]
    \centering
    \caption{Training results from simulated experiments for the BN-free WTNN model, trained with the Weibull loss only (averaged over 30 repetitions). }
    \label{tab:simulated_results_Weibull_a}
    \begin{tabular}{l|llllll}
        \hline
        Case & Weight simulation & Bias simulation & IBS & MSE & CI & AUC\\
        \hline
        1 & ${\cal{N}}(0.001, 0.001)$ & ${\cal{N}}(0.01, 0.1) $  & 0.056 & 867 & 0.447 & 0.5\\
        2 & ${\cal{N}}(0.001, 0.001)$  & ${\cal{N}}(1, 0.001)$  &  0.055 & 866 & 0.46 & 0.499\\
        3 & $N(0.01, 0.01)$ & ${\cal{N}}(0.01, 1)$ & 0.053 & 865 & 0.431 & 0.498 \\
        4 & ${\cal{N}}(0.01, 0.01)$  & ${\cal{N}}(10, 10)$ & 0.048 & 811 & 0.4218 & 0.381\\
        5 & ${\cal{N}}(0.01, 0.1)$ & $N(1, 10)$ & 0.051 & 844 & 0.461 & 0.472 \\
        6 & ${\cal{N}}(0.1, 0.01)$ & ${\cal{N}}(0.001, 10)$ & 0.375 &  11,110 & 0.426 & 0.511\\
        7 & $N(0.001, 1)$ & ${\cal{N}}(0.01, 0.01)$ & 0.846 & 137,170 & 0.460 & 0.464 \\
        8 & ${\cal{N}}(1, 0.001)$ & ${\cal{N}}(0.001, 0.01)$ & 0.944 & $ 1.45.10^{8}$ & 0.426 & 0.5 \\
        9 & ${\cal{N}}(10, 0.001)$ & ${\cal{N}}(1, 0.001)$ & 0.944 &  $ 8.0.10^{11}$ & 0.427 & 0.5\\
        \hline
    \end{tabular}
\end{table}

\begin{table}[H]
    \centering
    \caption{Training results from simulated experiments for the WTNN model including BN, trained with the Weibull loss only (averaged over 30 repetitions). The hyphen (–) indicates that the performance indicators value could not be computed. In fact, during the training phase, the loss calculation resulted in infinite values, which prevented the model from training effectively. }
    \label{tab:simulated_results_Weibull-BN}
    \begin{tabular}{l|llllll}
        \hline
        Case & Weight simulation & Bias simulation & IBS & MSE & CI & AUC\\
        \hline
        1 & ${\cal{N}}(0.001, 0.001)$ & ${\cal{N}}(0.01, 0.1) $  & - & - & - & -\\
        2 & ${\cal{N}}(0.001, 0.001)$  & ${\cal{N}}(1, 0.001)$  & - & - & - & -\\
        3 & $N(0.01, 0.01)$ & ${\cal{N}}(0.01, 1)$ & 0.056 & 867 & 0.456 & 0.499\\
        4 & ${\cal{N}}(0.01, 0.01)$  & ${\cal{N}}(10, 10)$ & 0.054 & 866 & 0.487 & 0.499\\
        5 & ${\cal{N}}(0.01, 0.1)$ & $N(1, 10)$ & - & - & - & - \\
        6 & ${\cal{N}}(0.1, 0.01)$ & ${\cal{N}}(0.001, 10)$ & 0.056 &  867 & 0.478 & 0.499\\
        7 & $N(0.001, 1)$ & ${\cal{N}}(0.01, 0.01)$ & - & - & - & - \\
        8 & ${\cal{N}}(1, 0.001)$ & ${\cal{N}}(0.001, 0.01)$ & 0.056 & 867 & 0.518 & 0.499 \\
        9 & ${\cal{N}}(10, 0.001)$ & ${\cal{N}}(1, 0.001)$ & 0.056 &  867 & 0.516 & 0.499 \\
        \hline
    \end{tabular}
\end{table}

\begin{table}[H]
    \centering
    \caption{Training results from simulated experiments for the BN-free WTNN model, trained with the whole loss function (averaged over 30 repetitions). }
    \label{tab:simulated_results_Weibull_MSE}
    \begin{tabular}{l|llllll}
        \hline
        Case & Weight simulation & Bias simulation & IBS & MSE & CI & AUC\\
        \hline
        1 & ${\cal{N}}(0.001, 0.001)$ & ${\cal{N}}(0.01, 0.1) $  & 0.056 & 867 & 0.514 & 0.5\\
        2 & ${\cal{N}}(0.001, 0.001)$  & ${\cal{N}}(1, 0.001)$  &  0.056 & 867 & 0.5 & 0.5\\
        3 & $N(0.01, 0.01)$ & ${\cal{N}}(0.01, 1)$ & 0.056 & 866 & 0.464 & 0.498 \\
        4 & ${\cal{N}}(0.01, 0.01)$  & ${\cal{N}}(10, 10)$ & 0.048 & 805 & 0.447 & 0.419\\
        5 & ${\cal{N}}(0.01, 0.1)$ & $N(1, 10)$ & 0.045 & 808 & 0.452 & 0.411 \\
        6 & ${\cal{N}}(0.1, 0.01)$ & ${\cal{N}}(0.001, 10)$ & 0.051 &  834 & 0.439 & 0.44\\
        7 & $N(0.001, 1)$ & ${\cal{N}}(0.01, 0.01)$ & 0.874 & 275,699 & 0.470 & 0.524 \\
        8 & ${\cal{N}}(1, 0.001)$ & ${\cal{N}}(0.001, 0.01)$ & 0.944 & $ 4.04.10^{7}$ & 0.424 & 0.499 \\
        9 & ${\cal{N}}(10, 0.001)$ & ${\cal{N}}(1, 0.001)$ & 0.944 &  $ 3.09.10^{11}$ & 0.427 & 0.5\\
        \hline
    \end{tabular}
\end{table}

\begin{table}[H]
    \centering
    \caption{Training results from simulated experiments for the BN WTNN model, trained with the whole loss function (averaged over 30 repetitions). }
    \label{tab:simulated_results_Weibull_MSE_BN}
    \begin{tabular}{l|llllll}
        \hline
        Case & Weight simulation & Bias simulation & IBS & MSE & CI & AUC\\
        \hline
        1 & ${\cal{N}}(0.001, 0.001)$ & ${\cal{N}}(0.01, 0.1) $  & - & - & - & - \\
        2 & ${\cal{N}}(0.001, 0.001)$  & ${\cal{N}}(1, 0.001)$   & 0.056 & 867 & 0.517 & 0.499\\
        3 & $N(0.01, 0.01)$ & ${\cal{N}}(0.01, 1)$ & 0.055 & 866 & 0.504 & 0.499\\
        4 & ${\cal{N}}(0.01, 0.01)$  & ${\cal{N}}(10, 10)$  & - & - & - & -\\
        5 & ${\cal{N}}(0.01, 0.1)$ & $N(1, 10)$ & 0.056 & 867 & 0.502 & 0.499 \\
        6 & ${\cal{N}}(0.1, 0.01)$ & ${\cal{N}}(0.001, 10)$ & 0.056 &  867 & 0.452 & 0.499\\
        7 & $N(0.001, 1)$ & ${\cal{N}}(0.01, 0.01)$ & - & - & - & - \\
        8 & ${\cal{N}}(1, 0.001)$ & ${\cal{N}}(0.001, 0.01)$ & 0.056 & 867 & 0.446 & 0.499 \\
        9 & ${\cal{N}}(10, 0.001)$ & ${\cal{N}}(1, 0.001)$ & 0.056 &  867 & 0.449 & 0.499 \\
        \hline
    \end{tabular}
\end{table}

\end{document}